\documentclass[twoside]{article}

%\PassOptionsToPackage{numbers, compress}{natbib}

\usepackage[accepted]{aistats2022}
% If your paper is accepted, change the options for the package
% aistats2020 as follows:
%
%
% This option will print headings for the title of your paper and
% headings for the authors names, plus a copyright note at the end of
% the first column of the first page.

% If you set papersize explicitly, activate the following three lines:

\setlength{\pdfpageheight}{11in}
\setlength{\pdfpagewidth}{8.5in}

% If you use natbib package, activate the following three lines:
%\usepackage[round]{natbib}
%\renewcommand{\bibname}{References}
%\renewcommand{\bibsection}{\subsubsection*{\bibname}}

% If you use BibTeX in apalike style, activate the following line:
%\bibliographystyle{apalike}

\usepackage{amsfonts}
\usepackage[T1]{fontenc}
\usepackage[utf8]{inputenc}
\usepackage{tabularx,ragged2e,booktabs,caption}
\newcolumntype{C}[1]{>{\Centering}m{#1}}

\usepackage[utf8]{inputenc}
\usepackage[T1]{fontenc}
\usepackage{url}
\usepackage{booktabs}
\usepackage{amsfonts, amsthm, amsmath}
\usepackage{mathtools}
\usepackage{nicefrac}
\usepackage{microtype}
\usepackage{graphicx}
\usepackage{subfigure}
\usepackage{booktabs} 
\usepackage{hyperref}
\usepackage{algorithmic}
\usepackage{algorithm}

\usepackage{hyperref}
\hypersetup{
    colorlinks=true,
    linkcolor=blue,
    citecolor=cyan,
}
\usepackage{xcolor}
\usepackage{amsthm,amssymb,amsmath}
\usepackage{mathrsfs}
\usepackage{mathtools}

\usepackage{enumitem}
\usepackage{natbib}
\usepackage{verbatim}
\usepackage{url}
\usepackage{thm-restate}
\usepackage{wrapfig}

\usepackage{bbm}

\usepackage{parskip}

\usepackage{graphbox}

\usepackage{subfigure}

\newtheoremstyle{definition}
{3pt} % Space above
{3pt} % Space below
{} % Body font
{} % Indent amount
{\bfseries} % Theorem head font
{.} % Punctuation after theorem head
{.5em} % Space after theorem head
{} % Theorem head spec (can be left empty, meaning `normal')

\theoremstyle{definition}

\begin{document}

% If your paper is accepted and the title of your paper is very long,
% the style will print as headings an error message. Use the following
% command to supply a shorter title of your paper so that it can be
% used as headings.
%
%\runningtitle{I use this title instead because the last one was very long}

% If your paper is accepted and the number of authors is large, the
% style will print as headings an error message. Use the following
% command to supply a shorter version of the authors names so that
% they can be used as headings (for example, use only the surnames)
%
%\runningauthor{Surname 1, Surname 2, Surname 3, ...., Surname n}

\twocolumn[

\aistatstitle{Marginalized Operators for Off-policy Reinforcement Learning}

\aistatsauthor{Yunhao Tang \And Mark Rowland \And R\'emi Munos \And Michal Valko  }

\aistatsaddress{ DeepMind \And DeepMind \And DeepMind \And DeepMind } ]

\begin{abstract}
In this work, we propose marginalized operators, a new class of off-policy evaluation operators for reinforcement learning. Marginalized operators strictly generalize generic multi-step operators, such as Retrace, as special cases. Marginalized operators also suggest a form of sample-based estimates with potential variance reduction, compared to sample-based estimates of the original multi-step operators. We show that the estimates for marginalized operators can be computed in a scalable way, which also generalizes prior results on marginalized importance sampling as special cases. Finally, we empirically demonstrate that marginalized operators provide performance gains to off-policy evaluation and downstream policy optimization algorithms.
\end{abstract}

\section{Introduction}
\label{sec:intro}
In many applications of reinforcement learning (RL), it is useful to be able to learn about one policy using data generated by a different policy, such as exploratory data \citep{mnih2015human}, expert data \citep{hester2018deep} or even offline data  \citep{lange2012batch}; this is the problem of off-policy learning. To successully learn in such scenarios, off-policy algorithms must be able to safely deal with discrepancies between the data-generating policy and policy of interest. As a fundamental building block of generic off-policy algorithms, off-policy evaluation studies the problem of estimating  value functions of a target policy $\pi$ with data collected under behavior policy $\mu$. 

A distinction is often drawn in off-policy learning between \emph{online} and \emph{offline} learning. In the \emph{online} setting, where RL agents keep collecting new data, most prior work focuses on multi-step operator-based methods (e.g., \citep{precup2000eligibility,harutyunyan2016q,munos2016safe,rowland2020adaptive}). These methods equate policy evaluations to solving for fixed points of contractive operators. In this case, a central idea is \emph{bootstrapping}, where new estimates build on old estimates in an iterative fashion. As a result of contractive operators, the sequence of output from the algorithm forms increasingly accurate predictions to the true target values. This is especially desirable in many practical online setups where the target policy might slowly change over time (e.g., policy optimization), where predictions for the new policy could extract useful information from predictions for old policies.

On the other hand, in the \emph{offline} setting where no further data collection is possible, much work builds on importance sampling (IS) \citep{precup2000eligibility,thomas2015high,thomas2016data,liu2018breaking,nachum2019dualdice,uehara2019minimax,nachum2020reinforcement,xie2019towards,yang2020off}. Popular approaches for variance reduction in importance sampling are based on marginalized IS \citep{liu2018breaking,xie2019towards} which has also shown promises  even when combined with function approximations for high-dimensional input spaces \citep{nachum2019dualdice,nachum2020reinforcement,mousavi2020black}.
However, since the offline problems  only require a \emph{single} numerical prediction, most algorithms do not naturally incorporate the notion of \emph{bootstrapping} out-of-the-box. As a result, despite some recent efforts \citep{nachum2019algaedice}, it is in general challenging to directly apply such methods to \emph{online} off-policy learning.

Motivated by the disparity between these two lines of work, we propose marginalized operators, a new family of off-policy evaluation operators that generalize multi-step operators as special cases (Section~\ref{sec:marginalized}). Marginalized operators suggest new stochastic estimates to the equivalent multi-step operators, with connections to marginalized IS (Section~\ref{sec:understand}). Under this framework, we also consider \emph{estimated} marginalized operators (Section~\ref{section:estimate}), which can be computed with estimates in a scalable manner, and can be analyzed as estimators in their own right. Finally, we show that the new operators provide performance gains on both policy evaluation and downstream optimization (Section~\ref{sec:exp}).

Our discussions are limited to multi-step operators constructed as a weighted mixture of Bellman errors across different time steps. As a result, $Q^\pi$ is the unique fixed point of such operators; these exclude operators which explicitly bias the fixed point in exchange for faster contraction rate, such as the uncorrected $n$-step operator. See \citep{rowland2020adaptive} for a comprehensive discussion.
\section{Background}
\label{section:background}
\subsection{Markov decision processes}

Consider the setup of a Markov decision process (MDP) \citep{puterman2014markov} with an infinite horizon. At any discrete time $t\geq 0$, the agent is in state $x_t\in \mathcal{X}$, takes an action $a_t\in \mathcal{A}$. The agent first receives an immediate random reward $r_t=r(x_t,a_t)$ with mean $\bar{r}(x_t,a_t)$, and then transitions to a next state $x_{t+1}\sim p(\cdot|x_t,a_t)$. We assume rewards are deterministic, but most results extend naturally to the stochastic case. Below, we will discuss when such extensions do not hold. Let policy $\pi:\mathcal{X} \rightarrow \mathcal{P}(\mathcal{A})$ be a mapping from states to distributions over actions. Let $\gamma\in [0,1)$ be a discount factor, define the Q-function $Q^\pi(x,a) \coloneqq \mathbb{E}_\pi\left\lbrack\sum_{t=0}^\infty \gamma^t r_t \; \middle| \; x_0=x, a_0=a \right\rbrack$  and value function $V^\pi(x) \coloneqq \mathbb{E}_\pi\left\lbrack\sum_{t=0}^\infty \gamma^t r_t \; \middle| \; x_0=x \right\rbrack$. Here, $\mathbb{E}_\pi \left\lbrack \cdot\right\rbrack$ denotes that the trajectories $(x_t,a_t,r_t)_{t=0}^\infty$ are generated under policy $\pi$.

\subsection{Multi-step off-policy evaluation}
Consider off-policy evaluation where $\pi$ is the target policy and $\mu$ is the behavior policy, where we assume $\text{supp}\left(\pi(\cdot|x)\right)\subset \text{supp}\left(\mu(\cdot|x)\right),\forall x\in\mathcal{X}$. Given a trajectory $(x_t,a_t,r_t)_{t=0}^\infty$ generated under $\mu$ and a Q-function $Q$, we define the TD error at time $t$ as $\Delta_t^\pi Q \coloneqq \bar{r}_t + \gamma \mathbb{E}_{x'\sim p(\cdot|x_t,a_t)}\left\lbrack Q(x',\pi(x')) \right\rbrack - Q(x_t,a_t)$
. Here, we adopt the notation $Q(x,\pi(x))\coloneqq\mathbb{E}_{a\sim \pi(\cdot|x)}\left\lbrack Q(x,a) \right\rbrack$.  The multi-step  off-policy evaluation operators $\mathcal{R}^c$ \citep{munos2016safe} define the step-wise trace coefficient $c_t\in\mathbb{R}$ per time step $t$, where in general $c_t=c(\{x_s,a_s\}_{s\leq t})$ is a function of the  of the past $(x_s,a_s)_{s\leq t}$. The Q-function estimate $\mathcal{R}^c Q(x,a)$ at the starting pair $(x,a)$  is computed as
\begin{align}
    Q(x,a) + \mathbb{E}_{\mu}\left\lbrack \sum_{t\geq 0} \gamma^t (\Pi_{1\leq s\leq t} c_s) \Delta_t^\pi Q  \; \middle| \; x_0=x,a_0=a \right\rbrack,
    \label{eq:multi-step}
\end{align}
where we define $\left(\Pi_{1\leq s\leq t} c_s\right)=1$ when $t=0$. When $0\leq c_t \leq \frac{\pi(a_t|x_t)}{\mu(a_t|x_t)}$,
it can be shown that $Q^\pi$ is the unique fixed point to $\mathcal{R}^c Q=Q$ \citep{munos2016safe}. As an important example, let $c_t=\mathbb{I}[t\leq 0]$,  the operator $\mathcal{R}^c$ reduces to the one-step Bellman operator $\mathcal{T}^\pi Q(x,a) \coloneqq r_0  + \gamma \mathbb{E}_\pi\left\lbrack Q(x_1,\cdot)  \right\rbrack$. In this case, the traces $c_t$ are \emph{cut off} beyond the first time step, which prevents the algorithm from bootstrapping from the rest of the trajectory. In many cases, the coefficient $c_t=c(x_t,a_t)$ is Markovian if it only depends on $(x_t,a_t)$. Notable examples include importance sampling $c_t = \frac{\pi(a_t|x_t)}{\mu(a_t|x_t)}$, Retrace $c_t=\lambda \min\{\bar{c},\frac{\pi(a_t|x_t)}{\mu(a_t|x_t)}\}$ \citep{munos2016safe}, tree backup $c_t=\pi(a_t|x_t)$ \citep{precup2000eligibility} and $\text{Q}^\pi(\lambda)$ $c_t=\lambda$ \citep{harutyunyan2016q}.

\subsection{Off-policy evaluation via marginalized importance sampling}
We start by introducing the discounted visitation distribution  $d_{x,a}^\pi(x^\prime,a^\prime) \coloneqq (1-\gamma) \sum_{t\geq 0}\gamma^t \mathbb{P}_\pi(x_t=x^\prime,a_t=a^\prime|x_0=x,a_0=a)$ where $(x,a)$ are the starting state-action pair. The discounted visitation distribution $d_{x,a}^\pi(x^\prime,a^\prime)$ and value functions $Q^\pi(x,a)$ are related as follows \citep{puterman2014markov},
 \begin{align}
    Q^\pi(x,a) = (1-\gamma)^{-1} \mathbb{E}_{(x',a')\sim d_{x,a}^\pi}\left\lbrack r(x^\prime,a^\prime) \right\rbrack.
\end{align}
Assume the off-policy data is sampled under $d_{x,a}^\mu(x^\prime,a^\prime)$. Let $w_{x,a}^{\pi,\mu}(x^\prime,a^\prime)\coloneqq \frac{d_{x,a}^\pi(x^\prime,a^\prime)}{d_{x,a}^\mu(x^\prime,a^\prime)}$. One could express $Q^\pi(x,a)$ via marginalized IS \citep{xie2019towards,liu2018breaking},
\begin{align*}
    Q^\pi(x,a) = (1-\gamma)^{-1}\mathbb{E}_{(x^\prime,a^\prime)\sim d_{x,a}^\mu}\left\lbrack w(x^\prime,a^\prime) r(x^\prime,a^\prime) \right\rbrack.
\end{align*}  
For convenience, let $w^{\pi,\mu}\in\mathbb{R}^{\left(\mathcal{X}\times\mathcal{A}\right)\times \left(\mathcal{X}\times\mathcal{A}\right)}$ be a matrix 
such that $w_{x,a}^{\pi,\mu}(x',a')$ is the entry at $(x,a,x',a')$. Since marginalized IS ratios are generally unknown, it is necessary to construct estimates $w_\psi\approx w^{\pi,\mu}$. There are a number of algorithms which carry out the estimation in a scalable way, which we will detail in Section~\ref{section:estimate}. 

\paragraph{Remarks on notations.} Note that $Q^\pi: \mathbb{R}^{\mathcal{X}\times\mathcal{A}}\mapsto \mathbb{R}$ ($w^{\pi,\mu}: \mathbb{R}^{\left(\mathcal{X}\times\mathcal{A}\right)\times \left(\mathcal{X}\times\mathcal{A}\right)}\mapsto \mathbb{R}$) are by defintion functions. To facilitate derivations, we abuse notations and also treat them as vectors (matrices) such that $Q^\pi\in\mathbb{R}^{|\mathcal{X}||\mathcal{A}|}$($w^{\pi,\mu}\in\mathbb{R}^{|\mathcal{X}||\mathcal{A}|\times |\mathcal{X}||\mathcal{A}|}$). As such, $Q^\pi(x,a)$ can be both interpreted function evaluation and  vector indexing at $(x,a)$.

\section{Marginalized Off-Policy  Evaluation Operators}
\label{sec:marginalized}

The marginalized off-policy evaluation operator $\mathcal{M}^{w}: \mathbb{R}^{\mathcal{X}\times\mathcal{A}} \rightarrow \mathbb{R}^{\mathcal{X}\times\mathcal{A}}$ is defined such that its component at $(x,a)$ is evaluated as 
\begin{align}
    Q(x,a) + (1-\gamma)^{-1} \mathbb{E}_{(x',a')\sim d_{x,a}^\mu}\left\lbrack w_{x,a}(x^\prime,a^\prime)\Delta^\pi(x^\prime,a^\prime)  \right\rbrack,
    \label{eq:ba-retrace}
\end{align}
where $w_{x,a}(x',a')$ are called \emph{TD weights}. Define $\Delta^\pi(x,a)\coloneqq \bar{r}(x,a) + \gamma \mathbb{E}_{x^\prime\sim p(\cdot|x,a)}\left\lbrack Q(x',\pi(x')  \right\rbrack -Q(x,a)$ as $(x,a)$-dependent Bellman errors. Note the difference between $\mathbb{E}_\mu\left\lbrack \cdot\right\rbrack$ in Eqn~\eqref{eq:multi-step}, which is an expectation over trajectories $(x_t,a_t,r_t)_{t=0}^\infty$ under $\mu$; and $\mathbb{E}_{d_{x,a}^\mu}\left\lbrack \cdot\right\rbrack$ in Eqn~\eqref{eq:ba-retrace}, which is an expectation under the discounted distribution. 

Below, we will first characterize important properties of the marginalized operator. Then, we will show that the space of contractive marginalized operators contains the space of contractive multi-step operators.

\subsection{Properties of the marginalized operator} 

The following proposition summarizes a few important properties of the marginalized operators.
\begin{restatable}{proposition}{propmarginalizedop}\label{prop:marginalizedop}
For any TD weights $w$, the Q-function $Q^\pi$ is a solution to the fixed point equation $\mathcal{M}^wQ=Q$. For any $Q_1,Q_2\in \mathbb{R}^{\mathcal{X}\times\mathcal{A}}$, 
\begin{align*}
    \left|\mathcal{M}^{w}Q_1(x,a) - \mathcal{M}^{w}Q_2(x,a) \right| \leq \eta_{x,a}^w   \left\Vert Q_1-Q_2\right\Vert_\infty.
\end{align*}
Let $\delta_{x,a}\in \mathbb{R}^{\mathcal{X}\times\mathcal{A}}$ be the one-hot encoding of $(x,a)$ and let $d_{x,a}^w \in \mathbb{R}^{\mathcal{X}\times\mathcal{A}}$ such that $d_{x,a}^w(x',y')=w_{x,a}(x',y')d_{x,a}^\mu(x',a')$. Then define the \emph{residual error vector}
\begin{align*}
    E_{x,a}^w = (1-\gamma)\delta_{x,a} + \gamma (P^\pi)^T d_{x,a}^w - d_{x,a}^w,
\end{align*}
which characterizes how $d_{x,a}^w$ satisfies the \emph{balance equations}
\begin{align}
    (1-\gamma)\delta_{x,a} + \gamma (P^\pi)^T d - d = 0.
    \label{eq:bellman-discounted}
\end{align}
The local contraction rate is expressed as 
\begin{align}
    \eta_{x,a}^w = (1-\gamma)^{-1}\left\Vert E_{x,a}^w \right\rVert_1.
    \label{eq:local-contraction}
\end{align}
The above implies that the operator is contractive when $\max_{x,a}\left\lVert E_{x,a}^w\right\rVert_1<1-\gamma$.
\end{restatable}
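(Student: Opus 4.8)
The plan is to establish the three claims — that $Q^\pi$ is a fixed point, the local bound with rate $\eta_{x,a}^w$, and the contraction criterion — via a single short vector/matrix computation. Throughout I write $P^\pi$ for the transition matrix under $\pi$, so $(P^\pi Q)(x,a)=\mathbb{E}_{x'\sim p(\cdot|x,a)}[Q(x',\pi(x'))]$, and I write the Bellman-error vector as $\Delta^\pi = \bar r + \gamma P^\pi Q - Q$. For the fixed-point claim I would simply set $Q=Q^\pi$: the Bellman equation $Q^\pi = \bar r + \gamma P^\pi Q^\pi$ makes $\Delta^\pi$ the zero vector, so $\Delta^\pi(x',a')=0$ for all $(x',a')$ in the support of $d_{x,a}^\mu$, the expectation term in \eqref{eq:ba-retrace} vanishes, and $\mathcal{M}^w Q^\pi = Q^\pi$. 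This uses nothing about $w$, which is exactly the ``for any TD weights $w$'' part.

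For the contraction bound I would first rewrite \eqref{eq:ba-retrace} in vector form. Using $Q(x,a)=\delta_{x,a}^T Q$ and $\mathbb{E}_{(x',a')\sim d_{x,a}^\mu}[w_{x,a}(x',a')\Delta^\pi(x',a')] = (d_{x,a}^w)^T \Delta^\pi$, the operator reads $\mathcal{M}^w Q(x,a) = \delta_{x,a}^T Q + (1-\gamma)^{-1}(d_{x,a}^w)^T(\bar r + \gamma P^\pi Q - Q)$. Taking the difference at two Q-functions $Q_1,Q_2$, the reward terms cancel; collecting the remaining terms and pulling out $(1-\gamma)^{-1}$ leaves the factor $(1-\gamma)\delta_{x,a}^T + \gamma (d_{x,a}^w)^T P^\pi - (d_{x,a}^w)^T$, which is precisely $(E_{x,a}^w)^T$ after using $(d_{x,a}^w)^T P^\pi = ((P^\pi)^T d_{x,a}^w)^T$. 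Hence
\[
  \mathcal{M}^w Q_1(x,a) - \mathcal{M}^w Q_2(x,a) = (1-\gamma)^{-1}(E_{x,a}^w)^T (Q_1 - Q_2),
\]
and Hölder's inequality $|v^T u| \le \|v\|_1\|u\|_\infty$ immediately yields the stated bound with $\eta_{x,a}^w = (1-\gamma)^{-1}\|E_{x,a}^w\|_1$. Taking $\max_{x,a}$ of the local bound, the hypothesis $\max_{x,a}\|E_{x,a}^w\|_1 < 1-\gamma$ is the same as $\max_{x,a}\eta_{x,a}^w < 1$, so $\mathcal{M}^w$ is a contraction in $\|\cdot\|_\infty$; together with the fixed-point claim and Banach's theorem, $Q^\pi$ is then its unique fixed point and repeated application of $\mathcal{M}^w$ converges to it geometrically.

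I do not expect a genuine obstacle; the computation is routine once set up in matrix form. The one step that needs care is the bookkeeping that turns the ``row vector acting on $Q_1-Q_2$'' into the column vector $E_{x,a}^w$ of \eqref{eq:bellman-discounted} — in particular the transpose on $P^\pi$ appears because $P^\pi$ multiplies $Q$ from the left but visitation vectors from the right, as in the balance equation. It is also worth recording the consistency check that taking $w_{x,a}=w^{\pi,\mu}_{x,a}$ makes $d_{x,a}^w = d_{x,a}^\pi$, which solves \eqref{eq:bellman-discounted} exactly, so $E_{x,a}^w = 0$ and $\eta_{x,a}^w=0$ — matching the fact that the exact marginalized-IS operator returns $Q^\pi$ in one step.
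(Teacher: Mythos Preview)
Your proposal is correct and follows essentially the same route as the paper: both rewrite $\mathcal{M}^w Q(x,a)$ as $\delta_{x,a}^T Q + (1-\gamma)^{-1}(d_{x,a}^w)^T(\bar r+\gamma P^\pi Q - Q)$, subtract at $Q_1,Q_2$, and identify the resulting row vector with $(E_{x,a}^w)^T$ before applying the $\ell_1$--$\ell_\infty$ bound. The only cosmetic difference is that the paper packages all state--action pairs at once through the matrix $D=(I-\gamma P^\mu)^{-1}\odot W$ and then reads off a row, whereas you fix $(x,a)$ from the start; your version is also slightly more complete in that you explicitly verify the fixed-point claim and name H\"older's inequality.
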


Proposition~\ref{prop:marginalizedop} shows that
the local contraction rate $\eta_{x,y}^w$ is proportional to the $L^1$ norm of the residual error vector of $d_{x,y}^w$ when plugged into the balance equation. This means that in order for $\mathcal{M}^w$ to be contractive, we seek $w$ such that it approximately satisfies the balance equation and the residual error vector is small. 

Similar to the notation of $w^{\pi,\mu}$, we denote $w$ as the matrix of TD weights.  Though it is not straightforward to analytically characterize the set of $w$ such that $\mathcal{M}^w$ is contractive, we shed light on properties of such $w$ with some examples.

\paragraph{Marginalized IS ratios as a special case.}
The discounted visitation distribution $d_{x,a}^\pi$ is the only solution that satisfies the balance equation. When  $w_{x,a} = w_{x,a}^{\pi,\mu}$, since balance equations are satisfied exactly,  $\eta_{w_{x,a}^{\pi,\mu}}= 0$ and the contraction is instant $\mathcal{M}^{w^{\pi,\mu}} Q=Q^\pi,\forall Q$. Instead of requiring balance equations to be satisfied exactly, Proposition~\ref{prop:marginalizedop} suggests that there is a larger class of $w$ such that balance equations are approximately satisfied and $\mathcal{M}^w$ is contractive. Indeed, as we will see below, marginalized operators can recover all contractive multi-step operators as special cases.

\subsection{Multi-step off-policy evaluation operators as special cases}
\label{sec:specialcase}
We now elucidate the connections between marginalized operators with multi-step off-policy operators. The following result shows that when $w$ is chosen properly, the marginalized operators is equivalent to any given multi-step operator.
\begin{restatable}{proposition}{propequivalent}\label{prop:equivalent}
Given a multi-step operator $\mathcal{R}^c$ with step-wise trace coefficients $c_t$, define $w_{x,a}^c(x',a')$ as 
\begin{align}
    \frac{1-\gamma}{d_{x,a}^\mu(x^\prime,a^\prime)} \mathbb{E}_\mu\left\lbrack \sum_{t\geq 0} \gamma^t \left(\Pi_{1\leq s\leq t}c_s\right) \mathbb{I}[ x_t=x^\prime,a_t=a^\prime] \right\rbrack.
    \label{eq:marginalized-trace}
\end{align}
If $d_{x,a}^\mu(x',a')=0$ for some $(x',a')$, we can instead define $w_{x,a}^c(x',a')=0$. Let $w^c$ be the matrix form. When $w=w^c$, the two operators are equivalent, $\mathcal{M}^{w^c} = \mathcal{R}^c$.
\end{restatable}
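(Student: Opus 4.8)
The plan is to expand both operators on an arbitrary input $Q$ at an arbitrary pair $(x,a)$ and match them term by term. Starting from the definition~\eqref{eq:ba-retrace} of $\mathcal{M}^{w}Q(x,a)$, I would rewrite the discounted-distribution expectation as an explicit sum, $(1-\gamma)^{-1}\sum_{x',a'} d_{x,a}^\mu(x',a')\, w_{x,a}(x',a')\,\Delta^\pi(x',a')$, and substitute $w=w^c$ from~\eqref{eq:marginalized-trace}. The factor $d_{x,a}^\mu(x',a')$ in the denominator of $w^c_{x,a}(x',a')$ cancels the $d_{x,a}^\mu(x',a')$ appearing as the weight, and the pairs with $d_{x,a}^\mu(x',a')=0$ contribute nothing on either side (the indicator in~\eqref{eq:marginalized-trace} vanishes there as well), so the convention $w^c_{x,a}(x',a')=0$ is harmless. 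What remains is $\sum_{x',a'}\mathbb{E}_\mu\big[\sum_{t\ge 0}\gamma^t(\Pi_{1\le s\le t}c_s)\,\mathbb{I}[x_t=x',a_t=a']\big]\,\Delta^\pi(x',a')$.

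The second step is to push the outer sum over $(x',a')$ inside $\mathbb{E}_\mu$ and inside the sum over $t$, and then collapse it via $\sum_{x',a'}\mathbb{I}[x_t=x',a_t=a']\,\Delta^\pi(x',a')=\Delta^\pi(x_t,a_t)$. Comparing the definition of $\Delta^\pi$ in Section~\ref{sec:marginalized} with that of $\Delta_t^\pi Q$ in Section~\ref{section:background} gives $\Delta^\pi(x_t,a_t)=\Delta_t^\pi Q$ — this is precisely where the deterministic-reward assumption is used, since both quantities are written in terms of the mean reward $\bar r$. Hence $(1-\gamma)^{-1}\mathbb{E}_{d_{x,a}^\mu}[w^c_{x,a}\,\Delta^\pi]=\mathbb{E}_\mu\big[\sum_{t\ge 0}\gamma^t(\Pi_{1\le s\le t}c_s)\,\Delta_t^\pi Q\mid x_0=x,a_0=a\big]$, and adding $Q(x,a)$ to both sides identifies $\mathcal{M}^{w^c}Q(x,a)$ with the expression~\eqref{eq:multi-step} for $\mathcal{R}^cQ(x,a)$. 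Since $(x,a)$ and $Q$ were arbitrary, $\mathcal{M}^{w^c}=\mathcal{R}^c$.

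The main obstacle is justifying the interchange of the $(x',a')$-summation, the $t$-summation, and the expectation $\mathbb{E}_\mu$ in the second step, which requires an absolute-summability argument. Under the standing trace condition $0\le c_t\le \pi(a_t|x_t)/\mu(a_t|x_t)$ one has $\mathbb{E}_\mu[\Pi_{1\le s\le t}c_s\mid x_0=x,a_0=a]\le 1$ by a tower-rule induction, so $\sum_{x',a'}\sum_{t\ge 0}\gamma^t\,\mathbb{E}_\mu[(\Pi_{1\le s\le t}c_s)\,\mathbb{I}[x_t=x',a_t=a']]\,|\Delta^\pi(x',a')|\le \|\Delta^\pi\|_\infty\sum_{t\ge 0}\gamma^t<\infty$ for any fixed $Q$, and Fubini--Tonelli applies. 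A secondary point worth making explicit is that the potential history-dependence of $c_s$ (through $\{x_u,a_u\}_{u\le s}$) causes no difficulty: the product $\Pi_{1\le s\le t}c_s$ is never extracted from $\mathbb{E}_\mu$; only the indicator $\mathbb{I}[x_t=x',a_t=a']$ is summed against $\Delta^\pi(x',a')$, and that collapse depends solely on $(x_t,a_t)$.
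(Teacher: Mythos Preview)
Your argument is correct and complete. The route, however, differs from the paper's. The paper casts both operators in matrix form,
\begin{align*}
\mathcal{M}^w Q &= Q + \big[(I-\gamma P^\mu)^{-1}\odot W\big](R+\gamma P^\pi Q - Q),\\
\mathcal{R}^c Q &= Q + (I-\gamma P^{c\mu})^{-1}(R+\gamma P^\pi Q - Q),
\end{align*}
equates the matrix multipliers to read off $W=(I-\gamma P^{c\mu})^{-1}\,/\,(I-\gamma P^\mu)^{-1}$ (element-wise division), and then identifies the $(x,a,x',a')$-entries with the expression~\eqref{eq:marginalized-trace}. The non-Markovian case is handled by replacing $(I-\gamma P^{c\mu})^{-1}$ with $\sum_{t\ge 0}\Pi_{0\le s\le t}P^{c_t\mu}$. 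Your approach instead works directly at the level of sums and expectations, substituting~\eqref{eq:marginalized-trace} into~\eqref{eq:ba-retrace} and collapsing the indicator; this is more elementary and handles Markovian and non-Markovian traces uniformly, while the paper's matrix version is more compact and reuses machinery already needed for Proposition~\ref{prop:marginalizedop}.

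Two minor remarks. First, your Fubini justification invokes $0\le c_t\le \pi(a_t|x_t)/\mu(a_t|x_t)$, which is not actually a standing assumption of the proposition; the paper instead imposes the weaker condition that $\mathcal{R}^cQ$ be finite (its condition \textbf{C.1}), which already implies entrywise finiteness of $\sum_{t\ge 0}\gamma^t\,\mathbb{E}_\mu[(\Pi_{1\le s\le t}c_s)\,\mathbb{I}[x_t=x',a_t=a']]$ and thus suffices for the interchange. Second, the deterministic-reward caveat is unnecessary here: both $\Delta_t^\pi Q$ and $\Delta^\pi(x,a)$ are defined in the paper via the mean reward $\bar r$, so $\Delta^\pi(x_t,a_t)=\Delta_t^\pi Q$ holds identically.
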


 Proposition~\ref{prop:equivalent} implies that the space of all contractive marginalized operators contains all contractive multi-step operators. We formally summarize the result as follows.

\begin{restatable}{corollary}{corospace}\label{coro:space}
For any tuple $T=(p,r,\pi,\mu,\gamma)$,
Let $\mathcal{C}(T)$ be the space of all step-wise traces (Markovian or non-Markovian) such that $\mathcal{R}^c,c\in \mathcal{C}(T)$ is contractive; let $\mathcal{W}(T)$ be the space of all TD weights such that $\mathcal{M}^w,w\in \mathcal{W}(T)$ is contractive. Then
\begin{align*}
    \{\mathcal{R}^c , c\in \mathcal{C}(T)\}\subset\{\mathcal{M}^w , w\in \mathcal{W}(T)\}.
\end{align*}
\end{restatable}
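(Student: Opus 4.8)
The plan is to obtain Corollary~\ref{coro:space} as an essentially immediate consequence of Proposition~\ref{prop:equivalent}, which already carries out the substantive construction. Fix a tuple $T=(p,r,\pi,\mu,\gamma)$ and let $c\in\mathcal{C}(T)$ be an arbitrary step-wise trace (possibly non-Markovian) for which $\mathcal{R}^c$ is contractive. The first thing I would verify is that the object $w^c$ produced by Eqn~\eqref{eq:marginalized-trace} is a \emph{bona fide} matrix of TD weights: its entry at $(x,a,x',a')$ is obtained by integrating the trajectory-dependent quantity $\sum_{t\ge 0}\gamma^t\big(\Pi_{1\le s\le t}c_s\big)\mathbb{I}[x_t=x',a_t=a']$ under $\mu$ starting from $(x,a)$, so even when $c_s$ depends on the full past the resulting expression depends only on the four indices $(x,a,x',a')$, exactly as required of a TD weight in the definition of $\mathcal{M}^w$; the degenerate case $d_{x,a}^\mu(x',a')=0$ is covered by the convention $w_{x,a}^c(x',a')=0$; and finiteness of the remaining entries follows because contractivity of $\mathcal{R}^c$ forces the series in Eqn~\eqref{eq:multi-step} to converge. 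Hence $w^c$ is a legitimate choice of TD weights and $\mathcal{M}^{w^c}$ is a well-defined marginalized operator.

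Next I would invoke Proposition~\ref{prop:equivalent} to conclude that $\mathcal{M}^{w^c}=\mathcal{R}^c$ as maps on $\mathbb{R}^{\mathcal{X}\times\mathcal{A}}$. Since these are literally the same operator, any property stated purely in terms of the map transfers verbatim; in particular, $\mathcal{R}^c$ being contractive --- in whichever norm the definition of $\mathcal{C}(T)$ uses (e.g.\ $\|\cdot\|_\infty$ or a weighted sup-norm) --- immediately yields that $\mathcal{M}^{w^c}$ is contractive in the same sense, i.e.\ $w^c\in\mathcal{W}(T)$. Assembling the pieces, $\mathcal{R}^c=\mathcal{M}^{w^c}$ with $w^c\in\mathcal{W}(T)$, so $\mathcal{R}^c\in\{\mathcal{M}^w:w\in\mathcal{W}(T)\}$; as $c\in\mathcal{C}(T)$ was arbitrary, $\{\mathcal{R}^c:c\in\mathcal{C}(T)\}\subseteq\{\mathcal{M}^w:w\in\mathcal{W}(T)\}$, which is the claim.

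I do not anticipate a real obstacle: the corollary is a repackaging of Proposition~\ref{prop:equivalent}, and the only points needing care are bookkeeping ones --- checking that $w^c$ qualifies as an admissible TD-weight matrix (index-dependence and finiteness) and checking that "contractive" is defined consistently across the two operator families, so that an equality of operators suffices to transport contractivity. If a stronger statement than the one written is desired, one could additionally use Proposition~\ref{prop:marginalizedop} to argue the inclusion is generally strict: there exist $w$ with $\max_{x,a}\|E_{x,a}^w\|_1<1-\gamma$ (hence $\mathcal{M}^w$ contractive) that are not of the form $w^c$ for any step-wise trace $c$, e.g.\ perturbations of $w^{\pi,\mu}$ that still satisfy the balance equations approximately but violate the $0\le c_t\le \pi/\mu$ structure; but this is not required for the statement as given.
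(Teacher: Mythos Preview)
Your proposal is correct and follows essentially the same approach as the paper: the paper's proof simply invokes Proposition~\ref{prop:equivalent} to produce $w^c$ with $\mathcal{M}^{w^c}=\mathcal{R}^c$, so contractivity of $\mathcal{R}^c$ immediately yields $w^c\in\mathcal{W}(T)$. Your additional bookkeeping remarks (well-definedness and finiteness of $w^c$, consistency of the notion of contractivity) are more careful than what the paper spells out, but the core argument is identical.
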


As a concrete example of step-wise trace coefficient $c \in \mathcal{C}(T)$, consider the Markovian traces $c_t^{(\text{re})}\coloneqq \min(\frac{\pi(a_t|x_t)}{\mu(a_t|x_t)},1) \leq\frac{\pi(a_t|x_t)}{\mu(a_t|x_t)}$ that define the Retrace operators \citep{munos2016safe}. Let $w^{c^{(\text{re})}}$ be the equivalent marginalized trace. With some algebra, we can show its residual error vector is
\begin{align*}
    E_{x,a}^{w^{c^{(\text{re})}}} =
    \gamma \sum_{t=0}^\infty \gamma^t (\left(P^\pi\right)^T-\left(P^{\tilde{\pi}}\right)^T) (\left(P^{\tilde{\pi}}\right)^T)^t \delta_{x,a}\geq 0,
\end{align*}
where $\tilde{\pi}(a|x)\coloneqq c(x,a)\pi(a|x)$. We can interpret Retrace as imposing an additional yet implicit constraint on $c_t$, such that the residual error vector is non-negative $E_{x,a}^{w^{c}}\geq 0$. This is a stronger constraint than requiring the marginalized operator $\mathcal{M}^{w^c}$ to be contractive, which is equivalent to $\eta_{x,a}^{w^c} = (1-\gamma)^{-1}\left\Vert E_{x,a}^{w^{c}} \right\rVert_1<1$ as stated in Proposition~\ref{prop:contraction}. Indeed, as we will see next, by imposing weaker assumptions, marginalized operators contain a larger space of contractive operators than multi-step operators in general.

\subsection{Further characterizations of contractive marginalized operators}
The above discussion motivates the following question: does the space of contractive marginalized operators contains strictly more elements than contractive multi-step operators?  We have the following results.
\begin{restatable}{proposition}{propmore}\label{prop:more}
There exists tuples $T=(p,r,\pi,\mu,\gamma)$ such that either of the following holds
\begin{align*}
     & \text{(i)}\ \{\mathcal{R}^c , c\in \mathcal{C}(T)\}\subsetneq\{\mathcal{M}^w , w\in \mathcal{W}(T)\}, \\ 
    & \text{(ii)}\ \{\mathcal{R}^c , c\in \mathcal{C}(T)\}=\{\mathcal{M}^w , w\in \mathcal{W}(T)\}.
\end{align*}
\end{restatable}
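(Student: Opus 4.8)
The plan is to prove each of (i) and (ii) by an explicit choice of tuple $T$, using throughout that by Proposition~\ref{prop:equivalent} and Corollary~\ref{coro:space} the inclusion $\{\mathcal{R}^c:c\in\mathcal{C}(T)\}\subseteq\{\mathcal{M}^w:w\in\mathcal{W}(T)\}$ holds for \emph{every} $T$. Hence for (i) it suffices to exhibit a single $w\in\mathcal{W}(T)$ whose operator $\mathcal{M}^w$ is realized by no step-wise trace, and for (ii) it suffices to exhibit a $T$ for which the two parametrized families provably coincide as sets of affine maps $Q\mapsto LQ+b$. A small but crucial observation I will use in both parts is that, in the examples below, the bootstrapping functions $\Delta^\pi(x',a')$ together with the constant map are linearly independent as affine functions of $Q$; therefore two of these operators are equal iff the coefficients multiplying each $\Delta^\pi(x',a')$ agree.

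For (ii) I would take the minimal tuple: $|\mathcal{X}|=|\mathcal{A}|=1$ (so $\pi=\mu$ is forced), arbitrary reward, and any $\gamma\in(0,1)$. Then $d_{x,a}^\mu=\delta_{x,a}$, so $\mathcal{M}^wQ=(1-\beta)Q+(1-\gamma)^{-1}\beta\,\bar r$ with $\beta\defeq w_{x,a}(x,a)$, which is contractive in $\|\cdot\|_\infty$ exactly when $\beta\in(0,2)$. On the multi-step side every trajectory stays at $(x,a)$, so $\mathcal{R}^cQ=(1-\alpha(1-\gamma))Q+\alpha\,\bar r$ with $\alpha\defeq\sum_{t\geq0}\gamma^t\,\Pi_{1\leq s\leq t}c_s$, contractive exactly when $\alpha(1-\gamma)\in(0,2)$; moreover, taking $c_1$ arbitrary and $c_s=0$ for $s\geq2$ makes $\alpha$ range over all of $\mathbb{R}$. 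The substitution $\beta=\alpha(1-\gamma)$ then matches both the operators and the contractivity regions, giving $\{\mathcal{R}^c:c\in\mathcal{C}(T)\}=\{\mathcal{M}^w:w\in\mathcal{W}(T)\}$. Note $\gamma>0$ is used here: at $\gamma=0$ every $\mathcal{R}^c$ collapses to $\mathcal{T}^\pi$, which would instead give strict containment.

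For (i) I would take $\mathcal{X}=\{1,2\}$ with a single action, state $2$ absorbing, and the deterministic transition $1\to 2$, with any rewards and any $\gamma\in(0,1)$. The key is a structural asymmetry in the $(1,\cdot)$ component: by the empty-product convention $\Pi_{1\leq s\leq 0}c_s=1$, the $t=0$ term contributes $\Delta^\pi(1,\cdot)$ with coefficient exactly $1$ to $\mathcal{R}^cQ(1,\cdot)$ for \emph{every} trace, Markovian or not, while all later terms only contribute multiples of $\Delta^\pi(2,\cdot)$; hence $\mathcal{R}^cQ(1,\cdot)=Q(1,\cdot)+\Delta^\pi(1,\cdot)+\alpha\Delta^\pi(2,\cdot)$ for some $\alpha\in\mathbb{R}$. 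By contrast, a short computation gives $\mathcal{M}^wQ(1,\cdot)=Q(1,\cdot)+\beta_1\Delta^\pi(1,\cdot)+\tfrac{\gamma}{1-\gamma}\beta_{12}\Delta^\pi(2,\cdot)$ with $\beta_1\defeq w_{(1,\cdot)}(1,\cdot)$ free. Choosing $\beta_1$ close to but not equal to $1$, and $\beta_{12}$ and $w_{(2,\cdot)}(2,\cdot)$ so that the linear part of $\mathcal{M}^w$ has $\|\cdot\|_\infty$-operator norm $<1$ (for instance $\beta_1=\beta_{12}=9/10$ makes every row sum at most $1/10$), yields $w\in\mathcal{W}(T)$; since $\{1,\Delta^\pi(1,\cdot),\Delta^\pi(2,\cdot)\}$ are linearly independent affine functions of $Q$, this $\mathcal{M}^w$ equals no $\mathcal{R}^c$, so the containment is strict. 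An even quicker alternative for (i) is $\gamma=0$ with any MDP having at least two state--action pairs: every $\mathcal{R}^c$ equals $\mathcal{T}^\pi$, but any $w$ with $w_{x,a}(x,a)\neq1$ somewhere and $w_{x,a}(x,a)\in(0,2)$ everywhere gives a contractive $\mathcal{M}^w\neq\mathcal{T}^\pi$.

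The main obstacle is the bookkeeping in (i): verifying that the exhibited $w$ genuinely lies in $\mathcal{W}(T)$ by computing the linear part of $\mathcal{M}^w$ and bounding its row sums, and making the ``coefficient of $\Delta^\pi(1,\cdot)$ is pinned to $1$'' argument rigorous, which relies on the linear independence of the bootstrapping functions. This is where the discount $\gamma<1$ and the absorbing structure enter, and it is also what lets the argument rule out \emph{all} traces, non-Markovian ones included, in a single stroke rather than case by case.
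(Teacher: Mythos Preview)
Your proposal is correct, and both parts take a genuinely different (and simpler) route than the paper.

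For (ii), the paper works with a two-state, one-action MDP with swapping transitions and solves for non-Markovian traces $c_1,c_2,c_3$ matching an arbitrary $w$; your one-state example collapses everything to a single scalar $\beta=\alpha(1-\gamma)$ and makes the bijection between the two families immediate. For (i), the paper's argument (Lemma~3 in its appendix) is based on \emph{trace regeneration}: it builds a five-state, two-action chain and a $w$ that places zero weight at time~$1$ but positive weight at time~$2$, then argues that any step-wise trace must satisfy $C_t=0\Rightarrow C_{t'}=0$ for $t'\ge t$, so no $c$ can reproduce this pattern. Your argument instead exploits a different rigidity of $\mathcal{R}^c$: the empty-product convention forces the coefficient of $\Delta^\pi(x_0,a_0)$ to be exactly~$1$ for \emph{every} trace, whereas $\mathcal{M}^w$ can set it to any $\beta_1\in(0,2)$. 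This observation dispatches all (Markovian and non-Markovian) traces at once in a two-state absorbing chain, with the linear-independence check being the only real work. Your $\gamma=0$ shortcut is also valid and even quicker. What the paper's construction buys is that it illustrates the regeneration phenomenon highlighted in the main text as the conceptual reason marginalized operators are strictly richer; your construction is more economical but does not surface that mechanism.
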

Here, we provide some intuitions for case (i). One critical feature of multi-step operators is that the cumulative traces are multiplicative $C_t = \left(\Pi_{1\leq s\leq t} c_s\right)$. Assume a trajectory starting from $(x_0,a_0)$, if the cumulative trace $C_{t^\ast}=0$ at some time step $t^\ast$, then $C_t=0,\forall t\geq t^\ast$. However, by construction, marginalized operators might place TD weights $w_{x_0,a_0}(x_t,a_t)$ such that $w_{x_0,a_0}(x_{t^\ast},a_{t^\ast})=0$ and $w_{x_0,a_0}(x_{t'},a_{t'})\neq 0$ for some $t'>t^\ast$. In other words, marginalized operators could \emph{regenerate traces} while multi-step operators cannot. This implies that for such $w$, there does not exist $c\in\mathcal{C}(T)$ such that $\mathcal{R}^c=\mathcal{M}^w$. We provide specific instances where such phenomenon exist, see Appendix~\ref{appendix:proof} for the full derivations.

The above result bears important implications to Section~\ref{section:estimate}, where we apply operators $\mathcal{M}^{w_\psi}$ with parameterized TD weights $w_\psi$. They could be interpreted as directly parameterizing the space of contractive marginalized operators, without necessarily having any multi-step equivalents.

\section{Understanding Marginalized Off-Policy Evaluation Operators}
\label{sec:understand}

We have seen that by properly selecting $w$, marginalized operators recover multi-step operators as special cases. We provide insights on  marginalized operators from a few different perspectives. We start with some background.

\subsection{Stochastic estimates of evaluation operators}
\label{sec:vs}

Since operators are defined in expectations, a naive way to construct stochastic estimates is to directly draw samples from the expectations and compute empirical averages. For example, given a trajectory $(x_t,a_t)_{t=0}^\infty$ starting from $x_t=x,a_t=a$, a stochastic estimate to $\mathcal{R}^cQ(x,a)$ is
\begin{align*}
    \hat{\mathcal{R}}^c Q(x,a) = Q(x,a) + \sum_{t=0}^{\infty} \gamma^t \left(\Pi_{1\leq s \leq t} c_s\right) \hat{\Delta}_t, 
\end{align*}
where $\hat{\Delta}_t = r_t + \gamma Q\left( x_{t+1},\pi(x_{t+1}) \right) - Q(x_t,a_t)$ are estimates of Bellman errors.
We call this \emph{trajectory based} estimate as the estimate sums over data over the entire trajectory. We could also define a \emph{random time based} estimate with a random time $\tau$ such that $P(\tau=n)=(1-\gamma)\gamma^n$ for $n\geq 0$. 
\begin{align*}
    \hat{\mathcal{R}}_\tau^c Q(x,a) = Q(x,a) + (1-\gamma)^{-1} \left(\Pi_{1\leq s \leq \tau} c_s\right) \hat{\Delta}_\tau. 
\end{align*}
Both estimates are unbiased. Similarly, we define unbiased stochastic estimates for the marginalized evaluation operators, such that their expectations are $\mathcal{M}^wQ(x,a)$. 
\begin{align*}
    \hat{\mathcal{M}}^w Q(x,a) &= Q(x,a) + \sum_{t=0}^{\infty} \gamma^t w_{x,a}(x_t,a_t) \hat{\Delta}(x_t,a_t), \\
     \hat{\mathcal{M}}_\tau^w Q(x,a) &= Q(x,a) + (1-\gamma)^{-1} w_{x,a}(x_\tau,a_\tau) \hat{\Delta}(x_\tau,a_\tau).
\end{align*}

\subsection{Connections to conditional importance sampling}
\label{sec:cond-is}

Interestingly, the conversion of the step-wise trace coefficient $c_t$ into equivalent TD weights $w_(x,a)^c$ as defined in Eqn~\eqref{eq:marginalized-trace} is closely related to condition importance sampling (IS) \citep{liu2019understanding,rowland2020conditional}.

\begin{restatable}{proposition}{propcond}\label{prop:conditional-is}
Let $\tau$ be an integer-valued random time, such that $P(\tau=n)=(1-\gamma)\gamma^n,\forall n\geq 0$. For any step-wise trace coefficient $c_t$, its equivalent TD weights $w(x^\prime,a^\prime)$ is
\begin{align*}
    w_{x,a}^c(x',a') = \mathbb{E}_{\mu,\tau}\left\lbrack\left( \Pi_{1\leq s\leq \tau} c_s\right) \; \middle| \; x_\tau=x^\prime,a_\tau=a^\prime  \right\rbrack.
\end{align*}
\end{restatable}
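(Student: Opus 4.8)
The plan is to start from the definition of $w_{x,a}^c(x',a')$ in Eqn~\eqref{eq:marginalized-trace} and manipulate the expectation so that the discounted factor $\gamma^t$ and the normalizing $(1-\gamma)$ combine into the law of the geometric random time $\tau$. Concretely, I would write
\begin{align*}
    w_{x,a}^c(x',a') = \frac{1}{d_{x,a}^\mu(x',a')}\sum_{t\ge 0}(1-\gamma)\gamma^t\,\mathbb{E}_\mu\!\left\lbrack \left(\Pi_{1\le s\le t}c_s\right)\mathbb{I}[x_t=x',a_t=a']\right\rbrack,
\end{align*}
and recognize $(1-\gamma)\gamma^t = P(\tau=t)$. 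Pulling the sum over $t$ inside as an expectation over $\tau$ (independent of the trajectory), this becomes $\frac{1}{d_{x,a}^\mu(x',a')}\mathbb{E}_{\mu,\tau}\!\left\lbrack \left(\Pi_{1\le s\le\tau}c_s\right)\mathbb{I}[x_\tau=x',a_\tau=a']\right\rbrack$.

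The second step is to identify the denominator with the marginal probability of the conditioning event. By the definition of the discounted visitation distribution, $d_{x,a}^\mu(x',a') = (1-\gamma)\sum_{t\ge 0}\gamma^t \mathbb{P}_\mu(x_t=x',a_t=a'\mid x_0=x,a_0=a) = \mathbb{P}_{\mu,\tau}(x_\tau=x',a_\tau=a')$, i.e.\ it is exactly the probability that the state-action pair sampled at the random time $\tau$ equals $(x',a')$. Then the ratio is precisely the conditional expectation: dividing $\mathbb{E}_{\mu,\tau}[Z\cdot\mathbb{I}[\mathcal{E}]]$ by $\mathbb{P}_{\mu,\tau}(\mathcal{E})$ with $Z=\Pi_{1\le s\le\tau}c_s$ and $\mathcal{E}=\{x_\tau=x',a_\tau=a'\}$ yields $\mathbb{E}_{\mu,\tau}[Z\mid x_\tau=x',a_\tau=a']$, which is the claimed identity.

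There is essentially one thing to be careful about rather than a genuine obstacle: the case $d_{x,a}^\mu(x',a')=0$, where the conditioning event has probability zero and the conditional expectation is undefined; here the convention $w_{x,a}^c(x',a')=0$ from Proposition~\ref{prop:equivalent} is adopted, and one should note this matches the numerator also vanishing. The only other point worth a sentence is justifying the interchange of the (countable) sum over $t$ with $\mathbb{E}_\mu$ — since the traces satisfy $0\le c_t\le \pi(a_t|x_t)/\mu(a_t|x_t)$ the terms are nonnegative (or, in the general signed case, summable after taking absolute values because $\gamma^t$ decays geometrically and $\mathbb{E}_\mu[\Pi c_s\,\mathbb{I}[\cdots]]$ is bounded), so Tonelli/Fubini applies. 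I expect the whole argument to be short; the bulk of the "work" is simply recognizing that Eqn~\eqref{eq:marginalized-trace} is a disguised conditional expectation once $(1-\gamma)\gamma^t$ is read as the pmf of $\tau$.
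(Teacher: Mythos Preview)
Your proposal is correct and follows essentially the same route as the paper: rewrite the definition so that $(1-\gamma)\gamma^t$ becomes the law of $\tau$, identify $d_{x,a}^\mu(x',a')$ with $\mathbb{P}_{\mu,\tau}(x_\tau=x',a_\tau=a')$, and cancel to obtain the conditional expectation. Your treatment of the zero-denominator case and the Tonelli/Fubini justification is more careful than the paper's own proof, but the core argument is identical.
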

In other words, $w_{x,a}^c(x^\prime,a^\prime)$ is the conditional expectation of the random cumulative traces $\left(\Pi_{1\leq s\leq \tau} c_s\right)$ conditional on the event $x_\tau=x^\prime,a_\tau=a^\prime$. In general, conditional IS is a useful technique for variance reduction \citep{casella2002statistical}, because for any two random variables $x,a$, $\mathbb{V}\left\lbrack X\right\rbrack\geq \mathbb{V}\left\lbrack \mathbb{E}\left\lbrack X|Y\right\rbrack \right\rbrack$. This implies a variance reduction property of stochastic estimates to the marginalized operators.
\begin{restatable}{corollary}{corocondoperator}\label{coro:conditional-is-operator} Assume that both state transitions and rewards are deterministic.  While having the same expectations, the random-time based estimate for the marginalized operator has smaller variance compared to that of the multi-step operator,
\begin{align*}
    \mathbb{V}\left\lbrack\hat{\mathcal{M}}_{\tau}^{w^c}Q(x,a)\right\rbrack \leq \mathbb{V}\left\lbrack\hat{\mathcal{R}}_\tau^c Q(x,a)\right\rbrack.
\end{align*}
\end{restatable}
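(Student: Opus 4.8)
The plan is to view the random-time estimate $\hat{\mathcal{R}}_\tau^c Q(x,a)$ as a function of the trajectory $(x_t,a_t,r_t)_{t\ge0}$ and the random time $\tau$, and to exhibit $\hat{\mathcal{M}}_\tau^{w^c}Q(x,a)$ as a Rao--Blackwellization of it, so that the variance inequality follows from the conditional-variance (Rao--Blackwell) inequality $\mathbb{V}[X]\ge\mathbb{V}[\mathbb{E}[X\mid Y]]$ cited just before the corollary. Concretely, set $X=\hat{\mathcal{R}}_\tau^c Q(x,a) = Q(x,a) + (1-\gamma)^{-1}\bigl(\Pi_{1\le s\le\tau}c_s\bigr)\hat{\Delta}_\tau$ and condition on $Y=(x_\tau,a_\tau)$, the state-action pair visited at the random time. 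The goal is to show $\mathbb{E}[X\mid Y=(x',a')] = \hat{\mathcal{M}}_\tau^{w^c}Q(x,a)$ evaluated with $x_\tau=x',a_\tau=a'$; then taking variances and applying Rao--Blackwell finishes the argument. I would also remark that both estimates are unbiased for the same operator value (which holds by Proposition~\ref{prop:equivalent}, $\mathcal{M}^{w^c}=\mathcal{R}^c$), so comparing variances is the meaningful comparison.

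The key computation is the conditional expectation. First I would note that under the determinism assumption on transitions and rewards, the Bellman-error estimate $\hat{\Delta}_\tau = r_\tau + \gamma Q(x_{\tau+1},\pi(x_{\tau+1})) - Q(x_\tau,a_\tau)$ is a deterministic function of $(x_\tau,a_\tau)$ alone — this is exactly where the hypothesis is used, since otherwise $\hat{\Delta}_\tau$ would carry extra randomness from $r_\tau$ and $x_{\tau+1}$ that is not measurable with respect to $Y$. Hence $\hat{\Delta}_\tau = \Delta^\pi(x',a')$ on the event $Y=(x',a')$, and it pulls out of the conditional expectation:
\begin{align*}
\mathbb{E}[X\mid Y=(x',a')] = Q(x,a) + (1-\gamma)^{-1}\,\mathbb{E}_{\mu,\tau}\!\left[\Pi_{1\le s\le\tau}c_s \,\middle|\, x_\tau=x',a_\tau=a'\right]\Delta^\pi(x',a').
\end{align*}
By Proposition~\ref{prop:conditional-is}, the remaining conditional expectation is precisely $w_{x,a}^c(x',a')$, so the right-hand side equals $Q(x,a) + (1-\gamma)^{-1}w_{x,a}^c(x',a')\Delta^\pi(x',a') = \hat{\mathcal{M}}_\tau^{w^c}Q(x,a)$ on that event. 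Since $(x',a')$ was arbitrary, $\mathbb{E}[X\mid Y] = \hat{\mathcal{M}}_\tau^{w^c}Q(x,a)$ as random variables.

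Finally I would invoke $\mathbb{V}[\hat{\mathcal{M}}_\tau^{w^c}Q(x,a)] = \mathbb{V}[\mathbb{E}[X\mid Y]] \le \mathbb{V}[X] = \mathbb{V}[\hat{\mathcal{R}}_\tau^c Q(x,a)]$, which is the claim. The main obstacle — and the only subtle point — is making the measurability argument watertight: one has to be careful that $\tau$ is part of the conditioning/averaging correctly, i.e. that the identity $\mathbb{E}[X\mid x_\tau,a_\tau]$ is taken with $\tau$ averaged over inside, consistent with how $w_{x,a}^c$ is defined in Proposition~\ref{prop:conditional-is}, and that the deterministic-dynamics hypothesis is genuinely needed so that $\hat\Delta_\tau$ is $\sigma(x_\tau,a_\tau)$-measurable. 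With stochastic rewards or transitions this step fails and the corollary's statement would need the additive-noise terms handled separately; the excerpt's standing assumption sidesteps this. Everything else is a direct application of the Rao--Blackwell inequality already quoted in the text.
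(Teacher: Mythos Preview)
Your proposal is correct and follows essentially the same approach as the paper: use Proposition~\ref{prop:conditional-is} to identify $w_{x,a}^c(x_\tau,a_\tau)$ as the conditional expectation of the cumulative trace given $(x_\tau,a_\tau)$, invoke determinism to make $\hat\Delta_\tau$ measurable with respect to $(x_\tau,a_\tau)$ so it can be pulled through, and then apply the Rao--Blackwell (tower) inequality $\mathbb{V}[X]\ge\mathbb{V}[\mathbb{E}[X\mid Y]]$ with $Y=(x_\tau,a_\tau)$. The only cosmetic difference is that the paper applies the inequality to $X=(\Pi_{1\le s\le\tau}c_s)\hat\Delta_\tau^\pi$ directly rather than to the full estimator, which amounts to dropping the additive constant $Q(x,a)$ and the scalar $(1-\gamma)^{-1}$ --- an immaterial change.
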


Importantly, Corollary~\ref{coro:conditional-is-operator} assumes that both state transitions and rewards are deterministic; there is no provable variance reduction when, e.g., the rewards are stochastic.
In Appendix~\ref{appendix:operators}, we graphically present the relations between the four estimates to different operators introduced above. 

\paragraph{Remarks on trajectory based estimates.}
Trajectory based estimates usually have smaller variance than the random time based counterparts. This is because
\begin{align*}
    \hat{M}^{w^c}Q(x_0,a_0) &= \mathbb{E}\left\lbrack  \hat{\mathcal{M}}_{\tau}^{w^c}Q(x_0,a_0) \; \middle| \; (x_t,a_t,r_t)_{t=0}^\infty  \right\rbrack, \nonumber \\ 
    \hat{\mathcal{R}}^c Q(x_0,a_0) &= \mathbb{E}\left\lbrack  \hat{\mathcal{R}}_\tau^c Q(x_0,a_0) \; \middle| \; (x_t,a_t,r_t)_{t=0}^\infty  \right\rbrack.
\end{align*}
Though Collorary~\ref{coro:conditional-is-operator} shows the order of variance between random time based estimates, the order of variance of the trajectory based estimates $\hat{\mathcal{R}}^cQ(x,a)$ vs. $\hat{\mathcal{M}}^{w^c} Q(x,a)$ are not clear. Similar results have been observed in
 \citep{liu2019understanding}, where they show that marginalized IS via extended conditional expectations \citep{bratley2011guide} does not necessarily reduce variance. Nevertheless, in practice, estimates to marginalized operators usually reduce variance as evidenced empirically \citep{liu2018breaking}.

 \paragraph{Trade-off of practical estimates.}
In practice, TD weights $w^c$ are unknown and need to be estimated $w_\psi\approx w^c$. As a concrete example, consider $c_t=\frac{\pi(a_t|x_t)}{\mu(a_t|x_t)}$ and $w^c=w_{x,a}^{\pi,\mu}$. To clarify the trade-off, let $Q\equiv 0$. In this case, $\hat{\mathcal{R}}^c Q(x,a)=\sum_{t\geq 0} \gamma^t (\Pi_{1\leq s\leq t} \frac{\pi(a_s|x_s)}{\mu(a_s|x_s)}) r_t$ \citep{precup2000eligibility}, which might suffer from high variance due to the product of IS ratios \citep{liu2019understanding}. On the other hand, $\hat{\mathcal{M}}^{w_\psi}Q(x,a)=(1-\gamma)^{-1}\sum_{t=0}^\infty w_\psi(x_t,a_t) r_t \approx \hat{\mathcal{M}}^{w^c}Q(x,a) $ where $w_\psi\approx w^c$ is a parametric estimate \citep{liu2018breaking}. As argued in prior work, the latter has lower variance due to marginalized IS but at the cost of the bias in the estimate $w_\psi$. Overall, moving from the multi-step operator $\hat{\mathcal{R}}^c$ to its estimated marginalized counterpart $\hat{\mathcal{M}}^{w_\psi}$, one trade-offs variance with potential bias due to imperfect estimates of $w^c$ \citep{rowland2020adaptive}. For general step-wise traces $c_t$ and $w^c$, this trade-off should still hold.  As such, the quality of $w_\psi\approx w^c$ determines the quality of downstream updates. We will discuss in
Section~\ref{section:estimate} how to characterize such effects and estimate $w_\psi$.
 
\paragraph{Related work on conditional IS.} \citep{rowland2020conditional} interprets a large class of off-policy evaluation algorithms as a two-stage process: \textbf{(1)} start with an initial estimate; \textbf{(2)} compute the conditional IS of the estimate w.r.t. some conditioning variables. State-action pairs $(x,a)$ are popular choices of the conditioning variables, e.g., when applied to marginalized IS \citep{xie2019towards,liu2018breaking} and eligibility traces \citep{van2020expected}. In this work, we interpret marginalized operators as applying a similar procedure to step-wise traces $c_t$ to derive TD weights $w^c$.

\paragraph{Extension to V-trace operators.} So far, our discussion has focused on off-policy evaluation for Q-functions. By interpreting the TD weights as conditional IS of step-wise traces, we can extend this approach to off-policy evaluation of value functions such as V-trace operators \citep{espeholt2018impala}. see Appendix~\ref{appendix:vtrace} for detailed results.

\subsection{Policy evaluation via linear programs and its connections to contractions }
\label{sec:lp}
The linear programming (LP) formulation of MDPs \citep{de2003linear,puterman2014markov} is an important framework for policy evaluation, which gives rise to a large number of recent work on marginalized off-policy evaluation (e.g., see \citep{nachum2020reinforcement}). Here, we explore how the notion of contraction is in fact consistent with the LPs. We will see that this offers a new way to interpret LP formulation for policy evaluation, and might pave the way for new algorithms.

\paragraph{Dual LP for policy evaluation.} Consider the evaluation of $Q^\pi(x,a)$. Denote $R\in \mathbb{R}^{\mathcal{X}\times\mathcal{A}}$ as the reward vector $R(x,a)=r(x,a)$. We directly start with the dual LP where $d\in \mathbb{R}^{\mathcal{X}\times\mathcal{A}}$ are dual variables. The dual LP for policy evaluation is \citep{puterman2014markov}
\begin{align}    \left\{                \begin{array}{ll}            \min \  (1-\gamma)^{-1}d^T  R\\            (1-\gamma)\delta_{x,a} + \gamma (P^\pi)^T d - d = 0\\            \end{array}              \right.              \label{eq:eval-duallp}\end{align} 
Since the equality constraints are essentially the balance equations defined in Eqn~\eqref{eq:bellman-discounted}, the single feasible (optimal) solution is $d^\ast=d_{x,a}^\pi$.

\paragraph{Sequence of relaxed LPs as repeated application of contractive operators.}

We start by assuming an iterative algorithm, where at iteration $t$ we have access to Q-function estimate $Q_t \in\mathbb{R}^{\mathcal{X}\times\mathcal{A}}$. At iteration $t+1$, consider the dual LP (Eqn~\eqref{eq:eval-duallp}) for each $(x,a)$. We augment its objective function as follows \begin{align}    \left\{                \begin{array}{ll}            \min \ Q_t^T \delta_{x,a} + (1-\gamma)^{-1}d^T  (R + \gamma (P^\pi)^T Q_t - Q_t) \\            (1-\gamma)\delta_{x,a} + \gamma (P^\pi)^T d - d = 0\\            \end{array}              \right.              \label{eq:eval-duallp-augmented}\end{align}
 Note that the augmented dual LP (Eqn~\eqref{eq:eval-duallp-augmented}) has the same optimal solution as the original dual LP (Eqn~\eqref{eq:eval-duallp}) because both of their feasible region contains only $d_{x,a}^\pi$. Let $\eta\in[0,1)$ be a scalar constant. We relax the constraints of the above dual LP as follows,
\begin{align}    \left\{                \begin{array}{ll}            \min \ Q_t^T \delta_{x,a} + (1-\gamma)^{-1}d^T (R + \gamma P^\pi Q_t - Q_t)\\            (1-\gamma)\delta_{x,a} + \gamma (P^\pi)^T d - d \leq (1-\gamma)u\\             (1-\gamma)\delta_{x,a} + \gamma (P^\pi)^T d - d \geq -(1-\gamma)u\\            1^T u \leq \eta, u\geq 0\\            \end{array}              \right.              \label{eq:eval-duallp-augmented-relaxed}\end{align}
We name the above relaxed problem $\text{LP}^{(t)}(x,a)$. The feasible region of the relaxed dual LP (Eqn~\eqref{eq:eval-duallp-augmented-relaxed}) is expanded into a non-trivial polyhedron $\mathcal{D}_{x,a}$ when $\eta>0$. Instead of requiring balance equations to hold exactly, violations are allowed and their magnitude is controlled by $\eta$. Define $Q_{t+1}(x,a)$ to be the objective value of Eqn~\eqref{eq:eval-duallp-augmented-relaxed}. The following result relates the sequence of LP objectives to contraction.
\begin{restatable}{proposition}{proplp}\label{prop:trace-bellman}  The following holds for the sequence of values produced by relaxed LPs,
\begin{align*}    \left\lVert Q_{t+1}-Q^\pi\right\Vert_\infty \leq \eta  \left\lVert Q_t-Q^\pi\right\Vert_\infty.\end{align*}\end{restatable}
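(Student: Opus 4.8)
The plan is to show that the relaxed LP in Eqn~\eqref{eq:eval-duallp-augmented-relaxed} is exactly a marginalized operator $\mathcal{M}^w$ for a well-chosen TD weight vector $w=w(x,a)$, and then invoke Proposition~\ref{prop:marginalizedop}. The first step is to rewrite the objective of $\text{LP}^{(t)}(x,a)$ as $Q_t(x,a) + (1-\gamma)^{-1} d^T \Delta^\pi$, where $\Delta^\pi = R + \gamma (P^\pi)^T Q_t - Q_t$ is the vector of Bellman errors (note: I would first flag the apparent typo $P^\pi Q_t$ vs.\ $(P^\pi)^T Q_t$ in Eqn~\eqref{eq:eval-duallp-augmented-relaxed}; the intended object is the Bellman-error vector with entries $\Delta^\pi(x',a') = \bar r(x',a') + \gamma \mathbb{E}_{x''\sim p(\cdot|x',a')}[Q_t(x'',\pi(x''))] - Q_t(x',a')$). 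Then for any feasible $d \in \mathcal{D}_{x,a}$, setting $w_{x,a}(x',a') := d(x',a')/d^\mu_{x,a}(x',a')$ (and $0$ where $d^\mu_{x,a}$ vanishes) gives $d = d^w_{x,a}$ in the notation of Proposition~\ref{prop:marginalizedop}, so the LP objective equals $\mathcal{M}^w Q_t(x,a)$, and the constraint $1^T u \le \eta$, $u \ge 0$, together with the two-sided inequality constraints, says precisely that $\lVert E^w_{x,a}\rVert_1 \le (1-\gamma)\eta$, i.e.\ $\eta^w_{x,a} \le \eta$ by Eqn~\eqref{eq:local-contraction}.

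Next I would translate the error bound. For the optimal $d^\star$ of $\text{LP}^{(t)}(x,a)$ with associated weight $w^\star$, we have $Q_{t+1}(x,a) = \mathcal{M}^{w^\star} Q_t(x,a)$ and $\eta^{w^\star}_{x,a} \le \eta$. Since $Q^\pi$ is a fixed point of $\mathcal{M}^{w}$ for every $w$ (first claim of Proposition~\ref{prop:marginalizedop}), in particular $\mathcal{M}^{w^\star} Q^\pi(x,a) = Q^\pi(x,a)$. Applying the local contraction bound of Proposition~\ref{prop:marginalizedop} with $Q_1 = Q_t$, $Q_2 = Q^\pi$ yields
\begin{align*}
  \lvert Q_{t+1}(x,a) - Q^\pi(x,a)\rvert
  = \lvert \mathcal{M}^{w^\star} Q_t(x,a) - \mathcal{M}^{w^\star} Q^\pi(x,a)\rvert
  \le \eta^{w^\star}_{x,a}\,\lVert Q_t - Q^\pi\rVert_\infty
  \le \eta\,\lVert Q_t - Q^\pi\rVert_\infty.
\end{align*}
Taking the maximum over $(x,a)$ on the left gives $\lVert Q_{t+1} - Q^\pi\rVert_\infty \le \eta \lVert Q_t - Q^\pi\rVert_\infty$, which is the claim.

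One subtlety to handle carefully is that the LP minimizes over $d$, so $Q_{t+1}(x,a)$ is the \emph{minimum} of $\mathcal{M}^w Q_t(x,a)$ over all $w$ with $d^w_{x,a}$ feasible, rather than the value for one fixed $w$. The bound above still goes through because it holds for the specific minimizing $w^\star$; but I should also check the LP is feasible and bounded so that a minimizer exists — feasibility is immediate since $d = d^\pi_{x,a}$ satisfies the constraints with $u = 0$ (it solves the balance equations exactly), and boundedness follows because $\mathcal{D}_{x,a}$ is a bounded polyhedron (the two-sided constraints plus $1^T u \le \eta$ pin down $d$ up to the bounded perturbation set, using $(I - \gamma (P^\pi)^T)$ invertible). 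The main obstacle is thus the bookkeeping in the first step: verifying that the $L^1$-ball relaxation on the balance-equation residual corresponds exactly to the constraint $\eta^w_{x,a} \le \eta$, and confirming the objective rewriting is valid including the $Q_t(x,a) = Q_t^T\delta_{x,a}$ term and the correct use of $(P^\pi)^T$. Once that identification is pinned down, the contraction bound is an immediate consequence of Proposition~\ref{prop:marginalizedop}.
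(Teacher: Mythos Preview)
Your proposal is correct and takes essentially the same approach as the paper: identify $Q_{t+1}(x,a)$ as $\mathcal{M}^{w^\star}Q_t(x,a)$ for the optimizer $d^\star$, read off from the LP constraints that $\lVert E^{w^\star}_{x,a}\rVert_1 \le (1-\gamma)\eta$, and conclude contraction toward the fixed point $Q^\pi$ (the paper's proof is terser but does exactly this, and also remarks---as you do---that any feasible $d$, not only the minimizer, yields the same bound). One minor notational slip: the Bellman-error vector is $R + \gamma P^\pi Q_t - Q_t$ without the transpose on $P^\pi$; your componentwise formula for $\Delta^\pi(x',a')$ is correct, so this is purely cosmetic.
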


To better understand the above result, note that the feasible region $\mathcal{D}_{x,a}$ effectively characterizes all TD weights $w$ that $\mathcal{M}^w$ is contractive with rate at most $\eta$. In particular, 
\begin{align*}
    \mathcal{D}_{x,a} = \left\{w_{x,a}\odot d_{x,a}^\mu | \eta_{x,a}^w \leq \eta \right\}
\end{align*}
where $\odot$ is the element-wise product of vectors. As we show below, the iterative process $Q_t \rightarrow Q_{t+1}$ is equivalent to applying contractive operators for policy evaluation
\begin{restatable}{corollary}{coroequivalence}\label{coro:equivalence} For any $(x,a)$, let $w_{x,a}^\ast=\frac{d^\ast}{d_{x,a}^\mu}\in\mathbb{R}^{\mathcal{X}\times\mathcal{A}}$ and $d^\ast$ is the optimal solution to $\text{LP}^{(t)}(x,a)$, then $\eta_{x,a}^{w_{x,a}^\ast}\leq \eta$ and
\begin{align*}
   Q_{t+1}(x,a) = \mathcal{M}^{w_{x,a}^\ast} Q_t(x,a).
\end{align*}
\end{restatable}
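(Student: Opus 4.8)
The plan is to recognize $\mathrm{LP}^{(t)}(x,a)$, after eliminating the slack vector $u$, as the minimization of the marginalized-operator output $\mathcal{M}^w Q_t(x,a)$ over exactly those TD weights $w$ with local contraction rate $\eta_{x,a}^w \le \eta$; an optimizer $d^\ast$ then hands us $w_{x,a}^\ast$ for free. First I would match the objective. Write $\Delta^\pi = R + \gamma P^\pi Q_t - Q_t$ for the Bellman-error vector at $Q_t$, so that its $(x',a')$-component is the quantity appearing in Eqn~\eqref{eq:ba-retrace}. The objective of Eqn~\eqref{eq:eval-duallp-augmented-relaxed} at a vector $d$ is $Q_t^T\delta_{x,a} + (1-\gamma)^{-1} d^T\Delta^\pi = Q_t(x,a) + (1-\gamma)^{-1} d^T\Delta^\pi$. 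On the other hand, for $w_{x,a} = d/d_{x,a}^\mu$ (componentwise) we have $d_{x,a}^w = w_{x,a}\odot d_{x,a}^\mu = d$, and expanding the expectation in Eqn~\eqref{eq:ba-retrace} gives $\mathcal{M}^w Q_t(x,a) = Q_t(x,a) + (1-\gamma)^{-1} (d_{x,a}^w)^T\Delta^\pi = Q_t(x,a) + (1-\gamma)^{-1} d^T\Delta^\pi$. Hence the LP objective at $d$ equals $\mathcal{M}^w Q_t(x,a)$ with $w = d/d_{x,a}^\mu$.

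Next I would eliminate $u$ to identify the feasible region. For fixed $d$, a vector $u \ge 0$ with $1^T u \le \eta$ and $|(1-\gamma)\delta_{x,a} + \gamma(P^\pi)^T d - d| \le (1-\gamma)u$ (componentwise) exists if and only if the minimal admissible choice $u = |E_{x,a}^w|/(1-\gamma)$ satisfies $1^T u \le \eta$, i.e. $\lVert E_{x,a}^w\rVert_1 \le (1-\gamma)\eta$, where $E_{x,a}^w = (1-\gamma)\delta_{x,a} + \gamma(P^\pi)^T d - d$ is precisely the residual error vector of $d = d_{x,a}^w$ from Proposition~\ref{prop:marginalizedop}. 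By Eqn~\eqref{eq:local-contraction} this is the statement $\eta_{x,a}^w \le \eta$. So the feasible set of $\mathrm{LP}^{(t)}(x,a)$ (in the $d$-variable) is $\mathcal{D}_{x,a} = \{ d_{x,a}^w : \eta_{x,a}^w \le \eta \}$, and together with the objective-matching above, $Q_{t+1}(x,a) = \min\{ \mathcal{M}^w Q_t(x,a) : \eta_{x,a}^w \le \eta \}$.

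Finally I would read off the two conclusions. The LP is feasible — take $d = d_{x,a}^\pi$, which solves the balance equations exactly, and $u = 0$ — and bounded — the constraints confine $d$ to $\{ d : \lVert (I - \gamma(P^\pi)^T) d - (1-\gamma)\delta_{x,a} \rVert_1 \le (1-\gamma)\eta \}$, a bounded set since $I - \gamma(P^\pi)^T$ is invertible for $\gamma < 1$ — so an optimal $d^\ast$ exists. Setting $w_{x,a}^\ast = d^\ast/d_{x,a}^\mu$: feasibility of $d^\ast$ together with the feasible-region identification gives $\eta_{x,a}^{w_{x,a}^\ast} \le \eta$; and $Q_{t+1}(x,a)$, being the optimal value of $\mathrm{LP}^{(t)}(x,a)$, is the objective evaluated at $d^\ast$, which by the objective-matching equals $\mathcal{M}^{w_{x,a}^\ast} Q_t(x,a)$. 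This is exactly the claim.

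The argument is mostly bookkeeping, and the one genuinely delicate point — which I expect to be the main obstacle — is the treatment of state-action pairs $(x',a')$ with $d_{x,a}^\mu(x',a') = 0$: the ratio defining $w_{x,a}^\ast$ is then ill-posed, and the identity $d_{x,a}^{w^\ast} = d^\ast$ used above holds only if $d^\ast$ vanishes on that null set. The support assumption $\mathrm{supp}(\pi(\cdot|x)) \subseteq \mathrm{supp}(\mu(\cdot|x))$ gives $\mathrm{supp}(d_{x,a}^\pi) \subseteq \mathrm{supp}(d_{x,a}^\mu)$, so the natural remedy is to append the constraints $d(x',a') = 0$ on that null set to $\mathrm{LP}^{(t)}(x,a)$ (which keeps it feasible); checking that this does not change the optimal value, or simply assuming $d_{x,a}^\mu$ has full support (natural, since the data is drawn from it), makes the above fully rigorous.
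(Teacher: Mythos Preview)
Your proposal is correct and takes essentially the same approach as the paper: both identify $d \leftrightarrow w$ via $w = d/d_{x,a}^\mu$, observe that the LP objective at $d$ coincides with $\mathcal{M}^w Q_t(x,a)$, and read off the contraction bound from feasibility. Your version is considerably more thorough than the paper's two-line proof --- you explicitly eliminate the slack variable $u$ to identify the feasible region with $\{d_{x,a}^w : \eta_{x,a}^w \le \eta\}$, verify existence of an optimum, and flag the null-support issue --- but the underlying argument is the same.
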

In other words, instead of directly outputting $Q^\pi(x,a)$ by solving $\text{LP}^{(0)}(x,a)$, this iterative algorithm solves relaxed problems and generates a sequence of LP values $Q_t\rightarrow Q^\pi$ by implicitly applying operator $\mathcal{M}^{w_{x,a}^\ast}$.  

\paragraph{Related ideas.} The idea to reduce solving a single LP into a solving a sequence of relaxed LPs has been explored (e.g., in \citep{peters2010relative,bas2020logistic}). They consider the LP for policy optimization, and relax constraints by projecting them onto low-dimensional spaces. This is orthogonal to the box relaxation in Eqn~\eqref{eq:eval-duallp-augmented-relaxed}.

\section{Estimating TD Weights}
\label{section:estimate}

As previously discussed marginalized operators can achieve variance reduction compared to the equivalent multi-step operators (Corollary~\ref{coro:conditional-is-operator}). This poses a practical question: given a multi-step operator $\mathcal{R}c^{\pi,\mu}$, how to find its marginalized equivalent $\mathcal{M}^{w^c}$? In other words, how to estimate $w^c$ from $c_t$? Given a specific step-wise trace coefficient $c_t$, we seek an algorithm that estimates the equivalent TD weights $w_\psi\approx w_{x,a}^c,\forall (x,a)$. Throughout the discussion, we focus on Markovian step-wise traces that define Retrace operators $0\leq c(x_t,a_t)\leq\frac{\pi(a_t|x_t)}{\mu(a_t|x_t)}$ \citep{munos2016safe}. 

We adapt the TD-learning based method introduced in \citep{liu2018breaking} and derive algorithms to estimating TD weights for generic Markovian step-wise traces. We define $\tilde{\pi}(a|x)\coloneqq \mu(a|x)c(x,a)$
and a scoring function (also called a critic or discriminator) $\mathbf{q}\in\mathbb{R}^{\mathcal{X}\times\mathcal{A}}$. Consider the loss function,
\begin{align}
    L(\mathbf{q},w_\psi)&\coloneqq 
    (1-\gamma)q(x,a)+
    \mathbb{E}_{(x',a')\sim d_{x,a}^\mu}\left\lbrack \Gamma(x',a') \right\rbrack.
        \label{eq:td-estimate}
\end{align}
Here, we define $\Gamma(x',a')$ as
\begin{align*}
 w(x',a')\left(\gamma \mathbb{E}_{x''\sim p(\cdot|x',a')} \left\lbrack q\left(x'',\tilde{\pi}(x'')\right)\right\rbrack  - q(x',a')\right).
\end{align*}
We now show a few important properties of the loss function that motivate its use as a practical objective,
\begin{restatable}{lemma}{proplemma}\label{lemma:loss}
Given any two class of scoring functions $\mathcal{Q}_1\subset \mathcal{Q}_2$, 
$
    \max_{\mathbf{q}\in\mathcal{Q}_1} L(\mathbf{q},w) \leq  \max_{\mathbf{q}\in\mathcal{Q}_2} L(\mathbf{q},w) ,\forall w 
$
In addition, the TD weights achieve the global optimal
$w_{x,a}^{c} =\arg\min_w \max_{\mathbf{q}\in\mathcal{Q}} L(\mathbf{q},w)$ for any $\mathcal{Q}$.
\end{restatable}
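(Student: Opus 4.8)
The plan is to dispatch the two claims separately; the second is the substantive one and rests on a linearity-plus-telescoping argument.

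\textbf{Claim 1 (monotonicity in the critic class).} This is immediate and uses no structure of $L$: if $\mathcal{Q}_1\subseteq\mathcal{Q}_2$ then $\{L(\mathbf{q},w):\mathbf{q}\in\mathcal{Q}_1\}\subseteq\{L(\mathbf{q},w):\mathbf{q}\in\mathcal{Q}_2\}$, so the supremum over the second set dominates that over the first, for every fixed $w$.

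\textbf{Claim 2 (optimality of $w_{x,a}^c$).} The key observation is that, for a fixed starting pair $(x,a)$ and fixed weights $w$, the map $\mathbf{q}\mapsto L(\mathbf{q},w)$ is \emph{linear}. Grouping the terms of Eqn~\eqref{eq:td-estimate} by the coefficient multiplying each entry of $\mathbf{q}$, and writing $d_{x,a}^w\defeq w\odot d_{x,a}^\mu$, one obtains
\begin{align*}
L(\mathbf{q},w) \;=\; \big\langle\, (1-\gamma)\,\delta_{x,a} \,+\, \gamma\,(P^{\tilde\pi})^\top d_{x,a}^w \,-\, d_{x,a}^w \,,\ \mathbf{q}\,\big\rangle ,
\end{align*}
where $(P^{\tilde\pi}\mathbf{q})(x',a')\defeq\mathbb{E}_{x''\sim p(\cdot|x',a')}[q(x'',\tilde\pi(x''))]$ is the transition operator twisted by $\tilde\pi(a|x)=\mu(a|x)c(x,a)$. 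Hence for every $w$, and any class with $\mathbf{0}\in\mathcal{Q}$ (e.g. any linear subspace, the standard setting for linear critics), $\max_{\mathbf{q}\in\mathcal{Q}}L(\mathbf{q},w)\ge L(\mathbf{0},w)=0$. So to prove $w_{x,a}^c$ is a minimizer it suffices to show the residual vector $(1-\gamma)\delta_{x,a}+\gamma(P^{\tilde\pi})^\top d_{x,a}^w-d_{x,a}^w$ vanishes \emph{when} $w=w_{x,a}^c$, since then $\max_{\mathbf{q}\in\mathcal{Q}}L(\mathbf{q},w_{x,a}^c)=0$ meets the lower bound, for every $\mathcal{Q}$.

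\textbf{Vanishing of the residual at $w=w_{x,a}^c$.} Here I would substitute the closed form of Proposition~\ref{prop:equivalent}: with $w=w_{x,a}^c$ and $d_{x,a}^w\defeq w\odot d_{x,a}^\mu$, it reads $d_{x,a}^w(x',a')=(1-\gamma)\,\mathbb{E}_\mu\big[\sum_{t\ge 0}\gamma^t(\Pi_{1\le s\le t}c_s)\,\mathbb{I}[x_t=x',a_t=a']\big]$. Pairing with an arbitrary $\mathbf{q}$ and swapping sum and expectation — legitimate since $0\le c_t\le\pi(a_t|x_t)/\mu(a_t|x_t)$ makes the traces summable against the bounded $\mathbf{q}$ — gives $\langle d_{x,a}^w,\mathbf{q}\rangle=(1-\gamma)\sum_{t\ge 0}\gamma^t\,\mathbb{E}_\mu[(\Pi_{1\le s\le t}c_s)\,q(x_t,a_t)]$, while the Markov identity $c(x_{t+1},a_{t+1})\,\mu(a_{t+1}|x_{t+1})=\tilde\pi(a_{t+1}|x_{t+1})$ together with the tower property yields $\gamma\,\langle(P^{\tilde\pi})^\top d_{x,a}^w,\mathbf{q}\rangle=(1-\gamma)\sum_{t\ge 1}\gamma^t\,\mathbb{E}_\mu[(\Pi_{1\le s\le t}c_s)\,q(x_t,a_t)]$ (an index shift $t\mapsto t+1$). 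The difference of these two series telescopes to $-(1-\gamma)\,q(x,a)$ — only the $t=0$ term survives, using $\Pi_{1\le s\le 0}c_s=1$ and $(x_0,a_0)=(x,a)$ — which exactly cancels $(1-\gamma)\langle\delta_{x,a},\mathbf{q}\rangle=(1-\gamma)q(x,a)$. Hence $L(\mathbf{q},w_{x,a}^c)=0$ for every $\mathbf{q}$, completing the proof.

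\textbf{Main obstacle.} There is no deep difficulty; the care needed is all in the telescoping step — the index shift, the empty-product convention, and a routine Fubini justification — plus being explicit about the assumption $\mathbf{0}\in\mathcal{Q}$, without which the ``$\ge 0$'' lower bound (hence the claim as stated) need not literally hold. If one additionally wants uniqueness of the minimizer, I would take $\mathcal{Q}=\mathbb{R}^{\mathcal{X}\times\mathcal{A}}$: then finiteness of $\max_{\mathbf{q}}L(\mathbf{q},w)$ forces the residual to be zero, and $I-\gamma(P^{\tilde\pi})^\top$ is invertible since $\tilde\pi(\cdot|x)\le\pi(\cdot|x)$ makes $P^{\tilde\pi}$ sub-stochastic, which pins down $d_{x,a}^w$ and hence $w$ on the support of $d_{x,a}^\mu$.
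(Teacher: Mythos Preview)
Your proposal is correct and matches the paper's approach. The paper does not give a standalone proof of this lemma; the two ingredients it relies on are the inner-product form $L(\mathbf{q},w_\psi)=\mathbf{q}^\top[(1-\gamma)\delta_{x,a}+\gamma(P^{\tilde\pi})^\top d_\psi-d_\psi]$ (stated in the proof of Proposition~\ref{prop:estimate}) and the balance equation $d_{x,a}^{w^c}=(1-\gamma)\delta_{x,a}+\gamma(P^{\tilde\pi})^\top d_{x,a}^{w^c}$ (Proposition~\ref{prop:trace-bellman}). You derive the same inner-product form, and your telescoping argument is precisely the direct proof the paper gives for Proposition~\ref{prop:trace-bellman} in the appendix---same index shift, same use of the tower property and the identity $c(x_{t+1},a_{t+1})\mu(a_{t+1}|x_{t+1})=\tilde\pi(a_{t+1}|x_{t+1})$. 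The only presentational difference is that you carry out the Bellman-equation step inline rather than invoking it as a separate result; your observations about needing $\mathbf{0}\in\mathcal{Q}$ and about uniqueness via invertibility of $I-\gamma(P^{\tilde\pi})^\top$ go slightly beyond what the paper spells out.
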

The above result motivates the use of the saddle point optimization objective to search for $w_\psi\approx w_{x,a}^c$. Consider optimizing the following objective jointly with respect to $\psi$ and $\mathbf{q}$,
\begin{align}
    \min_\psi \max_{\mathbf{q}\in\mathcal{Q}} L(\mathbf{q},w_\psi).
    \label{eq:minmax}
\end{align}

The outcome of the optimization $\psi$ can then be used as an approximation $w^\psi\approx w_{x,a}^c$. To characterize the quality of the approximation, note that when $\mathcal{Q}$ contains a large set of scoring functions, the solution $\psi^\ast$ to Eqn~\eqref{eq:minmax} should be closer to $w_{x,a}^c$. This is captured by the following result.
\begin{restatable}{proposition}{propestimate}\label{prop:estimate}
For any sub-probability measure $\tilde{\pi}$, Let $T_{\tilde{\pi}}(x^\prime,a^\prime|x,a)\coloneqq  p(x^\prime|x,a) \tilde{\pi}(a^\prime|x^\prime)$ be the one-step marginal transition probability. Let $T_{\tilde{\pi}}^t(x^\prime,a^\prime|x,a)$ be the $t$-time composition of $T_{\tilde{\pi}}(\cdot|x,a)$. Given a target state-action pair $(x^\ast,a^\ast)$, define the scoring function $q(x,a,x^\ast,a^\ast)\coloneqq \sum_{t\geq 0}\gamma^t T_{\tilde{\pi}}^t(x,a|x^\ast,a^\ast)$. Then if $\mathcal{Q}_T(x,a,x^\ast,a^\ast)=\{\pm q(x,a,x^\ast,a^\ast)\}\subset \mathcal{Q}$, the following holds,
\begin{align*}
  |w_\psi(x^\ast,a^\ast)-w_{x,a}^c(x^\ast,a^\ast)| \leq \frac{ \max_{\mathbf{q}\in\mathcal{Q}} L(\mathbf{q},w_\psi)}{d_{x,a}^\mu(x^\ast,a^\ast)}.
\end{align*}
\end{restatable}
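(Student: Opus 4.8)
The plan is to turn the saddle-point loss into a \emph{linear} functional of the scoring function whose coefficient vector is the residual of a set of balance equations, and then pick the scoring function that reads off a single coordinate of that residual. Throughout, fix the outer pair $(x,a)$ appearing in $L$, write $d_{x,a}^{w}\coloneqq w\odot d_{x,a}^{\mu}$ as in Proposition~\ref{prop:marginalizedop}, and also let $T_{\tilde\pi}$ denote the linear operator acting on functions by $(T_{\tilde\pi}\mathbf{q})(x',a')=\mathbb{E}_{x''\sim p(\cdot|x',a')}[\mathbf{q}(x'',\tilde\pi(x''))]=\sum_{x'',a''}T_{\tilde\pi}(x'',a''|x',a')\,\mathbf{q}(x'',a')$. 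Expanding the expectation in Eqn~\eqref{eq:td-estimate} and using the adjoint identity $\langle d_{x,a}^{w},T_{\tilde\pi}\mathbf{q}\rangle=\langle T_{\tilde\pi}^{\top}d_{x,a}^{w},\mathbf{q}\rangle$ gives $L(\mathbf{q},w)=\langle (1-\gamma)\delta_{x,a}+\gamma T_{\tilde\pi}^{\top}d_{x,a}^{w}-d_{x,a}^{w},\ \mathbf{q}\rangle=\langle e_{x,a}^{w},\mathbf{q}\rangle$, where $e_{x,a}^{w}$ is exactly the residual of the balance equation of the sub-stochastic kernel $T_{\tilde\pi}$ (the analogue of $E_{x,a}^{w}$ in Proposition~\ref{prop:marginalizedop} with $P^\pi$ replaced by $T_{\tilde\pi}$). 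In particular $L$ is linear in $\mathbf{q}$.

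Next I would identify the true weights. Starting from the closed form Eqn~\eqref{eq:marginalized-trace}, unrolling the $\mu$-trajectory and collapsing $\mu(a_s|x_s)\,c(x_s,a_s)=\tilde\pi(a_s|x_s)$ at each step shows $d_{x,a}^{w^c}=(1-\gamma)\sum_{t\ge0}\gamma^{t}(T_{\tilde\pi}^{\top})^{t}\delta_{x,a}=(1-\gamma)(I-\gamma T_{\tilde\pi}^{\top})^{-1}\delta_{x,a}$, hence $e_{x,a}^{w^c}=0$, i.e. $(1-\gamma)\delta_{x,a}=(I-\gamma T_{\tilde\pi}^{\top})d_{x,a}^{w^c}$. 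Substituting this back yields, for any $w_\psi$, $L(\mathbf{q},w_\psi)=\langle (I-\gamma T_{\tilde\pi}^{\top})(d_{x,a}^{w^c}-d_{x,a}^{w_\psi}),\,\mathbf{q}\rangle=\langle d_{x,a}^{w^c}-d_{x,a}^{w_\psi},\,(I-\gamma T_{\tilde\pi})\mathbf{q}\rangle$.

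Finally I would plug in the prescribed scoring function: $q(\cdot,\cdot,x^\ast,a^\ast)=\sum_{t\ge0}\gamma^{t}T_{\tilde\pi}^{t}(\cdot,\cdot\,|\,x^\ast,a^\ast)$ is precisely the column $(I-\gamma T_{\tilde\pi})^{-1}\delta_{x^\ast,a^\ast}$ of the discounted resolvent, so $(I-\gamma T_{\tilde\pi})\,q(\cdot,\cdot,x^\ast,a^\ast)=\delta_{x^\ast,a^\ast}$ and the inner product collapses to one coordinate: $L(q(\cdot,\cdot,x^\ast,a^\ast),w_\psi)=d_{x,a}^{w^c}(x^\ast,a^\ast)-d_{x,a}^{w_\psi}(x^\ast,a^\ast)=d_{x,a}^{\mu}(x^\ast,a^\ast)\big(w_{x,a}^{c}(x^\ast,a^\ast)-w_\psi(x^\ast,a^\ast)\big)$. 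By linearity $L(-q,w_\psi)=-L(q,w_\psi)$, and since $\{\pm q(\cdot,\cdot,x^\ast,a^\ast)\}=\mathcal{Q}_T(x,a,x^\ast,a^\ast)\subset\mathcal{Q}$, we get $\max_{\mathbf{q}\in\mathcal{Q}}L(\mathbf{q},w_\psi)\ge |L(q(\cdot,\cdot,x^\ast,a^\ast),w_\psi)|=d_{x,a}^{\mu}(x^\ast,a^\ast)\,|w_{x,a}^{c}(x^\ast,a^\ast)-w_\psi(x^\ast,a^\ast)|$; dividing by $d_{x,a}^{\mu}(x^\ast,a^\ast)>0$ (the case $d_{x,a}^{\mu}(x^\ast,a^\ast)=0$ being vacuous, as there $w_{x,a}^{c}(x^\ast,a^\ast)$ is set to $0$ and the stated bound is read as $\infty$) gives the claim.

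The main obstacle is the bookkeeping in the first two steps: keeping the adjoint $T_{\tilde\pi}^{\top}$ consistent throughout, and especially verifying that the purely combinatorial closed form Eqn~\eqref{eq:marginalized-trace} for $w_{x,a}^{c}$ (a $\mu$-trajectory weighted by cumulative traces) really equals the discounted occupancy of the sub-stochastic random walk driven by $T_{\tilde\pi}$, so that $e_{x,a}^{w^c}=0$ and the orientation of the scoring function is the one for which $(I-\gamma T_{\tilde\pi})$, rather than its transpose, annihilates onto $\delta_{x^\ast,a^\ast}$. Everything after that is a one-line argument combining linearity of $L$ in $\mathbf{q}$ with the $\pm q$ assumption.
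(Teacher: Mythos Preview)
Your proposal is correct and follows essentially the same route as the paper's own proof: both rewrite $L(\mathbf{q},w)$ as the inner product $\langle\mathbf{q},\,e_{x,a}^{w}\rangle$ with $e_{x,a}^{w}$ the $T_{\tilde\pi}$-balance residual, invoke $e_{x,a}^{w^c}=0$ (which the paper records separately as the Bellman identity $d_{x,a}^{w^c}=(1-\gamma)\delta_{x,a}+\gamma(P^{\tilde\pi})^{\top}d_{x,a}^{w^c}$), and then choose the resolvent column for $\mathbf{q}$ so that $(I-\gamma T_{\tilde\pi})\mathbf{q}=\delta_{x^\ast,a^\ast}$ isolates a single coordinate, after which the $\pm q$ assumption and division by $d_{x,a}^\mu(x^\ast,a^\ast)$ finish. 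The orientation concern you flag in your last paragraph is real---the working choice is $q(y,b)=\sum_{t\ge0}\gamma^{t}T_{\tilde\pi}^{t}(x^\ast,a^\ast\mid y,b)$, i.e.\ the $(x^\ast,a^\ast)$-column of $(I-\gamma T_{\tilde\pi})^{-1}$ rather than of its transpose---but the paper's proof handles this point only by reference to \citep{liu2018breaking}, so your level of care here already matches the original.
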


When $c_t=\frac{\pi(a_t|x_t)}{\mu(a_t|x_t)}$,  Proposition~\ref{prop:estimate} reduces to Theorem 6 in \citep{liu2018breaking} as a special case.
In practice, however, it might not be necessary to estimate accurately at each point $(x,a)$. This is because for practical purposes, we only need the downstream operator $\mathcal{M}^{w_\psi}$ to be contractive. The following section discusses how the objective can be directly used for optimizing the contraction rate.

\subsection{Optimizing for the contraction rate}

So far, we have discussed how to optimize the parameter $\psi$ such that $w_\psi$ matches a particular $w^c$ for some particular trace coefficient $c_t$. The following result shows that how one could directly minimize the local contraction rate $\eta_{x,a}^{w_\psi}$, without the need to commit to any trace coefficient $c_t$.

\begin{restatable}{proposition}{propcontraction}\label{prop:contraction}
Assume that $\mathcal{Q}_b=\{\pm \delta(x=x^\ast,a=a^\ast),\forall (x^\ast,a^\ast)\}\subset\mathcal{Q}$. When $c_t=\frac{\pi(a_t|x_t)}{\mu(a_t|x_t)}$ and $w^c=w^{\pi,\mu}$, the contraction rate of $\mathcal{M}^{w_\psi}$ is upper bounded as
$
  \eta_{x,a}^{w_\psi} \leq  \max_{\mathbf{q}\in \mathcal{Q}} L(\mathbf{q},w_\psi).
$
\end{restatable}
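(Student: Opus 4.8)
The plan is to reduce Proposition~\ref{prop:contraction} to the residual-error characterization of the contraction rate in Proposition~\ref{prop:marginalizedop}, by showing that $L(\mathbf{q},w_\psi)$ is exactly a linear functional of $\mathbf{q}$ whose coefficient vector is the residual error vector $E_{x,a}^{w_\psi}$. Since here $c_t=\pi(a_t|x_t)/\mu(a_t|x_t)$, the induced sub-probability measure is $\tilde{\pi}=\pi$, so that $\mathbb{E}_{x''\sim p(\cdot|x',a')}[q(x'',\tilde{\pi}(x''))]=(P^\pi q)(x',a')$. Writing $d\coloneqq d_{x,a}^{w_\psi}=w_\psi\odot d_{x,a}^\mu$, I would rewrite the two terms of Eqn~\eqref{eq:td-estimate} in vector form as $(1-\gamma)q(x,a)=q^T(1-\gamma)\delta_{x,a}$ and $\mathbb{E}_{(x',a')\sim d_{x,a}^\mu}[\Gamma(x',a')]=d^T(\gamma P^\pi q-q)=q^T(\gamma (P^\pi)^T d-d)$, the last step only transposing $P^\pi$. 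Adding these yields the key identity
\begin{align*}
    L(\mathbf{q},w_\psi)=q^T\big((1-\gamma)\delta_{x,a}+\gamma (P^\pi)^T d_{x,a}^{w_\psi}-d_{x,a}^{w_\psi}\big)=q^T E_{x,a}^{w_\psi}.
\end{align*}

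Given this identity, the bound follows by norm duality. By Proposition~\ref{prop:marginalizedop}, $\eta_{x,a}^{w_\psi}=(1-\gamma)^{-1}\|E_{x,a}^{w_\psi}\|_1=(1-\gamma)^{-1}s^T E_{x,a}^{w_\psi}$, where $s$ is the sign pattern of $E_{x,a}^{w_\psi}$. Taking $\mathbf{q}^\star=(1-\gamma)^{-1}s$, which is a signed combination of the basis scoring functions $\pm\delta(x=x^\ast,a=a^\ast)$ and hence lies in $\mathcal{Q}$ under the assumption $\mathcal{Q}_b\subset\mathcal{Q}$, gives $L(\mathbf{q}^\star,w_\psi)=(1-\gamma)^{-1}\|E_{x,a}^{w_\psi}\|_1=\eta_{x,a}^{w_\psi}$, so $\max_{\mathbf{q}\in\mathcal{Q}}L(\mathbf{q},w_\psi)\geq \eta_{x,a}^{w_\psi}$, as claimed. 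As a consistency check, when $w_\psi=w^{\pi,\mu}$ the residual vanishes, $E_{x,a}^{w_\psi}=0$, so $L(\mathbf{q},w_\psi)=0$ for all $\mathbf{q}$ and $\eta_{x,a}^{w_\psi}=0$.

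The algebra of the first step is routine; the only delicate point there is the specialization $\tilde{\pi}=\pi$, which is precisely what makes the forward operator $P^\pi$ acting on $\mathbf{q}$ inside $L$ transpose cleanly into the flow operator $(P^\pi)^T$ acting on $d_{x,a}^{w_\psi}$ inside $E_{x,a}^{w_\psi}$; for a general trace $c_t$ the coefficient vector would instead be the $\tilde{\pi}$-residual, the object controlled in Proposition~\ref{prop:estimate} rather than the $\pi$-residual of Proposition~\ref{prop:marginalizedop}. I expect the main obstacle to be the duality step: to lower-bound $\max_{\mathbf{q}\in\mathcal{Q}}q^T E_{x,a}^{w_\psi}$ by $(1-\gamma)^{-1}\|E_{x,a}^{w_\psi}\|_1$ rather than only by $\|E_{x,a}^{w_\psi}\|_\infty$, one must use that $\mathcal{Q}$ is rich enough to contain the appropriately scaled sign-pattern test function, not merely each signed indicator in isolation; the hypothesis $\mathcal{Q}_b\subset\mathcal{Q}$, together with $\mathcal{Q}$ being closed under the relevant signed linear combinations and rescalings, is what supplies exactly this.
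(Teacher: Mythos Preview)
Your approach is the same as the paper's: both rewrite $L(\mathbf{q},w_\psi)=\mathbf{q}^T E_{x,a}^{w_\psi}$ (this is the paper's Eqn~\eqref{eq:inner-product}, specialized to $\tilde{\pi}=\pi$) and then choose a sign-pattern test function to extract $\|E_{x,a}^{w_\psi}\|_1$. The paper simply takes $\mathbf{q}^\ast=\operatorname{sign}(E_{x,a}^{w_\psi})$ and asserts the resulting maximum equals $\eta_{x,a}^{w_\psi}$; you are in fact more careful than the paper both about the $(1-\gamma)^{-1}$ scaling and about the need for $\mathcal{Q}$ to contain the full sign vector rather than merely each signed indicator in isolation---issues the paper's own proof glosses over (its remark after the proposition about parameterizing the convex hull of $\mathcal{Q}_b$ indicates the intended reading of the hypothesis).
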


Even when the TD weights are not estimated perfectly, the estimated marginalized operator $\mathcal{M}^{w_\psi}$ are still properly defined contractive operators. The above result also implies that in the presence of estimation errors, $\mathcal{M}^{w_\psi}$ could still be contractive
even when the TD weights $w$ do not exactly match any weights $w^c$ for any particular trace coefficient $c_t$.
As a result, repeated application of the operator still converges to the correct value. This differs from how prior work interprets imperfect weight estimates (e.g., see \citep{liu2018breaking}) as incurring errors to the final prediction in the offline case.

\paragraph{Remarks on $\mathcal{Q}_b$.} Compared to $\mathcal{Q}_T(x,a,x^\ast,a^\ast)$,  $\mathcal{Q}_b$ is much more straightforward to parameterize in practice. For example, consider a neural network $f_\eta$ which takes $(x,a)$ as input and takes $\text{tanh}$ as the output activation: $\text{tanh}(f_\eta(x,a))\in [-1,1]$. When $f_\eta$ is expressive enough, it parameterizes the convex hull of $\mathcal{Q}_b$.

\paragraph{Other methods for marginalized estimations.} Recently, there is a growing interest in marginalized estimation for off-policy evaluation. Besides TD-learning methods, other notable examples include Fenchel-duality based methods \citep{nachum2019dualdice,nachum2019algaedice,nachum2020reinforcement} and kernel machines \citep{mousavi2020black}. In Appendix~\ref{appendix:fenchel-dual}, we derive a Fenchel-duality based approach to estimating TD weights, which naturally extends the original work \citep{nachum2019dualdice}.

\section{Experiments}
\label{sec:exp}

We start with a few tabular examples to build better understanding of the empirical properties of marginalized operators. For all tabular MDPs, we adopt the tabular representation when learning TD weights. Then we evaluate the potential benefits of marginalized operators when combined with multi-step deep RL algorithms. In this latter case, the TD weights are estimated with function approximations.

 \begin{figure*}[h]
    \centering
    \subfigure[Number of actions]{\includegraphics[keepaspectratio,width=.18\textwidth]{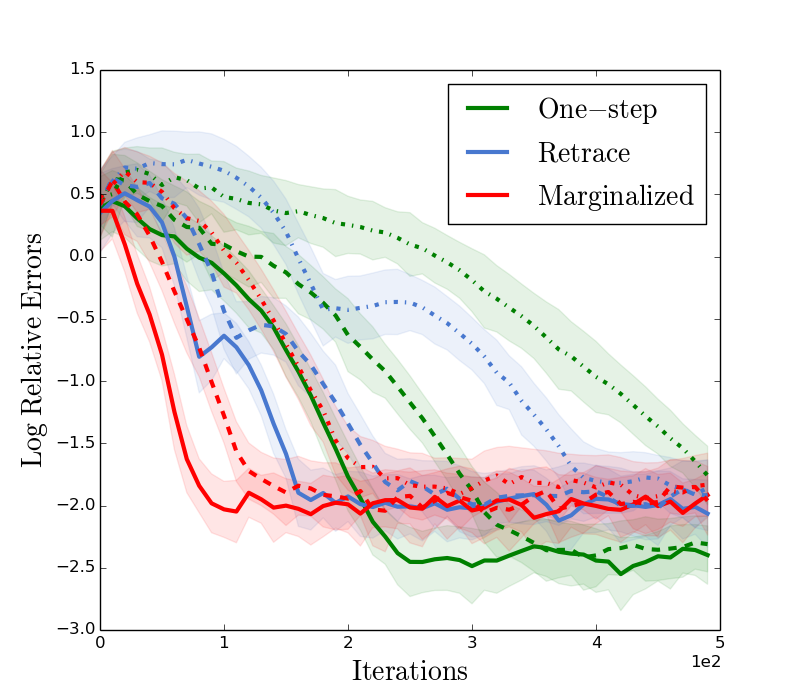}}
    \subfigure[Horizon]{\includegraphics[keepaspectratio,width=.18\textwidth]{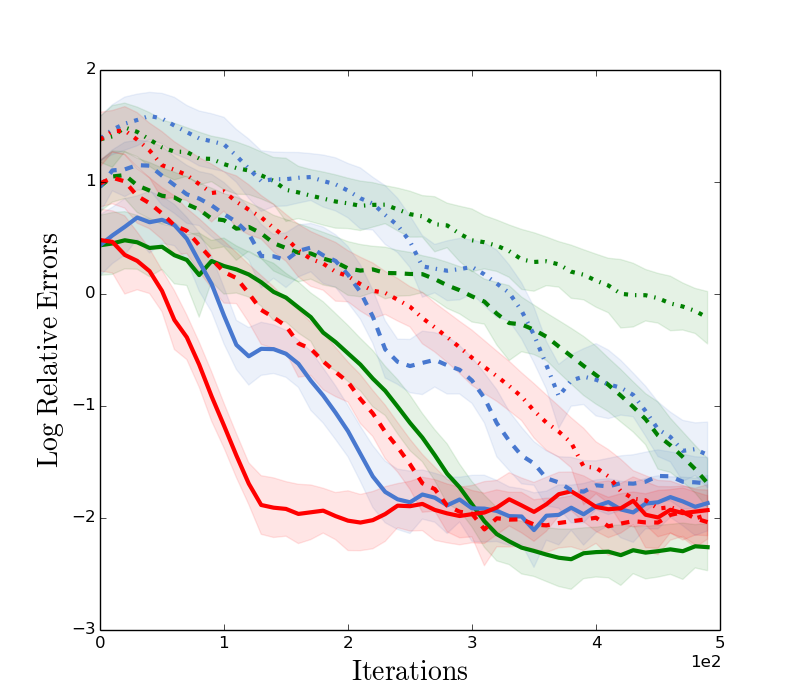}}
    \subfigure[Off-policy level]{\includegraphics[keepaspectratio,width=.18\textwidth]{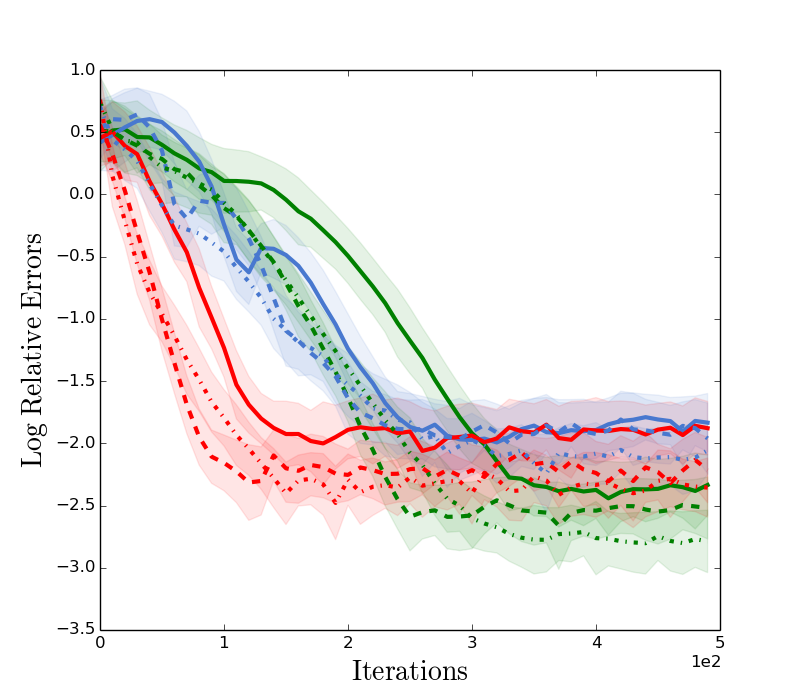}}
    \subfigure[Noise level]{\includegraphics[keepaspectratio,width=.18\textwidth]{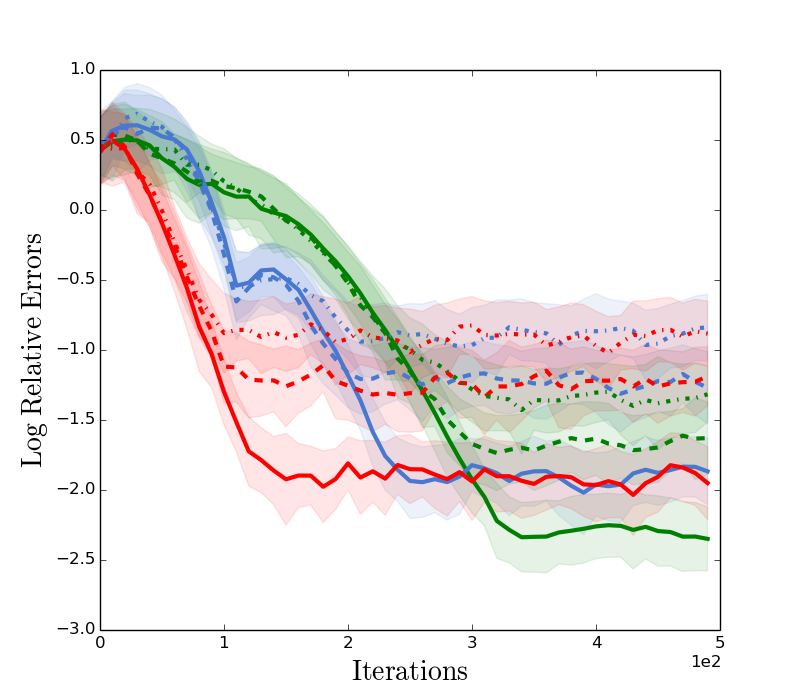}}
    \subfigure[Truncation level]{\includegraphics[keepaspectratio,width=.18\textwidth]{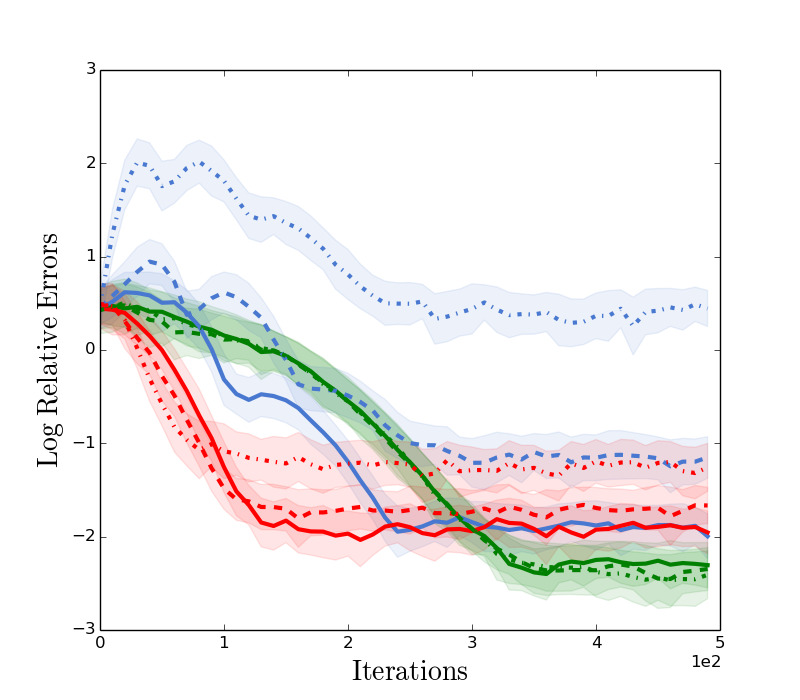}}
    \caption{Comparison of baseline operators on chain MDPs. Each curve is averaged over $100$ random seeds. The y-axis shows the relative estimation errors in log scale. The x-axis shows the number of iterations. In each plot, we vary one hyper-parameter of the MDP shown by curves with different line styles. The line styles and their corresponding hyper-parameters are shown in Table~\ref{table:chain}.}
    \label{fig:contraction}
\end{figure*}

\subsection{Chain MDP}
 
  Consider a chain MDP. The reward is zero unless at the rightmost state. At the rightmost state, the reward for action $a \in \mathcal{A}$ is $\mathcal{N}(\mu_a,\sigma^2)$ where $\mu_a=0$ for all but one action $a^\ast$ where $\mu_{a^\ast}=1$. The episode starts with the leftmost state. For all states, the transition goes to the state to its right with probability $1$, no matter what action is taken, until at the rightmost state when the episode terminates. Due to the dynamics of the problem, the episodic horizon is $T\equiv |\mathcal{X}|$. We consider the target policy $\pi$ as a deterministic policy of choosing action $a=a^\ast$ at all time. We start with a uniformly random policy $u$ and construct the behavior policy as $\mu=\beta \pi + (1-\beta)u$ where $\beta\in[0,1]$ controls the off-policy level. The problem is on-policy by setting $\beta=1$. For further details, see Appendix~\ref{appendix:experiment}.
 
 \begin{table}[]
    \vskip 0.1in
    \begin{center}
    \footnotesize
    \begin{sc}
    \begin{tabular}{c|c|c|c}\toprule[1.5pt]
        \bf Line styles & \bf  Solid & \bf Dashed & \bf Dashed-dot  \\\midrule
\# Actions $|\mathcal{A}|$ & $5$ & $10$ & $20$  \\
Horizon $T$ & $10$ & $20$ & $30$   \\
Off-policy $\beta$ & $0$ & $0.3$  & $0.7$   \\
Noise $\sigma$ & $0.1$ & $0.5$  & $1.0$   \\
Truncation $\bar{c}$ & $1$ & $2$  & $5$   \\
 \bottomrule[1.46pt]
    \end{tabular} \par
    \end{sc}
    \end{center}
    \caption{Parameter tables of the chain MDP. This table shows the line styles and their corresponding parameters in Figure~\ref{fig:contraction}.}
    \label{table:chain}
\end{table}
 
 To investigate the impact of different hyper-parameters on the experiment results, we vary the number of actions $|\mathcal{A}|$, the horizon $T$, the off-policy level $\beta$, the noise level $\sigma$ as they capture different aspects of the MDP. In each sub-plot we vary only one parameter and keep others at the default values. Curves with different line styles correspond to different values of a given parameter, shown in Table~\ref{table:chain}. The default hyper-parameters of the experiments are in the leftmost column of the table. We compare three baselines: \textbf{(1)} one-step operator  $\mathcal{T}^\pi Q(x,a)=r(x,a)+\gamma\mathbb{E}[Q(x',\pi(x'))]$, which we recall can be obtained as a special case of Retrace when $\bar{c}=0$; \textbf{(2)} Retrace ($c_t=\min\{\bar{c},\frac{\pi(a_t|x_t)}{\mu(a_t|x_t)}\}$ where $\bar{c}=1$ by default) and \textbf{(3)} marginalized operator $\mathcal{M}^{w_\psi}$ with $w_\psi\approx w^c$ with $c_t$ being the Retrace trace coefficient. Throughout the experiments, we measure the accuracy of the estimate as the relative error $\frac{|\hat{Q}-Q^\pi|}{|Q^\pi|}$, where $\hat{Q}$ is the estimate and $Q^\pi$ is the ground-truth Q-value.

 \paragraph{Results.} In Figure~\ref{fig:contraction}(a)-(b) shows that the increase in the number of actions or the horizon makes the evaluation more difficult: a large number of actions induces large variance in the estimation due to the increased ratio $\frac{\pi(a|x)}{\mu(a|x)}$; at the same time, long horizons require the propagation of values with more iterations. Overall, the marginalized operator converges faster than Retrace, which further outperforms the one-step operator. In Figure~\ref{fig:contraction}(c), we vary the off-policy level: all operators' performance increase as the problem interpolates from very off-policy to near on-policy. 
 
 While Figure~\ref{fig:contraction}(a)-(c) show the advantages of the marginalized operator, Figure~\ref{fig:contraction}(d) highlights potential limitations. As the noise level of the final reward increases, the marginalized operator and Retrace converge to a higher error rate than the one-step operator (similar observations are made in Figure~\ref{fig:contraction}(a)-(c)). We speculate that this is because as marginalized estimator and Retrace propagate downstream values more effectively, they also bootstrap noises faster. On the other hand, since the final reward is stochastic, we speculate that one-step operator's incremental back-up dampens the variance more significantly, leading to smaller asymptotic errors.
 The result implies that when there is much noise in the MDP, operators with short bootstrap horizons might be preferred.
 
 To compare Retrace and its marginalized counterpart, we vary the truncation level $\bar{c}$. Here, $\bar{c}$ controls the variance of the target values, as $\bar{c}=0$ reduces to the one-step operator while $\bar{c}=\infty$ reduces to full importance sampling. As shown in Figure~\ref{fig:contraction}(e), the performance of Retrace tends to be unstable when $\bar{c}$ is large; the marginalized operator, converges more stably though the asymptotic errors still increase as $\bar{c}$ increases.
 
\subsection{Open World}
Next, we consider the open world problem introduced in \citep{van2020expected}: it is a deterministic maze with $|\mathcal{X}|=n^2$ states with $n=10$. At each state, there are four actions $\mathcal{A}=\{\text{L},\text{U},\text{R},\text{D}\}$, each moving the agent to a neighboring state except when moving beyond the boundary, in which case the agent does not move. The agent always starts at the upper left corner. The reward is zero unless the agent transitions into the lower right corner terminal state, where $r=1$. 

We first consider off-policy evaluation. The agent estimates Q-function tables $\hat{Q}(x,a)$, but in Figure~\ref{fig:openworld} we color-code the value functions for all states computed as $\hat{V}(x)=\sum_a \pi(a|x)\hat{Q}(x,a)$. Here, the behavior policy $\mu$ is a uniformly random policy, while the target policy $\pi$ assigns all probability masses uniformly to $\{\text{D},\text{R}\}$. We compare three baselines: \textbf{(1) }one-step operator; \textbf{(2)} Retrace and \textbf{(3)} marginalized operator $\mathcal{M}^{w_\psi}$ with $w_\psi\approx w^c$. For further details and more results on policy optimization where off-policy evaluation is used as a subroutine, see Appendix~\ref{appendix:experiment}.

\paragraph{Results.} As observed in Figure~\ref{fig:openworld}, consistent with results in the chain MDP, the one-step operator propagates information rather slowly compared to the multi-step Retrace. When $\bar{c}=1$, the performance of Retrace and its marginalized counterpart is highly similar; however, when $\bar{c}=2$, Retrace becomes unstable. Indeed, moving from lower right to the upper left of the state space, the estimated values do not show any clear trend as in the case of $\bar{c}=1$, which implies potential divergence. On the other hand, the marginalized operator performs much more stably. All such observations imply that the marginalized operator might achieve an additional effect of variance reduction compared to Retrace. To better interpret the behavior of marginalized operators,we visualize the  TD weights $w_\psi$ in Appendix~\ref{appendix:experiment}. 
 \begin{figure}[h]
    \centering
    \includegraphics[width=.5\textwidth]{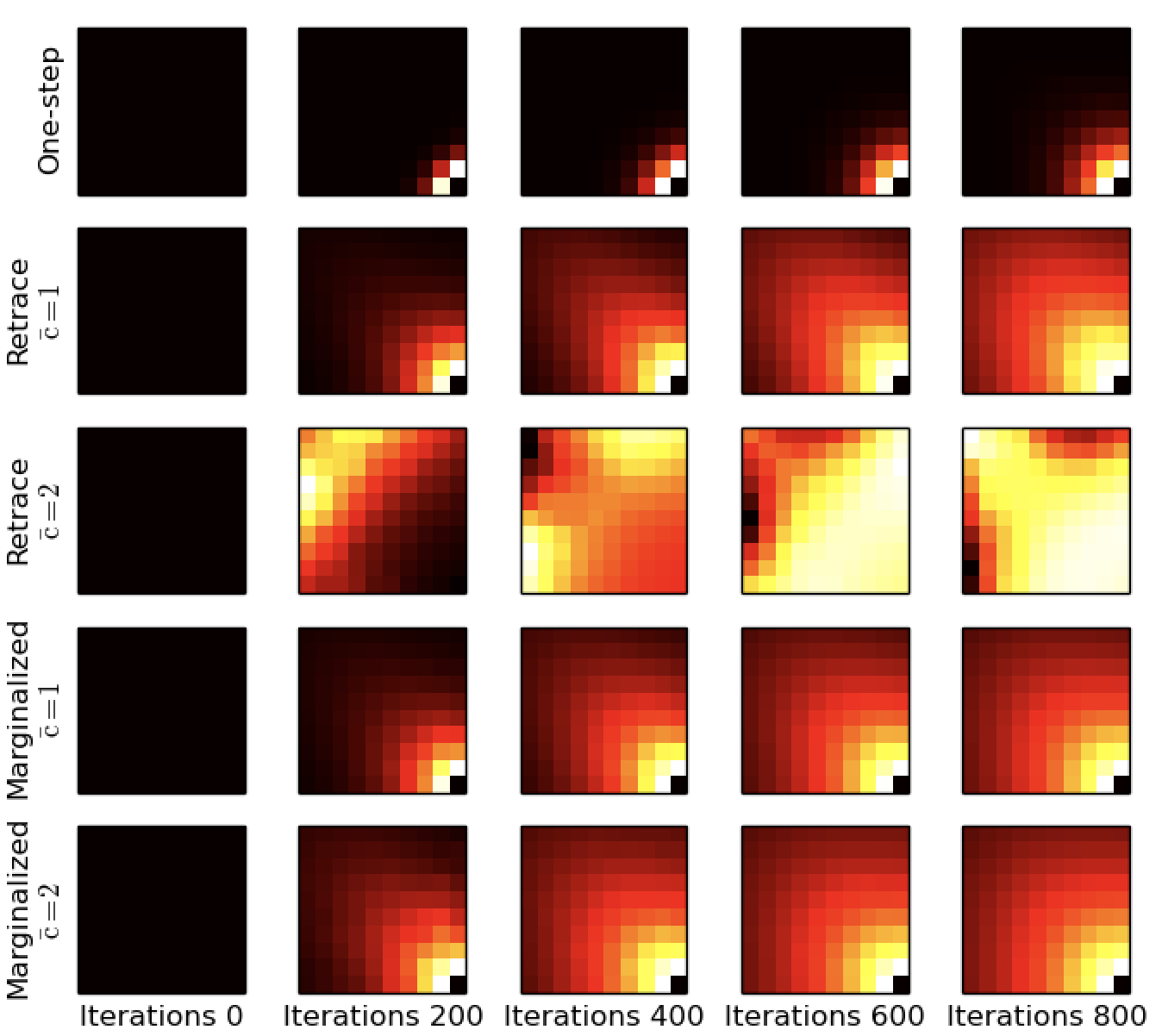}
    \caption{Comparison of operators on the Open World MDP. Each plot is averaged over $100$ runs. In each plot, moving from light yellow to red and further to black colors, the estimated values decrease. In Figure~\ref{fig:openworld}, going from the leftmost column to rightmost column, the number of iterations increases.}
    \label{fig:openworld}
\end{figure}

 \begin{figure}[h]
    \centering
    \subfigure[\textbf{Cheetah(D)} ]{\includegraphics[width=.22\textwidth]{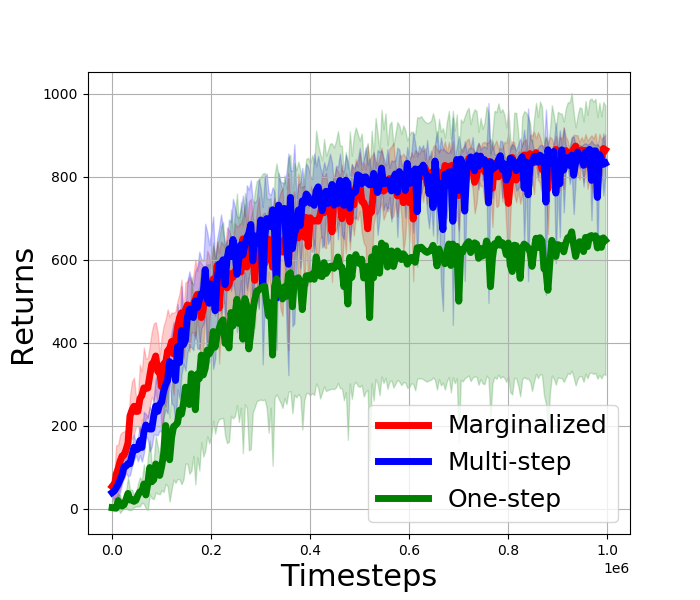}}
    \subfigure[\textbf{WalkerRun(D)}]{\includegraphics[width=.22\textwidth]{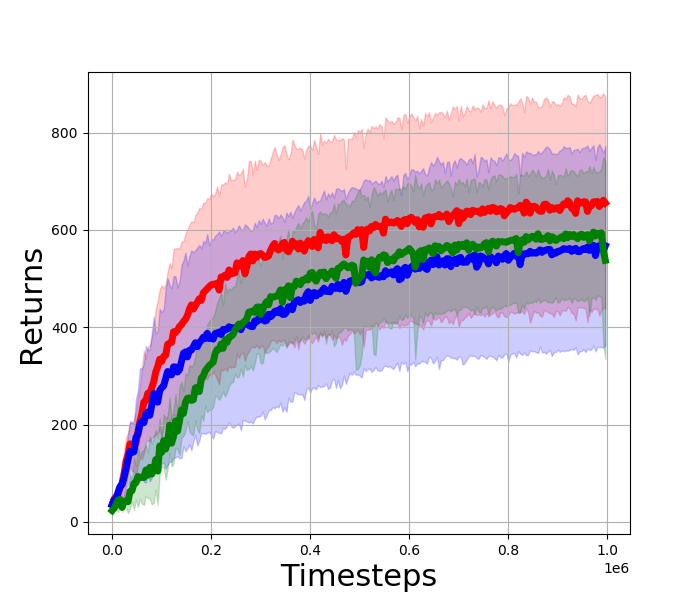}}
    \subfigure[\textbf{Cheetah(B)} ]{\includegraphics[width=.22\textwidth]{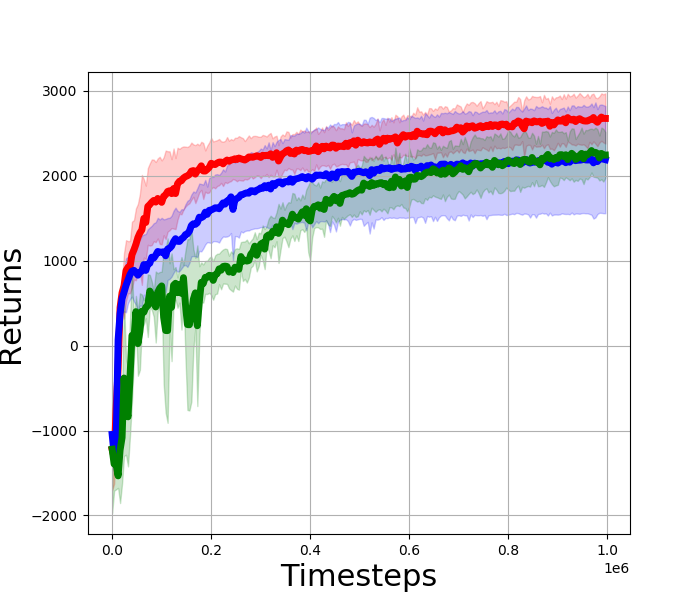}}
    \subfigure[\textbf{Ant(B)}]{\includegraphics[width=.22\textwidth]{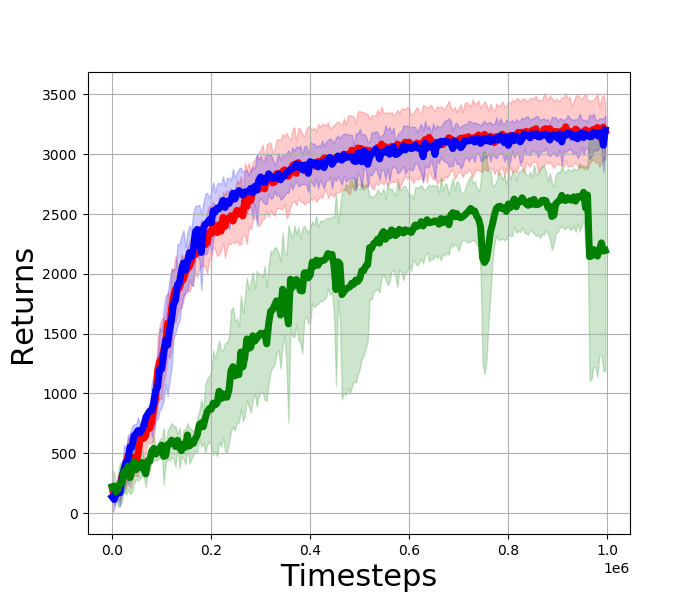}}
    \caption{Comparison of operators with deep RL implementations. Each curve is averaged over $5$ seeds. The x-axis shows the number of time steps at training time and y-axis shows the evaluation performance. The notation (D) and (B) denote the simulation backends of the testing environments. See Appendix~\ref{appendix:experiment} for further details on the experiments.}
    \label{fig:deeprl}
\end{figure}

\subsection{Deep RL experiments}
For high-dimensional state space (or high-dimensional action space), the estimation $w_\psi$ must be combined with more complex function approximation such as neural networks. We use simulated continuous control tasks as the test beds, and compare multi-step RL algorithms against the marginalized counterparts. We consider twin-delayed deep deterministic policy gradient (TD3) \citep{fujimoto2018addressing} as the base algorithm. TD3 implements a deterministic policy $\pi_\phi(x)$ and critic $Q_\theta(x,a))$, both parameterized by neural networks. The critic is updated by minimizing Bellman errors $\mathbb{E}\left\lbrack\left(Q_\theta(x,a)-Q_\text{target}(x,a)\right)^2\right\rbrack$ where $Q_\text{target}(x,a)$ is constructed by a few alternatives: one-step operator, multi-step operator and its equivalent marginalized operator. Take the one-step operator $\mathcal{T}^\pi$ as an example: given the transition $(x,a,r,x')$ the target is computed as a stochastic estimate to the exact back-up target
\begin{align*}
    Q_\text{target}(x,a) = r + Q_{\theta^-}(x',\pi(x')),
\end{align*}
where we can show $\mathbb{E}[Q_\text{target}(x,a)]=\mathcal{T}^\pi Q(x,a)$. Here, $\theta^-$ is the target network \citep{mnih2015human}. For multi-step operators, to construct stochastic estimates we follow the procedure in Section~\ref{sec:understand} to compute $\hat{\mathcal{R}}^c(x,a)$ and $\hat{\mathcal{M}}^{w}(x,a)$, which are unbiased to $\mathcal{R}^cQ(x,a)$ and $\mathcal{M}^wQ(x,a)$ respectively. In addition to the conventional actor-critic architecture, the marginalized operator also maintains an estimator $w_\psi$ parameterized by a neural network $\psi$ to approximate $w^c$ and applies the resulting operator $\mathcal{M}^{w_\psi}$. See Appendix~\ref{appendix:multistep} for further details on multi-step algorithms and Appendix~\ref{appendix:experiment} for more details on how to estimate $w^c$ with $w_\psi$ in practice with techniques introduced in Section~\ref{section:estimate}.

\paragraph{Results.} We show comparison in Figure~\ref{fig:deeprl},  where we evaluate algorithms over a subset of continuous control tasks \citep{brockman2016openai}. Overall, we find that multi-step updates might outperform or perform similarly as the one-step update, both in terms of learning speed and asymptotic performance; this is consistent with observations made in prior work on multi-step learning for value-based RL algorithms (see, e.g. \citep{kozuno2021revisiting}). Marginalized multi-step updates provide further marginal performance gains over the vanilla multi-step update in terms of the average performance. However, the variations across seeds are relatively large, indicating that the algorithm might be more unstable due to learning of TD weights.

 We discuss some challenges when combining marginalized operators in deep RL algorithms: training the density ratio estimator $w_\psi$ usually introduces computational overhead and potential instability to the overall algorithm. In large-scale distributed agents (see e.g., \citep{espeholt2018impala,kapturowski2018recurrent}), where the data throughput is large, it might not be worthwhile to incur the bias due to the marginalized estimation. It is of interest to further investigate how marginalized estimations can scale to such applications. 

\section{Conclusion}
We have proposed marginalized operators, a general class of off-policy evaluation operators. 
Marginalized operators bridge the conceptual gap between  multi-step operators and marginalized IS methods for off-policy evaluation. This provides a unified framework to reason about off-policy evaluation, with operator-based approaches and marginalized estimation methods as two seemingly separate yet compatible frameworks, and opens doors to new combinations of algorithmic techniques from both worlds. 

One interesting line of future work is to investigate the combination of marginalized operators with traditional multi-step operators, in a similar fashion as how marginalized IS combines with step-wise IS \citep{yuan2021sope}.

\paragraph{Acknowledgements.} The authors are thankful to constructive comments by anonymous reviewers. The authors thank Zheng Wen for reviewing prior drafts of this paper and are grateful for the supportive colleagues and great research environment at DeepMind.
\newpage
\bibliographystyle{apalike} 
\bibliography{main}

\clearpage

\onecolumn
\begin{appendix}

\section*{APPENDICES: Marginalized Operators for Off-Policy Reinforcement Learning}

\section{Proof of theoretical results}

\label{appendix:proof}

\propmarginalizedop*

\begin{proof}
We adopt the matrix notation to prove the result. See the proof for Proposition~\ref{prop:equivalent} for a detailed discussion on the matrix notation as well. By the definition of marginalized operators, define $W\in\mathbb{R}^{(\mathcal{X}\times\mathcal{A})\times(\mathcal{X}\times\mathcal{A})}$ as the tensor such that $W(x,a,x',a')=w_{x,a}^c(x',a')$.
Now, we can rewrite for any $Q\in\mathbb{R}^{\mathcal{X}\times\mathcal{A}}$,
\begin{align*}
    \mathcal{M}^w Q = Q + \left[(I-\gamma P^\mu)^{-1} \odot W \right] \left( R + \gamma P^\pi Q - Q\right),
\end{align*}
where $\odot$ denotes the element-wise product of two tensors with the same shape.
Let $D\coloneqq (I-\gamma P^\mu)^{-1}\odot W$.
For any $Q_1,Q_2$,
\begin{align*}
    \mathcal{M}^w Q_1 - \mathcal{M}^w Q_2 &= (\gamma DP^\pi -D + I)(Q_1-Q_2).
\end{align*}
Now, examine the $(x,a)$-th component of the vector $\mathcal{M}^w Q_1 - \mathcal{M}^w Q_2$. Through inspection, we can identify $d_{x,a}^w$ as the $(x,a)$-th row of $D$ scaled by $(1-\gamma)$. This implies 
\begin{align*}
     \mathcal{M}^w Q_1(x,a) - \mathcal{M}^w Q_2(x,a) = (1-\gamma)^{-1}\left((1-\gamma)\delta_{x,a}+\gamma (P^\pi)^Td_{x,a}^w - d_{x,a}^w\right)^T (Q_1-Q_2).
\end{align*}
If we define $E_{x,a}^w\coloneqq(1-\gamma)\delta_{x,a}+\gamma (P^\pi)^Td_{x,a}^w - d_{x,a}^w$, the above rewrites as 
\begin{align*}
    \mathcal{M}^w Q_1(x,a) - \mathcal{M}^w Q_2(x,a)=  (1-\gamma)^{-1}E_{x,a}^w (Q_1-Q_2).
\end{align*}
Hence the contraction rate of the operator is $\max_{x,a}(1-\gamma)^{-1}\left\lVert E_{x,a}^w \right\rVert_1$.

\end{proof}

\propequivalent*

\begin{proof}
We start by assuming $d_{x,a}^\mu(x',a')>0$ for all $(x,a),(x',a')$. We introduce matrix notations for the marginalized operator. For TD weights $w$, let $W$ be a matrix such that $W(x,a,x',a')=w_{x,a}(x',a')$. For any two matrices $A,B$ of the same shape, let $A \odot B$ be the element-wise product. Let $R\in\mathbb{R}^{\mathcal{X}\times\mathcal{A}}$ be the expected reward vector such that $R(x,a)=\bar{r}(x,a)$. By the definition of marginalized operators, we rewrite
\begin{align*}
    \mathcal{M}^w Q = Q + \left[(I-\gamma P^\mu)^{-1} \odot W \right] \left( R + \gamma P^\pi Q - Q\right).
\end{align*}
We first assume that the multi-step operator adopts Markovian step-wise traces. Let $P^{c\mu}$ be the transition matrix defined by the sub-probability measure $c\mu$ such that $P^{c\mu}(x,a,x',a')=p(x'|x,a)\mu(a'|x')c(x',a')$. We can write
\citep{munos2016safe}
\begin{align*}
    \mathcal{R}^c Q = Q + (I-\gamma P^{c\mu})^{-1} \left(R + \gamma P^\pi Q - Q\right).
\end{align*}
By letting $\mathcal{M}^w=\mathcal{R}^c$, we can see the following is a solution to $w$ 
\begin{align}
    W = (I-\gamma P^{c\mu})^{-1} / (I-\gamma P^\mu)^{-1}.
    \label{eq:matrix-equality}
\end{align}
Here, for two matrices $A,B$ of the same shape, we define $A/B$ to be the element-wise division, where it is required that all entries of $B$ are strictly positive. Note that $(I-\gamma P^{c\mu})^{-1}=\sum_{t=0}^\infty \left(\gamma P^{c\mu}\right)^t$ and $(I-\gamma P^\mu)^{-1}=\sum_{t=0}^\infty \left(\gamma P^{\mu}\right)^t$. This implies that the $(x,a,x',a')$ component of $(I-\gamma P^\mu)^{-1}$ is $(1-\gamma)^{-1}d_{x,a}^\mu(x',a')$, and the $(x,a,x',a')$ component of $(I-\gamma P^{c\mu})^{-1}$ is accordingly
\begin{align*}
     \frac{1-\gamma}{d_{x,a}^\mu(x^\prime,a^\prime)} \mathbb{E}_\mu\left\lbrack \sum_{t\geq 0} \gamma^t \left(\Pi_{1\leq s\leq t}c_s\right) \mathbb{I}[ x_t=x^\prime,a_t=a^\prime] \; \middle| \; x_0=x,a_0=a \right\rbrack.
\end{align*}
By reading off components from the matrix equality Eqn~\eqref{eq:matrix-equality}, we arrive at the desired result. 

When the traces are non-Markovian, the proof can be extended naturally. Let $c_t \in \mathbb{R}^{\mathcal{X}\times\mathcal{A}}$ be a vector such that $c_t(x,a)$ defines the step-wise trace at time $t$ after starting with $(x,a)$. The multi-step operator can be written as \begin{align}
    \mathcal{R}^c Q = Q + \sum_{t=0}^\infty  \left(\Pi_{0\leq s\leq t} P^{c_t\mu}\right) \left(R + \gamma P^\pi Q - Q\right).\label{eq:general-retrace}
\end{align}
We then arrive at the following sufficient condition for $\mathcal{M}^w=\mathcal{R}^c$ 
\begin{align*}
    W = \sum_{t=0}^\infty  \left(\Pi_{0\leq s\leq t}P^{c_t\mu}\right) / (I-\gamma P^\mu)^{-1}.
\end{align*}
By reading off components of both sides, we arrive at the desired conclusion.

Now in case for some $(x,a,x',a')$, $d_{x,a}^\mu(x',a')=0$, we can safely set $w_{x,a}^c(x',a')=0$. This is because $d_{x,a}^\mu(x',a')=0$ implies that there is zero probability that the agent arrives in $(x',a')$ starting from $(x,a)$, which means Bellman errors starting from $(x',a')$ are never computed as part of expectation which defines the operator. 

\paragraph{Technical conditions for the summation in Eqn~\eqref{eq:marginalized-trace}.} 
It is clear that there exists some step-wise traces $c_t$ such that the summation in Eqn~\eqref{eq:marginalized-trace} does not converge, e.g., $c_t=\frac{1}{\gamma}$. We impose a condition: (\textbf{C.1}) The step-wise traces $c_t$ should be such that $\mathcal{R}^c Q$ is finite under the definition in Eqn~\ref{eq:general-retrace}. Naturally, (\textbf{C.1}) implies that $\sum_{t=0}^\infty  \left(\Pi_{0\leq s\leq t}P^{c_t\mu}\right)$ is finite element-wise, which further implies that the infinite sum $\mathbb{E}_\mu\left\lbrack \sum_{t\geq 0} \gamma^t \left(\Pi_{1\leq s\leq t}c_s\right) \mathbb{I}[ x_t=x^\prime,a_t=a^\prime] \; \middle| \; x_0=x,a_0=a \right\rbrack$ is finite for all $(x,a),(x',a')$. Note that (\textbf{C.1}) is very weak and is valid for all situations of interest to us.

\end{proof}

\corospace*

\begin{proof}
Given any step-wise traces $c_t$ (Markovian or non-Markovian), we can compute corresponding marginalzied traces $w$ via Eq~\eqref{eq:marginalized-trace}. Then $\mathcal{R}^c=\mathcal{M}^w$ per Proposition~\ref{prop:equivalent}. This implies the desired result in the corollary.
\end{proof}

\propmore*
\begin{proof}

We start with some clarifications on notations. The TD weights $c_t$ could be either Markovian or non-Markovian. In the latter case, we require that $c_t$ is measurable w.r.t. $(x_s,a_s)_{s\leq t}$. Given a tuple of MDP, policy and discount factor $T=(r,p,\pi,\mu,\gamma)$, Note that here $c_t$ could be Markovian or non-Markovian. Let $\mathcal{C}_\text{markov}(T) \in \mathbb{R}^{\left(\mathcal{X}\times\mathcal{A}\right)\times \left(\mathcal{X}\times\mathcal{A}\right) }$ be the set of Markovian traces such that $\mathcal{R}^c$ is contractive; let $\mathcal{C}_\text{non-markov}(T)\in \left(\mathbb{R}^{\mathcal{X}\times\mathcal{A}}\right)^H$ be the set of non-Markovian traces such that $\mathcal{R}^c$ is contractive, where $H$ is horizon of the Markov chain induced by $\pi$ starting from any state-action pair. In general, we consider $H=\infty$. As such, for any $T$, $\mathcal{C}(T)=\mathcal{C}_\text{markov}(T)\cup \mathcal{C}_\text{non-markov}(T)$. Finally, let $\mathcal{W}(T)$ be the set of TD weights such that for any $w\in\mathcal{W}(T)$, any $\mathcal{M}^w\in\mathcal{W}(T)$ is contractive.

Per Proposition~\ref{prop:equivalent}, we can start with any $c\in\mathcal{C}(T)$ and project it into a $w\in\mathcal{W}(T)$. For convenience of the discussion, we denote such a projection as $f_{c\rightarrow w}^{T}$, where the $T$ denotes that this projection generally depends on $T$ (e.g., the expectation defined in Eqn~\eqref{eq:marginalized-trace} is computed with respect to the dynamics $p$). Formally, we can write $f_{c\rightarrow w}^{T}: \mathcal{C}(T)\mapsto \mathcal{W}(T)$.

We state a few important properties of $f_{c\rightarrow w}^{T}$ as lemmas.
\begin{restatable}{lemma}{lemma1}
\label{lemma:1}
When constrained $f_{c\rightarrow w}^{T}$ to Markovian traces, let the constrained mapping be $f_{c\rightarrow w}^{\mathcal{C}_\text{markov},T}: \mathcal{C}_\text{markov}(T) \mapsto W(T)$. There exists tuples $T$ such that $f_{c\rightarrow w}^{\mathcal{C}_\text{markov},T}$ is not surjective.
\end{restatable}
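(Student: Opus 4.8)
\textbf{Proof proposal for Lemma~\ref{lemma:1}.}

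The plan is to exhibit a concrete small MDP tuple $T=(r,p,\pi,\mu,\gamma)$ together with a TD-weight matrix $w\in\mathcal{W}(T)$ for which no Markovian trace $c$ can satisfy $f_{c\rightarrow w}^{\mathcal{C}_\text{markov},T}(c)=w$, thereby showing the constrained projection is not surjective. The key structural fact to exploit is the ``trace regeneration'' phenomenon sketched after Proposition~\ref{prop:more}: for a \emph{Markovian} trace the equivalent TD weight $w_{x,a}^c(x',a')$ comes (via Eqn~\eqref{eq:marginalized-trace}, or equivalently the matrix identity $W=(I-\gamma P^{c\mu})^{-1}/(I-\gamma P^\mu)^{-1}$ from the proof of Proposition~\ref{prop:equivalent}) from a product of per-step coefficients along the trajectory; if the cumulative trace is cut to zero at some reachable $(x^\ast,a^\ast)$, then every state-action pair reachable \emph{only} through $(x^\ast,a^\ast)$ also receives weight zero. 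So I would design $T$ with a "bottleneck" state-action pair through which all paths to some later pair must pass, and then pick $w$ that assigns zero weight to the bottleneck but nonzero weight to a downstream pair — this $w$ cannot be the image of any Markovian $c$.

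Concretely, I would take a deterministic chain-like MDP with states $x_0 \to x_1 \to x_2$, a single action or a small action set, so that from $(x_0,a_0)$ the agent deterministically visits $(x_1,a_1)$ and then $(x_2,a_2)$; here $(x_1,a_1)$ is the bottleneck. Choose $\gamma$ small enough (or rescale) that the candidate $w$ I write down is indeed contractive, i.e. lies in $\mathcal{W}(T)$: by Proposition~\ref{prop:marginalizedop} this just requires $\max_{x,a}\lVert E_{x,a}^w\rVert_1<1-\gamma$, which I can arrange since the $E$ vectors depend continuously on $w$ and I have freedom to keep $w$ close to the valid multi-step weights while still forcing the single zero entry. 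Then I set $w_{x_0,a_0}(x_1,a_1)=0$ and $w_{x_0,a_0}(x_2,a_2)>0$. If some Markovian $c$ produced this $w$, then by Eqn~\eqref{eq:marginalized-trace} the numerator $\mathbb{E}_\mu[\sum_t \gamma^t (\Pi_{1\le s\le t}c_s)\mathbb{I}[x_t=x_1,a_t=a_1]]$ must vanish; since the trajectory is deterministic and reaches $(x_1,a_1)$ at exactly time $1$ with multiplicative weight $c(x_1,a_1)$, this forces $c(x_1,a_1)=0$; but then the weight at $(x_2,a_2)$, which involves the factor $c(x_1,a_1)c(x_2,a_2)$, is also zero, contradicting $w_{x_0,a_0}(x_2,a_2)>0$. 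Hence no Markovian $c$ maps to $w$.

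I expect the main obstacle to be the bookkeeping needed to \emph{simultaneously} (i) keep $w\in\mathcal{W}(T)$, i.e. verify the contraction condition for the hand-crafted $w$ on the chosen $T$, and (ii) make sure the argument ruling out Markovian $c$ is airtight — in particular that one must also rule out non-Markovian-looking tricks within the Markovian class, and handle the degenerate possibility that $(x_1,a_1)$ is somehow not actually on the support of $d^\mu_{x_0,a_0}$ (which I avoid by making the dynamics deterministic and $\mu$ full-support on the relevant actions). A clean way to manage (i) is to start from the Markovian trace $c\equiv 0$ beyond the first step (the one-step operator, which is contractive) and perturb only the single downstream weight entry by a small $\epsilon>0$; continuity of $E_{x,a}^w$ in $w$ keeps contraction, while the zero at the bottleneck is preserved exactly, so the resulting $w$ is in $\mathcal{W}(T)$ but has no Markovian preimage. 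The full construction with explicit numbers is deferred to the appendix; here I have only outlined which MDP features (bottleneck, determinism, full-support behavior actions, small $\gamma$) make the separation work.
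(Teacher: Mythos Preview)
Your argument is correct, but it takes a different route from the paper's own proof of this lemma. Both you and the paper work on a deterministic chain MDP with $\pi=\mu$, but the specific witness $w\in\mathcal{W}(T)$ and the contradiction mechanism differ.

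The paper's Lemma~\ref{lemma:1} proof takes $w^\ast$ to be (a rescaling of) the one-step-operator weights, $w^\ast_{x,a}(x',a')=\delta_{x'=x,a'=a}/d_{x,a}^\mu(x',a')$, and then argues by \emph{matching coefficients} of $Q(x_{i+1},a_{i+1})$ in the expansions of $\mathcal{M}^{w^\ast}Q(x_i,a_i)$ and $\mathcal{R}^cQ(x_i,a_i)$ that any Markovian $c$ would have to satisfy incompatible constraints. Your proof instead keeps the one-step weights but perturbs a single \emph{downstream} entry $w_{x_0,a_0}(x_2,a_2)=\epsilon>0$ while holding the bottleneck entry $w_{x_0,a_0}(x_1,a_1)=0$; you then use the multiplicative structure of Markovian traces along a deterministic path to force $c(x_1,a_1)=0$ and hence $w^c_{x_0,a_0}(x_2,a_2)=0$, a contradiction. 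This is exactly the ``trace regeneration'' obstruction the paper itself invokes---but only later, in the proof of Lemma~\ref{lemma:3} for the non-Markovian case. So you are essentially importing the Lemma~\ref{lemma:3} mechanism to handle Lemma~\ref{lemma:1}.

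What each approach buys: your construction is arguably cleaner and more robust---the continuity-in-$\epsilon$ argument for membership in $\mathcal{W}(T)$ is transparent, and the impossibility step is a two-line computation once the chain is deterministic and non-recurrent. It also dovetails nicely with the paper's narrative around Proposition~\ref{prop:more}. The paper's coefficient-matching approach avoids the perturbation altogether by working with an explicit $w^\ast$, at the cost of a somewhat more delicate bookkeeping of which $Q$-terms appear in each expansion. Either route suffices; your proposal is sound as written, and the only thing left for the appendix is to pin down the chain (three states, one action, $x_2$ absorbing) and write out the two-line trace computation explicitly.
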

\begin{proof}
We prove by constructing a counterexample where for some $T$, there exists a $w \in \mathcal{W}(T)$ that cannot be obtained by first picking a Markovian trace $c\in\mathcal{C}_\text{markov}$ and then project it through $f_{c\rightarrow c}^{\mathcal{C}_\text{markov},T}$.

Consider a deterministic chain MDP with $N$ states $\{x_i\}_{i=1}^N$. All first $N-1$ states transition deterministically to the next state on the right. The last (rightmost) state is absorbing. Assume also $\pi=\mu$ to be both deterministic policy. Consider the TD weights $w^\ast$ such that its $(x,a,x',a')$ component is $w_{x,a}(x',a')=\frac{\delta_{x'=x,a'=a}}{d_{x,a}^\mu(x',a')}$. In this case, the operator $\mathcal{M}^{w^\ast}$ is exactly the one-step TD operator. Starting from state $x_i,1\leq i\leq N-1$, the marginalized operator is
\begin{align*}
    \mathcal{M}^w Q(x_i,a_i)= Q(x_i,a) + \left(r_i + \gamma Q\left(x_{i+1},\pi(x_{i+1})\right) - Q(x_i,a)\right).
\end{align*}
The step-wise operator is
\begin{align*}
    \mathcal{R}^c Q(x_i,a_i) =
    Q(x_i,a_i) + \sum_{i\leq j\leq N-1} \gamma^{j-i} \left(c_{i+1}...c_j\right)  \left(r_j + \gamma  Q\left(x_{j+1},\pi(x_{j+1})\right) - Q(x_j,a_j) - Q(x_j)  \right) + F(N),
\end{align*}
where $F(N)$ is some function of the last state. Now, we find $c$ such that $\mathcal{M}^{w^\ast}=\mathcal{R}^c$. By matching coefficients of the term $Q(x_{i+1},a_{i+1})$, it is necessary that $c(x_i,a_i)=1$. However, by setting $c(x_i,a_i)=1$, $\mathcal{R}^c\neq \mathcal{M}^{w^\ast}$. In other words, there does not exist a Markovian trace $c \in \mathbb{R}^{\mathcal{X}\times\mathcal{A}}$ such that $f_{c\rightarrow w}(c) = w^\ast$. This implies that under this setup, the mapping is not surjective.
\end{proof}

\begin{restatable}{lemma}{lemma2}
\label{lemma:2}
Let $W^+(T)=\{w\in W(T),w>0\}\subset W(T)$. For any $T$,  $f_{c\rightarrow w}^{T}$ is surjective to $W^+(T)$.
\end{restatable}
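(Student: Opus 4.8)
The plan is to show that every strictly positive TD weight $w \in W^+(T)$ can be realized as $f_{c\to w}^T(c)$ for some non-Markovian step-wise trace $c$, so that the restriction of $f_{c\to w}^T$ to $\mathcal{C}_{\text{non-markov}}(T)$ already surjects onto $W^+(T)$. The natural approach is to run the conversion formula of Proposition~\ref{prop:equivalent} backwards: given a target $w$, I want to find traces $c_t$ (possibly depending on the whole past) so that the marginalized trace matrix $W^c$ defined by Eqn~\eqref{eq:marginalized-trace} coincides with $w$. Recall from the proof of Proposition~\ref{prop:equivalent} that in matrix form the requirement is
\begin{align*}
    \sum_{t=0}^\infty \left(\Pi_{0\le s\le t} P^{c_s\mu}\right) = \left[(I-\gamma P^\mu)^{-1}\right] \odot W,
\end{align*}
where $W$ is the tensor with entries $w_{x,a}(x',a')$. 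So the strategy is to construct the left-hand side term-by-term: since the $(x,a)$-th row of $(I-\gamma P^\mu)^{-1}$ equals $(1-\gamma)^{-1} d_{x,a}^\mu$, and $d_{x,a}^\mu(x',a') = (1-\gamma)\sum_{t\ge 0}\gamma^t \mathbb{P}_\mu(x_t=x',a_t=a')$, I can read off, for each starting pair $(x,a)$, the desired ``weighted visitation'' $d_{x,a}^w(x',a') = w_{x,a}(x',a') d_{x,a}^\mu(x',a')$, and I need a non-Markovian trace process whose discounted expected cumulative-trace mass at each $(x',a')$ reconstructs exactly this vector.

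The key steps, in order, are: (1) fix a starting pair $(x,a)$ and expand both sides of the desired identity as power series in $t$; (2) define, for each trajectory prefix $(x_0,a_0,\dots,x_t,a_t)$ visited with positive probability under $\mu$, a candidate cumulative trace $C_t = \Pi_{1\le s\le t}c_s$ by induction on $t$, choosing $C_t$ so that the conditional-expectation identity of Proposition~\ref{prop:conditional-is}, namely $w_{x,a}^c(x',a') = \mathbb{E}_{\mu,\tau}[C_\tau \mid x_\tau=x',a_\tau=a']$, holds with the prescribed value $w_{x,a}(x',a')$ on the right; (3) since $w>0$ strictly and $d_{x,a}^\mu>0$ on the support, the per-step traces $c_s = C_s/C_{s-1}$ are well-defined and finite, so the construction does not divide by zero; (4) verify that the resulting $c$ lies in $\mathcal{C}_{\text{non-markov}}(T)$ — i.e.\ that $\mathcal{R}^c = \mathcal{M}^w$ is contractive — which is immediate because $w \in W^+(T) \subset W(T)$ means $\mathcal{M}^w$ is contractive by hypothesis and the two operators coincide by Proposition~\ref{prop:equivalent}; (5) conclude surjectivity onto $W^+(T)$.

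The main obstacle is step (2): there is a genuine degree of freedom in how one allocates cumulative trace mass across trajectory prefixes, and one has to commit to a concrete allocation rule that (a) is consistent across all starting pairs $(x,a)$ simultaneously (since a single non-Markovian trace function must work for every start), and (b) produces the exact conditional expectation $w_{x,a}(x',a')$ rather than merely matching it on average. The cleanest way to handle (a) is to exploit that the starting pair is itself the first coordinate of the trajectory, so ``$c$ as a function of the past including the start'' carries enough information; the cleanest way to handle (b) is to make $c$ depend on the past in a way that forces $C_\tau$ to be \emph{constant} on the event $\{x_\tau = x', a_\tau = a'\}$ for each $\tau$ — for instance by letting $c_s$ depend only on $(x_0,a_0)$ and the number of steps $s$ together with the visited pair, engineered so that every length-$t$ path from $(x,a)$ to a fixed $(x',a')$ carries the same cumulative trace. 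One should double-check that such a path-independent assignment can always be met given only the marginal constraint; if not, the fallback is to allow $c_s$ to depend on the full prefix and solve the resulting (underdetermined, hence solvable) linear system for $C_t$ prefix-by-prefix, which is where the strict positivity of $w$ guarantees a valid (finite, nonzero) solution. I would also note that the non-strict boundary case $w\ge 0$ with some zero entries is exactly what Lemma~\ref{lemma:1} shows can fail, so restricting to $W^+(T)$ is essential and the proof should make clear where positivity is used.
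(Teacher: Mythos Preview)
Your high-level plan matches the paper's: build a non-Markovian trace $c$ whose cumulative product $C_t=\Pi_{1\le s\le t}c_s$ reproduces the prescribed $w$, use strict positivity to make the per-step ratios $c_s=C_s/C_{s-1}$ well-defined, and inherit contractivity from $w\in W(T)$. Steps (3)--(5) are fine.

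Where you over-complicate things is step (2). You correctly identify that making $C_\tau$ \emph{constant} on the event $\{x_\tau=x',a_\tau=a'\}$ would do the job, but then hedge (``one should double-check\ldots if not, the fallback is\ldots'') and retreat to an underdetermined linear system. No fallback is needed: the paper simply sets
\[
C_t \;=\; w_{x_0,a_0}(x_t,a_t),\qquad\text{i.e.}\qquad c_t \;=\; \frac{w_{x_0,a_0}(x_t,a_t)}{w_{x_0,a_0}(x_{t-1},a_{t-1})},
\]
with the convention $C_{-1}=1$. This depends on the prefix only through $(x_0,a_0,x_{t-1},a_{t-1},x_t,a_t)$, so it is a legitimate non-Markovian trace. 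Plugging into Eqn~\eqref{eq:marginalized-trace} and using $w_{x,a}(x_t,a_t)\,\mathbb{I}[x_t=x',a_t=a']=w_{x,a}(x',a')\,\mathbb{I}[x_t=x',a_t=a']$ gives
\[
w_{x,a}^c(x',a')=\frac{(1-\gamma)\,w_{x,a}(x',a')}{d_{x,a}^\mu(x',a')}\,\mathbb{E}_\mu\Big[\sum_{t\ge 0}\gamma^t\,\mathbb{I}[x_t=x',a_t=a']\Big]=w_{x,a}(x',a'),
\]
so $f_{c\to w}^T(c)=w$ on the nose. Strict positivity of $w$ is used exactly once, to ensure the denominator $w_{x_0,a_0}(x_{t-1},a_{t-1})$ never vanishes. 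Your worry about whether a path-independent assignment ``can always be met given only the marginal constraint'' is unfounded: the telescoping construction meets it trivially, and your proposed dependence on the step index $s$ is unnecessary.
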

\begin{proof}
Intuitively, for those TD weights $w$ that could not be realized by Markovian step-wise traces, we need to construct non-Markovian step-wise traces $c_t$ to construct them, such that $f_{c\rightarrow w}^{T}(c)=w$.

We construct non-Markovian step-wise traces as follows. Given $w \in \mathcal{W}^+(T)$. starting from $(x,a)$, the step-wise coefficient at time $t\geq 0$ is computed as
\begin{align*}
    c_t \coloneqq \frac{w_{x,a}(x_t,a_t)}{w_{x,a}(x_{t-1},a_{t-1})},
\end{align*}
where we define $w_{x,a}(x_t,a_t)=1$ for $t=-1$. We can show that by such a construction, $(\Pi_{1\leq s\leq t} c_s) = w_{x,a}(x_t,a_t)$ and as such 
\begin{align*}
    f_{c\rightarrow w}(T)(c) = w.
\end{align*}
\end{proof}

\begin{restatable}{lemma}{lemma3}
\label{lemma:3}
There exists $T$, such that $f_{c\rightarrow w}(T)$ is \textbf{not} surjective to $W(T)$.
\end{restatable}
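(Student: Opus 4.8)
\textbf{Proof plan for Lemma~\ref{lemma:3}.}
The plan is to exhibit a tuple $T$ and a TD weight matrix $w\in\mathcal{W}(T)$ with the ``regenerating traces'' property described in the discussion following Proposition~\ref{prop:more}: a $w$ that vanishes at some reachable state-action pair but is nonzero at a strictly later pair on the same trajectory. Concretely, I would reuse the deterministic chain MDP from the proof of Lemma~\ref{lemma:1}: $N$ states $x_1,\dots,x_N$ with $x_i$ transitioning deterministically to $x_{i+1}$, the last state absorbing, and $\pi=\mu$ deterministic (so there is a single action per state along the trajectory and $d_{x_1,a_1}^\mu(x_i,a_i)=(1-\gamma)\gamma^{i-1}$ up to normalization). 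Then define $w^{(0)}$ by $w_{x_1,a_1}(x_i,a_i)$ equal to, say, $1$ for $i\in\{1,3\}$, equal to $0$ for $i=2$, and chosen small enough (or zero) for $i\ge 4$ so that the residual error vector $E_{x_1,a_1}^{w^{(0)}}$ has $L^1$-norm below $1-\gamma$; for all other starting pairs $(x_j,a_j)$ with $j\ge 2$, set $w_{x_j,a_j}$ to the one-step weights $\delta_{x_j,a_j}/d_{x_j,a_j}^\mu$, which are contractive. Verifying $w^{(0)}\in\mathcal{W}(T)$ is then a short computation of $\eta_{x,a}^{w^{(0)}}$ via Eqn~\eqref{eq:local-contraction}.

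Next I would argue that $w^{(0)}$ is not in the image of $f_{c\rightarrow w}^{T}$. Suppose, for contradiction, there is $c\in\mathcal{C}(T)$ (Markovian or not) with $f_{c\rightarrow w}^{T}(c)=w^{(0)}$. By Proposition~\ref{prop:conditional-is} (or directly by Eqn~\eqref{eq:marginalized-trace}), along the single deterministic trajectory starting at $(x_1,a_1)$ we have $w_{x_1,a_1}^{c}(x_i,a_i)=\Pi_{1\le s\le i-1}c_s$, since the conditional expectation collapses to a single deterministic value (the trajectory is deterministic, so conditioning on $x_\tau=x_i,a_\tau=a_i$ pins down $\tau=i-1$ and the entire prefix). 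Hence $\Pi_{1\le s\le 2}c_s = w_{x_1,a_1}(x_3,a_3)=1$ while $\Pi_{1\le s\le 1}c_s = c_1 = w_{x_1,a_1}(x_2,a_2)=0$; but $c_1=0$ forces the product $c_1c_2=0\ne 1$, a contradiction. Therefore no such $c$ exists, so $f_{c\rightarrow w}^{T}$ is not surjective onto $\mathcal{W}(T)$, which is exactly the claim.

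The cleanest way to package this is: (1) state the MDP instance and the candidate weight $w^{(0)}$; (2) compute $\eta_{x,a}^{w^{(0)}}$ for every starting pair and check it is $<1$, invoking Proposition~\ref{prop:marginalizedop} to conclude $w^{(0)}\in\mathcal{W}(T)$ — here I would make the tail values $w_{x_1,a_1}(x_i,a_i)$ for $i\ge 4$ exactly $0$ to keep the norm bookkeeping trivial, so the residual error is supported on $O(1)$ coordinates; (3) run the deterministic-trajectory product argument above to rule out any $c$. I expect step (2) — choosing the weights so that the residual $L^1$-norm is provably $<1-\gamma$ — to be the main technical obstacle, since a careless choice of the nonzero pattern makes $E_{x_1,a_1}^{w^{(0)}}$ spread out and hard to bound; keeping the support minimal and $\gamma$ sufficiently close to $1$ if needed circumvents this. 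The non-surjectivity argument itself is essentially forced once the ``regeneration'' pattern ($0$ then nonzero) is in place, because multiplicativity of step-wise traces cannot recover from a zero, and in the deterministic chain the conditional-expectation smoothing of Proposition~\ref{prop:conditional-is} does nothing to help.
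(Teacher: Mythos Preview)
Your approach is essentially the paper's: construct a TD weight that is zero at an intermediate reachable pair and positive at a strictly later one, then argue that multiplicativity of cumulative traces cannot reproduce this pattern; the paper instantiates this with $N=5$, $|\mathcal{A}|=2$, $\pi=\mu$ uniformly random, $\gamma=0.8$, and $d_{x_1,a_1}(x_1,a_1)=0.2$, $d_{x_1,a_1}(x_3,a_3)=0.01$, while your fully deterministic chain makes the impossibility step even more direct (the conditioning in Eqn~\eqref{eq:marginalized-trace} collapses to a single product, so $c_1=0$ is forced literally). One correction to your step-(2) discussion: the fallback ``$\gamma$ close to $1$'' points the wrong way --- with the pattern $w_{x_1,a_1}=(1,0,1,0,\dots)$ one gets $\eta_{x_1,a_1}^{w^{(0)}}=\gamma+\gamma^2+\gamma^3$, which exceeds $1$ once $\gamma\gtrsim 0.54$; instead take $\gamma$ small, or (mirroring the paper) make the weight at $x_3$ a small $\epsilon>0$, giving $\eta_{x_1,a_1}^{w^{(0)}}=\gamma+\epsilon(\gamma^2+\gamma^3)<1$ for any $\gamma\in[0,1)$.
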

\begin{proof}

We construct a counterexample of $T$. In this case, we seek TD weights $w \in W(r,p,\pi,\mu,\gamma)$ such that we cannot find $c\in \mathcal{C}(T)$ such that $f_{c\rightarrow w}(T)(c)=w$. Notably, in this case, $\mathcal{C}(T)$ should contain all step-wise traces, both Markovian and non-Markovian ones.

Consider a deterministic chain MDP with $|\mathcal{X}|=N=5$ states $\{x_i\}_{i=1}^N$ and $|\mathcal{A}|=2$ actions $\{a_i\}_{i=1}^2$. All first $N-1$ states transition deterministically to the next state on the right. The last (rightmost) state is absorbing. Assume that $\pi=\mu$ are both uniformly random. Finally, let $\gamma=0.8$.

Consider the contraction property of $\mathcal{M}^w$ starting from the state $(x_1,a_1)$. We can show that by defining $d_{x_1,a_1}(x',a')=0$ except
\begin{align*}
    d_{x_1,a_1}(x_1,a_1)=0.2, d_{x_1,a_1}(x_3,a_3)=0.01.
\end{align*}
Then we set $w_{x_1,a_1}=\frac{d_{x_1,a_1}}{d_{x_1,a_1}^\mu}$ (element-wise division). We can show that 
\begin{align*}
   \left| \mathcal{M}^w Q_1 - \mathcal{M}^w Q_2 \right| (x_1,a_1) \leq 0.89 \left\lVert Q_1 - Q_2 \right\rVert_\infty.
\end{align*}
This implies that the resulting operator $\mathcal{M}^w$ is contractive for the pair $(x_1,a_1)$. We can complete the definition of $w$ for other state-action pairs $(x,a)$ by specifying $w_{x,a}$ properly. Concretely, as an example, we might set $w_{x,a}^{x',a'}=\frac{\delta_{x'=x,a'=a}}{d_{x,a}^\mu(x',a')}$ so that $\left| \mathcal{M}^w Q_1 -  \mathcal{M}^w Q_2 \right|(x',a') \leq \gamma\left\lVert Q_1 - Q_2 \right\rVert_\infty = 0.8\left\lVert Q_1 - Q_2 \right\rVert_\infty $ for any $(x',a')\neq (x_1,a_1)$. Overall, the operator is contractive
\begin{align*}
    \left\lVert \mathcal{M}^w Q_1 -  \mathcal{M}^w Q_2 \right\lVert_\infty \leq 0.89 \left\lVert Q_1 - Q_2 \right\rVert_\infty.
\end{align*}

Now, we argue why this particular choice of $w_{x_1,a_1}$ cannot be realized by any step-wise traces. Note that since by construction, $d_{x_2,a}=0,\forall a\in\{a_1,a_2\}$. This implies that starting from $(x_1,a_1)$, if we seek any step-wise traces which are equivalent to $d_{x_1,a_1}$, they must cut the traces at $(x_2,a)$. A direct consequence of this result is that $c(x_2,a)=0$ for both Markovian or non-Markovian traces. However, since the traces are multiplicative, this further means that the cumulative product of traces at $(x_3,a)$ would be zero. This does not replicate the behavior of $d_{x_1,a_1}$, whose entry at $(x_3,a_3)$ is constructed to be $0.01>0$.

To summarize, the above example shows that under this particular set of $T$, there exists a $w$ that cannot be realized by any step-wise traces through the mapping $f_{c\rightarrow w}(T)$. Hence the result is concluded.
\end{proof}

\begin{restatable}{lemma}{lemma4}
\label{lemma:4}
There exists $T$, such that $f_{c\rightarrow w}(T)$ is surjective to $W(T)$.
\end{restatable}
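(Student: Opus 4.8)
\textbf{Proof proposal for Lemma~\ref{lemma:4}.}

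The plan is to exhibit a concrete tuple $T=(r,p,\pi,\mu,\gamma)$ for which the mapping $f_{c\rightarrow w}^{T}$ hits all of $\mathcal{W}(T)$. The cleanest route is to engineer a tuple where the only obstruction identified in Lemma~\ref{lemma:3} — namely that multiplicative traces force the cumulative coefficient to stay zero once a state with $d^\mu_{x,a}=0$ downstream is hit, and the box-relaxation TD weights may ``regenerate'' positive mass — simply cannot arise. If every reachable trajectory visits each state-action pair along its path with strictly positive discounted-visitation probability under $\mu$, and if moreover the contractive marginalized operators on this tuple necessarily have strictly positive TD weights on their support, then $\mathcal{W}(T)=\mathcal{W}^+(T)$ (modulo the harmless zero entries outside the support, which are handled by the second clause of Proposition~\ref{prop:equivalent}), and Lemma~\ref{lemma:2} already gives surjectivity onto $\mathcal{W}^+(T)$.

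Concretely, I would take the simplest possible tuple: a single absorbing state (or a two-state deterministic chain with only one non-absorbing state), a single action, $\pi=\mu$ trivially, and any $\gamma\in[0,1)$. First I would compute $\mathcal{W}(T)$ explicitly in this case. Here the state-action space has essentially one relevant pair $(x,a)$; the balance equation $(1-\gamma)\delta_{x,a}+\gamma(P^\pi)^T d - d=0$ is a one-dimensional equation whose unique solution is $d^\pi_{x,a}=\delta_{x,a}$, and $d^\mu_{x,a}=\delta_{x,a}$ as well. The residual error vector is $E^w_{x,a}=(1-\gamma)\delta_{x,a}+\gamma(P^\pi)^T d^w_{x,a}-d^w_{x,a}$ with $d^w_{x,a}=w_{x,a}\odot d^\mu_{x,a}$, a single scalar $w\ge 0$ up to irrelevant coordinates. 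A short computation gives $\eta^w_{x,a}=(1-\gamma)^{-1}|E^w_{x,a}|=|1-w(1-\gamma)\cdot\text{(something)}|$-type expression, bounded below by a constant times $|1-w|$, so $\eta^w_{x,a}<1$ forces $w$ into an interval bounded away from $0$; in particular $\mathcal{W}(T)\subseteq\mathcal{W}^+(T)$. Then I invoke Lemma~\ref{lemma:2}: $f_{c\rightarrow w}^{T}$ is surjective onto $\mathcal{W}^+(T)\supseteq\mathcal{W}(T)$, hence onto $\mathcal{W}(T)$. (Alternatively, in this trivial tuple one can bypass Lemma~\ref{lemma:2} entirely and directly construct the non-Markovian trace $c_t = w_{x,a}(x_t,a_t)/w_{x,a}(x_{t-1},a_{t-1})$ as in its proof, verifying $\Pi_{1\le s\le t}c_s = w_{x,a}(x_t,a_t)$ by telescoping.)

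The main obstacle is choosing a tuple $T$ that is genuinely non-degenerate enough to be a convincing complement to Lemma~\ref{lemma:3} while still guaranteeing $\mathcal{W}(T)=\mathcal{W}^+(T)$; the delicate point is ruling out contractive $w$ with a zero entry somewhere on the reachable support. The safe way to handle this is to work with an MDP in which, from the fixed starting pair, every state-action pair is either unreachable ($d^\mu=0$, so the weight is forced to $0$ and that coordinate is irrelevant by Proposition~\ref{prop:equivalent}) or lies on the unique trajectory and must carry positive weight for the residual-error $L^1$ norm to stay below $1-\gamma$ — a short case analysis on the telescoping structure of $E^w_{x,a}$ along a deterministic chain gives exactly this. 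Once that bookkeeping is done, the result follows immediately from Lemma~\ref{lemma:2}, completing the proof of Proposition~\ref{prop:more}, part (ii), and thereby Lemma~\ref{lemma:4}.
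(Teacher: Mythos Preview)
Your high-level strategy --- exhibit a trivial tuple with $\mathcal{W}(T)=\mathcal{W}^+(T)$ and then appeal to Lemma~\ref{lemma:2} --- is different from the paper's, which takes a two-state \emph{cycle} (one action, $P^\pi=\bigl(\begin{smallmatrix}0&1\\1&0\end{smallmatrix}\bigr)$) and directly solves for a finite non-Markovian trace $(c_1,c_2,c_3,0,0,\dots)$ by matching the coefficients of $\Delta_1$ and $\Delta_2$ in the expansions of $\mathcal{M}^w Q(x_1,a_1)$ and $\mathcal{R}^c Q(x_1,a_1)$. The reduction to Lemma~\ref{lemma:2} is more modular; the paper's explicit coefficient-matching is self-contained and does not rely on the telescoping construction.

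Two concrete steps in your execution fail, however. First, the ``alternative'' telescoping verification is wrong: $c_t=w(x_t)/w(x_{t-1})$ gives $\Pi_{1\le s\le t}c_s=w(x_t)/w(x_0)$, not $w(x_t)$. In the single-state MDP every $x_t=x$, so this product is identically $1$ and you recover only $w^c=1$, not an arbitrary target $w\in(0,2)$. (A construction that does work there is $c_1=(w-(1-\gamma))/(\gamma(1-\gamma))$ with $c_t=0$ for $t\ge2$.) Second, the two-state chain $x_1\to x_2$ with $x_2$ absorbing is not a valid example at all: the starting pair $(x_1,a)$ is visited only at $t=0$, forcing $w^c_{x_1,a}(x_1,a)=1$ for \emph{every} trace $c$, while $\mathcal{W}(T)$ contains weights with $w_1\neq 1$ (e.g.\ $w_1=w_2=3/2$ gives local rate $|1-w_1|+\gamma|w_1-w_2|=1/2$). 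The map is simply not surjective on that tuple. This is precisely why the paper uses a cycle rather than a chain: each state is revisited infinitely often, so later trace coefficients can adjust the weight at the starting pair.
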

\begin{proof}
Consider a special case where we have $|\mathcal{X}|=2$ states and $|\mathcal{A}|=1$ action. Let $x_1,x_2$ be the states and $a_1$ the single action. Assume also all rewards are deterministic. As such, the policy $\pi,\mu$ are trivial as $\pi(a_1|x)=\mu(a_1|x)=1,\forall x$. The transition matrix is
\begin{align*}
P^\pi = 
    \begin{pmatrix}
    0 & 1\\ 1 & 0
    \end{pmatrix}.
\end{align*}
With the above setup, consider any marginalized trace at $(x_1,a_1)$, $w_{x_1,a_1}\in W(r,p,\pi,\mu,\gamma)$. Note that $w_{x_1,a_1} \in\mathbb{R}^2$. Let $c_t,t\geq 1$ be the non-Markovian step-wise trace starting from $(x_1,a_1)$. Define the one-step Bellman errors $\Delta_1\coloneqq r_1 + \gamma Q(x_2,a_1)-Q(x_1,a_1), \Delta_2 \coloneqq r_2 + \gamma Q(x_1,a_1)-Q(x_2,a_1)$.

The marginalized operator evaluated at $(x_1,a_1)$ is
\begin{align*}
\mathcal{M}^w Q(x_1,a_1) = Q(x_1,a_1) + \left(1+\gamma^2+\gamma^4+...\right) w_{x_1,a_1}^{x_1,a_1} \Delta_1 + \left(\gamma + \gamma^3 + \gamma^5 + ...\right) w_{x_1,a_1}^{x_2,a_1} \Delta_2.
\end{align*}
The step-wise operator is \begin{align*}
    \mathcal{R}^c Q(x_1,a_1) = Q(x_1,a_1) + \left(1 + \gamma^2 c_1c_2 + \gamma^4 c_1c_2c_3c_4 + ...\right) \Delta_1 + \left(\gamma c_1 + \gamma^3 c_1c_2c_3 + ...\right)\Delta_2.
\end{align*}
We can identify the following solution $c$ to satisfy the equality $\mathcal{M}^wQ(x_1,a_1)=\mathcal{R}^c(x_1,a_1)$.
\begin{align*}
    c_1=1, c_2=\frac{A-1}{\gamma^2},c_3=B-\gamma-\gamma(A-1),c_4=c_5=...=0,
\end{align*}
where $A=\left(1+\gamma^2+\gamma^4\right)w_{x_1,a_1}(x_1,a_1),B=\left(\gamma+\gamma^3+\gamma^5+...\right)w_{x_1,a_1}(x_2,a_1)$. Note that the solution always exists regardless of $w_{x_1,a_1}$. In a similar way, we can solve for non-Markovian traces for $w_{x_2,a_1}$ as well. We conclude for any $w$, there exists non-Markovian traces $c$ such that $f_{c\rightarrow w}(r,p,\pi,\mu,\gamma)(c)=w$ for the above $(r,p,\pi,\mu,\gamma)$.
\end{proof}

The above lemmas characterize the space of $\{\mathcal{R}^c,c\in \mathcal{C}(T)\}$ relative to $\{\mathcal{M}^w,w \in\mathcal{W}(T) \}$. From Lemma~\ref{lemma:3} we conclude  case (i) of the proposition; from Lemma~\ref{lemma:4}, we conclude the case (ii) of the proposition.

\end{proof}

\propcond*
\begin{proof}
The definition of $w^c$ could rewrite as 
\begin{align*}
    w_{x,a}^c(x',a') \cdot d_{x,a}^\mu(x',y') = \mathbb{E}_{\mu,\tau}\left\lbrack (\Pi_{1\leq s\leq \tau} \mathbb{I}[x_\tau=x',a_\tau=a']\right\rbrack
\end{align*}
As such, we expand the RHS of the above
\begin{align}
    w_{x,a}^c(x',a')  \cdot d_{x,a}^\mu(x',y') &= \mathbb{E}_{\mu,\tau}\left\lbrack (\Pi_{1\leq s\leq \tau} \mathbb{I}[x_\tau=x',a_\tau=a']\right\rbrack \nonumber \\
    &= \mathbb{E}_{\mu,\tau}\left\lbrack( \Pi_{1\leq s\leq \tau} c_s) \; \middle| \; x_\tau=x',a_\tau=a'  \right\rbrack \nonumber\\ &\times P_\mu(x'_\tau=x',a_\tau=a|x_0=x,a_0=a).
\end{align}
Also note that $d_{x,a}^\mu(x',a')\coloneqq (1-\gamma) \sum_{t\geq0} \gamma^t P_\mu(x_t=x',a_t=a'|x_0=x,a_0=a)=P_\mu(x_\tau=x',a_\tau=a'|x_0=x,a_0=a)$, which cancel on both sides of the equation. Hence we conclude the equality.
\end{proof}

\corocondoperator*
\begin{proof}
With Proposition~\ref{prop:conditional-is}, we have $w^c(x_\tau,a_\tau)=\mathbb{E}_{\mu,\tau}\left\lbrack( \Pi_{1\leq s\leq \tau} c_s) \; \middle| \; x_\tau,a_\tau  \right\rbrack$. Further,
\begin{align*}
w(x_\tau,a_\tau) \hat{\Delta}_\tau^\pi &= 
     \mathbb{E}_{\mu,\tau}\left\lbrack( \Pi_{1\leq s\leq \tau} c_s) \; \middle| \; x_\tau,a_\tau  \right\rbrack \hat{\Delta}_\tau^\pi \\ \nonumber
     &=  \mathbb{E}_{\mu,\tau}\left\lbrack( \Pi_{1\leq s\leq \tau} c_s) \delta_\tau^\pi \; \middle| \; x_\tau,a_\tau  \right\rbrack \nonumber
\end{align*}
Note that since the transitions are deterministic $\hat{\Delta}_\tau^\pi$ is a measurable function of $(x_\tau,a_\tau)$ and could be taken out of the expectation. Then with the tower property of variance $\mathbb{V}\left\lbrack X  \right\rbrack \geq \mathbb{V}\left\lbrack \mathbb{E} \left\lbrack X \; \middle| \;  Y \right\rbrack \right\rbrack$, by letting $X=(\Pi_{1\leq s\leq \tau} c_s) \hat{\Delta}_\tau^\pi$ and $Y=(x_\tau,a_\tau)$ we conclude the result.
\end{proof}

\begin{restatable}{proposition}{propbellman}\label{prop:trace-bellman}
For any step-wise trace coefficient $c_t$, its equivalent TD weights $w^c$ and $d_{x,a}^{w^c} \coloneqq d_{x,a}^\mu\odot w_{x,a}^c$,
\begin{align}
    d_{x,a}^{w^c}=(1-\gamma)\delta_{x,a} + \gamma (P^{\tilde{\pi}})^T d_{x,a}^{w^c},\label{eq:trace-bellman}
\end{align}
where $\tilde{\pi}(a|x)=\pi(a|x)c(x,a)$ and $\tilde{\pi}(a|x)\coloneqq \mu(a|x)c(x,a)$ is a non-negative measure for any $0\leq c(x,a)\leq \frac{\pi(a|x)}{\mu(a|x)}$.
\end{restatable}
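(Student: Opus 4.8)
The plan is to derive Eqn~\eqref{eq:trace-bellman} from the matrix identity already established in the proof of Proposition~\ref{prop:equivalent}, together with the Neumann expansion of $(I-\gamma P^{c\mu})^{-1}$. Throughout I take $\tilde{\pi}(a|x) = \mu(a|x)c(x,a)$ so that $P^{\tilde{\pi}} = P^{c\mu}$ (with $P^{c\mu}(x,a,x',a') = p(x'|x,a)\mu(a'|x')c(x',a')$, exactly as in that proof), and I work under the summability condition (\textbf{C.1}) from the proof of Proposition~\ref{prop:equivalent}, which makes every object below finite; non-negativity of $\tilde{\pi}$ is immediate from $\mu\ge 0$ and $c\ge 0$, and in fact $\sum_a\tilde{\pi}(a|x) = \sum_a\mu(a|x)c(x,a)\le\sum_a\pi(a|x)=1$, so $\tilde{\pi}$ is a sub-probability measure.

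The key elementary fact is that $(P^{c\mu})^t(x,a,x',a') = \mathbb{E}_\mu[(\Pi_{1\le s\le t}c_s)\,\mathbb{I}[x_t=x',a_t=a']\mid x_0=x,a_0=a]$, since a path $x,a,x_1,a_1,\dots,x_t,a_t$ carries its probability under $\mu$ times the product $\Pi_{1\le s\le t}c_s$. Summing $\gamma^t$ over $t\ge 0$ and comparing with Eqn~\eqref{eq:marginalized-trace} and the definition $d_{x,a}^{w^c}=d_{x,a}^\mu\odot w_{x,a}^c$ shows that the $(x,a,x',a')$-entry of $(I-\gamma P^{c\mu})^{-1}$ equals $\tfrac{1}{1-\gamma}d_{x,a}^{w^c}(x',a')$. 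Writing $D$ for the tensor with $D(x,a,\cdot,\cdot)=d_{x,a}^{w^c}$, this says $(I-\gamma P^{c\mu})^{-1}=\tfrac{1}{1-\gamma}D$. Multiplying the identity $(I-\gamma P^{c\mu})^{-1}(I-\gamma P^{c\mu})=I$ on the right by $(I-\gamma P^{c\mu})$ then gives $D-\gamma D P^{c\mu}=(1-\gamma)I$, and reading off the $(x,a)$-slice — where $D(x,a,\cdot,\cdot)=d_{x,a}^{w^c}$, $D P^{c\mu}$ becomes $(P^{c\mu})^T d_{x,a}^{w^c}$, and $I$ becomes $\delta_{x,a}$ — yields precisely $d_{x,a}^{w^c}=(1-\gamma)\delta_{x,a}+\gamma(P^{\tilde{\pi}})^T d_{x,a}^{w^c}$.

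I would also include the equivalent hands-on derivation for readers who prefer to avoid the matrix inverse: expand $d_{x,a}^{w^c}(x',a') = (1-\gamma)\sum_{t\ge 0}\gamma^t\mathbb{E}_\mu[(\Pi_{1\le s\le t}c_s)\mathbb{I}[x_t=x',a_t=a']]$, isolate the $t=0$ term as $(1-\gamma)\delta_{x,a}(x',a')$, and for $t\ge 1$ condition on the penultimate pair $(x_{t-1},a_{t-1})$; using Markovianity of $c$, the final factor contributes $p(x'|x_{t-1},a_{t-1})\mu(a'|x')c(x',a') = P^{c\mu}(x_{t-1},a_{t-1},x',a')$, and re-indexing $t-1\mapsto u$ collapses the residual sum back to $\tfrac{1}{1-\gamma}d_{x,a}^{w^c}(x_{t-1},a_{t-1})$, giving the same fixed-point relation.

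Both routes are short, so there is no genuinely hard step; the points needing care are (i) the tensor bookkeeping — in particular multiplying by $(I-\gamma P^{c\mu})$ on the \emph{right}, since left-multiplication produces the version with $P^{c\mu}$ acting on the wrong index and does not match the stated balance equation — and (ii) the degenerate case $d_{x,a}^\mu(x',a')=0$, where by convention $w_{x,a}^c(x',a')=0$: there one checks that $(x',a')$ is unreachable from $(x,a)$ under $\mu$, so $\delta_{x,a}(x',a')=0$ and every term of $(P^{\tilde{\pi}})^T d_{x,a}^{w^c}$ at $(x',a')$ vanishes, hence both sides of Eqn~\eqref{eq:trace-bellman} are $0$.
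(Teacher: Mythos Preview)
Your proposal is correct and follows essentially the same two routes the paper gives: the matrix argument (the paper's ``alternative proof by matrix notations'' identifies $d^{w^c}=(1-\gamma)(I-\gamma P^{c\mu})^{-1}$ and reads off the Bellman equation, exactly as you do) and the element-wise expansion splitting off $t=0$ and conditioning on the previous step (the paper conditions on the full history $h_u$, which for Markovian $c$ is equivalent to your conditioning on $(x_{t-1},a_{t-1})$). The only cosmetic issue is your phrase ``multiplying the identity $(I-\gamma P^{c\mu})^{-1}(I-\gamma P^{c\mu})=I$ on the right by $(I-\gamma P^{c\mu})$'' --- you mean simply substituting $\tfrac{1}{1-\gamma}D$ for the inverse in that identity --- but the conclusion $D-\gamma D P^{c\mu}=(1-\gamma)I$ and the row read-off are right.
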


\begin{proof}
We show the Bellman equation directly from the definition of $d_{x,a}^{w^c}(x',a')$. In the following, we always condition on $x_0=x,a_0=a$ inside expectations. For the simplicity of notations, we drop this conditioner by default. It is clear that by construction,
\begin{align*}
    d_{x,a}^{w^c}(x',a') = (1-\gamma) \mathbb{E}_{\mu}\left\lbrack \sum_{t\geq 0} \gamma^t (\Pi_{s=1}^t c_s) \mathbb{I}[x_t=x',a_t=a'] \right\rbrack
\end{align*}
We rewrite the above into the following
\begin{align*}
    d_{x,a}^{w^c}(x',a') &= (1-\gamma) \mathbb{I}[x_0=x',a_0=a'] + \mathbb{E}_{\mu}\left\lbrack \sum_{t\geq 1} \gamma^t \left(\Pi_{s=1}^t c_s\right) \mathbb{I}[x_t=x',a_t=a']\right\rbrack\\
    &=(1-\gamma) \mathbb{I}[x_0=x',a_0=a'] + \gamma\mathbb{E}_{\mu}\left\lbrack \sum_{u\geq 0} \gamma^u \left(\Pi_{s=1}^u  c_s\right) c_{u+1} \mathbb{I}[x_{u+1}=x',a_{u+1}=a']\right\rbrack,
\end{align*}
where in the second equality we apply the transformation $u=t-1$. Now, let $h_u\coloneqq \{x_0=x,a_0,...x_u,a_u\}$ denote the sequence of random variables until time $u$. For each term in the summation, for any given $u\geq 0$,
\begin{align*}
    \mathbb{E}_{\mu}\left\lbrack  \gamma^u (\Pi_{s=1}^u  c_s) c_{u+1} \mathbb{I}[x_{u+1}=x',a_{u+1}=a']\right\rbrack  &= \sum_{y\in\mathcal{X},b\in\mathcal{A}}  \mathbb{E}_{\mu}\left\lbrack  \gamma^u (\Pi_{s=1}^u  c_s) c_{u+1} \mathbb{I}[x_{u+1}=x',a_{u+1}=a']\mathbb{I}[x_u=y,a_u=b]\right\rbrack \\
    &= \sum_{y\in\mathcal{X},b\in\mathcal{A}}  \mathbb{E}_\mu\left\lbrack\mathbb{E}_{\mu}\left\lbrack  \gamma^u (\Pi_{s=1}^u  c_s) c_{u+1} \mathbb{I}[x_{u+1}=x',a_{u+1}=a']\mathbb{I}[x_u=y,a_u=b] \;\middle|\; h_u\right\rbrack \right\rbrack \\
    &= \sum_{y\in\mathcal{X},b\in\mathcal{A}}  \mathbb{E}_\mu\left\lbrack\mathbb  \gamma^u (\Pi_{s=1}^u  c_s)  \mathbb{I}[x_u=y,a_u=b] P^{\tilde{\pi}}(x_u,a_u,x',a')\right\rbrack  \\
    &=  \mathbb{E}_\mu\left\lbrack\mathbb  \gamma^u (\Pi_{s=1}^u  c_s)  \mathbb{I}[x_u=y,a_u=b] P^{\tilde{\pi}}(y,b,x',a')\right\rbrack  .
\end{align*}
In the above, we have used the equality,
\begin{align*}
    \mathbb{E}_\mu\left\lbrack c_{u+1} \mathbb{I}[x_{u+1}=x',a_{u+1}=a']\;\middle|\; h_u \right\rbrack = \mathbb{E}_\mu\left\lbrack c_{u+1} \mathbb{I}[x_{u+1}=x',a_{u+1}=a']\;\middle|\; x_u,a_u \right\rbrack = P^{\tilde{\pi}}(x_u,a_u,x',a'),
\end{align*}
which derives from the definition of the transition matrix. Finally, we sum up over the time step $k$ to yield the final fixed point equation,
\begin{align*}
     d_{x,a}^{w^c}(x',a') &= (1-\gamma) \mathbb{I}[x_0=x',a_0=a'] + \gamma \sum_{y\in\mathcal{X},b\in\mathcal{A}}  d_{x,a}^{w^c}(y,b)P^{\tilde{\pi}}(y,b,x',a').
\end{align*}
By rewriting the above equation into the matrix form, we conclude the proof.

\paragraph{Alternative proof by matrix notations.} We can derive much simpler alternative proof with matrix notations. Let $d^{w^c}\in\mathbb{R}^{\left(\mathcal{X}\times\mathcal{A}\right)\times \left(\mathcal{X}\times\mathcal{A}\right)}$ be a matrix such that $d^{w^c}(x,a,x',a')=d_{x,a}^{w^c}(x',a')$. Also define the visitation distribution matrix $d^\mu=(1-\gamma) (I-\gamma P^\mu)^{-1}$. Recall that from the proof of Proposition~\ref{prop:equivalent}, in matrix form,
\begin{align*}
    W = (I-\gamma P^{c\mu})^{-1} / (I-\gamma P^\mu)^{-1}.
\end{align*}
Then by construction,
\begin{align*}
    d^{w^c} = (1-\gamma) W \odot d^\mu = (1-\gamma) (I-\gamma P^{c\mu})^{-1} = (1-\gamma) \sum_{t=0}^\infty (\gamma P^{c\mu})^t.
\end{align*}
Then naturally, $d^{w^c}$ satisfies the following Bellman equations,
\begin{align*}
    d^{w^c} = (1-\gamma) + \gamma P^{c\mu} d^{w^c}.
\end{align*}
When indexing the row at $(x,a)$, we arrive at the desired result.

\end{proof}

\propestimate*
\begin{proof}
Define $d_\psi \coloneqq w_\psi \odot d_{x,a}^\mu$.
By construction of the objective $L(\mathbf{q},w_\psi)$, we can rewrite the objective as an inner product, 
\begin{align}
    L(\mathbf{q},w_\psi) = \mathbf{q}^T [(1-\gamma)\delta_{x,a} + \gamma (P^{\tilde{\pi}})^T d_\psi -d_\psi],
    \label{eq:inner-product}
\end{align}
where $\tilde{\pi}(a|x)=\mu(a|x)c(x,a)$ is a sub-probability measure. Per results in Proposition~\ref{prop:trace-bellman}, the objective satisfies the following equation when the second argument is $w^c$
\begin{align*}
    L(\mathbf{q},w^c)=0.
\end{align*}
Hence, we can rewrite Eqn~\eqref{eq:inner-product} as the following
\begin{align*}
    L(\mathbf{q},w_\psi) &= L(\mathbf{q},w_\psi)-L(\mathbf{q},w^c)\\ &
    = \mathbf{q}^T [\gamma(P^{\tilde{\pi}})^T-I](d_\psi-w^c).
\end{align*}
Rewriting the product of matrix and vectors into expectations,
\begin{align*}
    L(\mathbf{q},w_\psi) &= \mathbb{E}_{(x',a;)\sim d_{x,a}^\mu}\left\lbrack (w_\psi(x',a')-w(x',a')) (\Pi q)(x',a')\right\rbrack,
\end{align*}
where $(\Pi q)(x',a')\coloneqq \gamma \mathbb{E}_{a''\sim\tilde{\pi}(\cdot|x'')}\left\lbrack q(x'',a'')\right\rbrack-q(x',a')$ where $x''\sim p(\cdot|x',a')$. Interestingly, here $(\Pi q)(x',a')$ could be interpreted as a reward such that if policy $\tilde{\pi}$ is executed, the Q-function would be $q(x',a')$. Following the techniques of \citep{liu2018breaking}, it is straightforward to show that when $q(x',a',x^\ast,a^\ast)=\sum_{t\geq0} \gamma^t T_{\tilde{\pi}}^t (x',a'|x,a)$, we have $(\Pi q)(x',a')=\delta(x'=x^\ast,a'=a^\ast)$. Here, importantly, because $\tilde{\pi}$ is a sub-probability measure, $T_{\tilde{\pi}}^t$ exists and $\sum_{t\geq0}T_{\tilde{\pi}}^t$ converges. As a result, with this choice of $\mathbf{q}(x^\ast,a^\ast)$, we have $L(\pm \mathbf{q}(x^\ast,a^\ast),w_\psi)=\pm (w_\psi(x^\ast,a^\ast)-w(x^\ast,a^\ast))$. Then it follows that when $\{
\pm q(x',a',x^\ast,a^\ast),\forall (x,a)\}\in \mathcal{Q}$, the error $|w_\psi(x^\ast,a^\ast)-w(x^\ast,a^\ast)|$ is upper bounded by $\max_{\mathbf{q}\in\mathcal{Q}}L(\mathbf{q},w_\psi)$.
\end{proof}

\propcontraction*
\begin{proof}
Assume $\mathcal{Q}_b\subset\mathcal{Q}$. Based on Eqn~\eqref{eq:inner-product}, we deduce the following
\begin{align*}
    \max_{\mathbf{q}\in\mathcal{Q}} L(\mathbf{q},w_\psi) = \max_{\mathbf{q}\in\mathcal{Q}} \mathbf{q}^T [(1-\gamma)\delta_{x,a}+\gamma (P^\pi)^T w_\psi- w_\psi] = \sum_{x',a'}\left|(1-\gamma)\delta_{x,a}+\gamma (P^\pi)^T w_\psi- w_\psi\right|(x',a') = \eta_{w_\psi}.
\end{align*}
Here, the maximizer $\mathbf{q}^\ast \in \mathcal{Q}_b$ is \begin{align*}\mathbf{q}^\ast(x,a)=\text{sign}\left((1-\gamma)\delta_{x,a}+\gamma (P^\pi)^T w_\psi- w_\psi] \right),
\end{align*}
where $\text{sign}(x)$ is the element-wise sign function.
\end{proof}

\proplp*
\begin{proof}
Let $Q_t(x,a)$ be the set of LP objectives at iteration $t$, and assign them to the objective coefficients of LPs at iteration $t+1$. This operation is defined through an equivalent operator $\mathcal{R}$. Recall that we abuse notations and denote $Q_{t+1},Q_t\in\mathbb{R}^{\mathcal{X}\times\mathcal{A}}$ as vector Q-functions. Let $d^\ast$ be the optimal solution to the $\text{LP}^{(t)}(x,a)$, then by construction
\begin{align*}
    Q_{t+1}(x,a) = \delta_{x,a}^T Q_t + (1-\gamma)^{-1}(d^\ast)^T (R + \gamma P^\pi Q_t - Q_t).
\end{align*}
Then recall that the constraints in Eqn~\eqref{eq:eval-duallp-augmented-relaxed} imply that 
\begin{align*}
    \left\lVert (1-\gamma)\delta_{x,a} + \gamma(P^\pi)^T d^\ast - d^\ast \right\rVert_1 \leq (1-\gamma)\eta.
\end{align*}
This implies that the iteration $Q_{t+1}\leftarrow Q_t \equiv \delta_{x,a}^T Q_t + (1-\gamma)^{-1}(d^\ast)^T (R + \gamma P^\pi Q_t - Q_t)$ is contractive. In addition, the fixed point of this process is $Q^\pi$. We then conclude the desired result. 

\paragraph{Discussion on more general results.} The above proof relies on the important fact that the feasible set defined in $\text{LP}(x,a)$ in Eqn~\eqref{eq:eval-duallp-augmented-relaxed} corresponds to a set of $d$ such that the iteration process is contractive. Hence, if we choose any arbitrary element $\tilde{d}\in \mathcal{D}_{x,a}$, and define
\begin{align*}
    Q_{t+1}\leftarrow  \delta_{x,a}^T Q_t + (1-\gamma)^{-1}(\tilde{d})^T (R + \gamma P^\pi Q_t - Q_t),
\end{align*}
we still have the contraction $\left\lVert Q_{t+1}-Q^\pi \right\rVert_\infty \leq \eta \left\lVert Q_{t}-Q^\pi \right\rVert_\infty $.

\end{proof}

\coroequivalence*
\begin{proof}
By construction, the following is true
\begin{align*}
    \mathcal{M}^w Q =  \delta_{x,a}^T Q_t + (1-\gamma)^{-1}d^T (R + \gamma P^\pi Q_t - Q_t),
\end{align*}
where $d = w\odot d_{x,a}^\mu$. Recall also by construction,
\begin{align*}
    Q_{t+1}\leftarrow  \delta_{x,a}^T Q_t + (1-\gamma)^{-1}(d^\ast)^T (R + \gamma P^\pi Q_t - Q_t).
\end{align*}
Combining the above two equations directly implies the desired result.
\end{proof}

\section{Relations between different stochastic estimators}
\label{appendix:operators}
 \begin{figure*}[h]
    \centering
    \includegraphics[keepaspectratio,width=.24\textwidth]{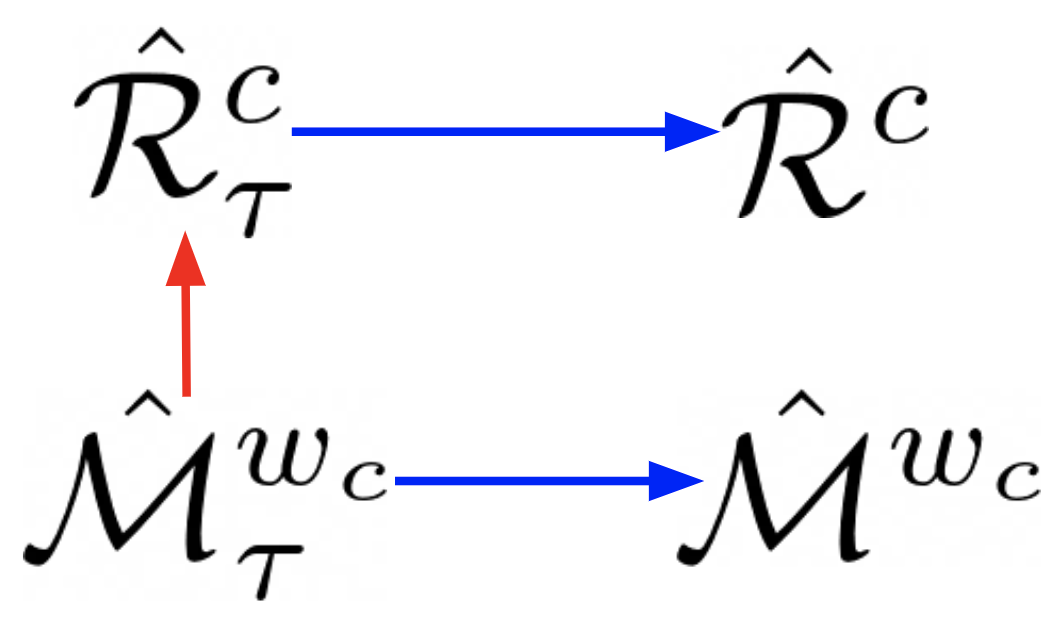}
    \caption{Visualization of relations between stochastic estimates to different operators. Blue arrows represent marginalization over the random time variables $\tau$; the red arrow represents marginalization over the random state-action pair $(x_\tau,a_\tau)$. Under suitable conditions, the directions of the arrows indicate potential variance reductions.}
    \label{fig:operators}
\end{figure*}

In Figure~\ref{fig:operators}, we show relations between stochastic estimates to different operators. Blue arrows represent marginalization over the random time variables $\tau$; the red arrow represents marginalization over the random state-action pair $(x_\tau,a_\tau)$. Under suitable conditions, the directions of the arrows indicate potential variance reductions. We see that under suitable conditions outlined in Proposition~\ref{prop:conditional-is}, we have $\mathbb{V}[\mathcal{M}_\tau^{w^c}]\geq \mathbb{V}[\mathcal{M}^{w^c}]$, $\mathbb{V}[\mathcal{R}_\tau^{c}]\geq \mathbb{V}[\mathcal{R}^{c}]$ and $\mathbb{V}[\mathcal{R}_\tau^{c}] \geq \mathbb{V}[\mathcal{M}_\tau^{w^c}]$. However, it is not clear what is the ordering of the variance between $\mathcal{R}^c$ and $\mathcal{M}^{w^c}$. 

\section{Marginalized V-trace operator}
\label{appendix:vtrace}
The V-trace operator \citep{espeholt2018impala} is defined for value functions $V\in\mathbb{R}^{|\mathcal{X}|}$. Given a target policy $\pi$ and a behavior policy $\mu$, the operator $\mathcal{R}^{c,\rho}$ is parameterized by step-wise trace coefficients $c(x,a)$ and $\rho(x,a)$. In particular,
\begin{align}
    \mathcal{R}^{c,\rho}V(x)\coloneqq V(x)+ \mathbb{E}_\mu\left\lbrack \sum_{t\geq 0} \gamma^t (c_0...c_{t-1})\rho_t \Delta_t \; \middle| \; x_0=x \right\rbrack,
    \label{eq:vtrace-multistep}
\end{align}
where $\Delta_t = \bar{r}_t+\gamma \mathbb{E}_{x'\sim p(\cdot|x_t,a),a\sim\mu(\cdot|x_t)}\left[V(x_{t'})\right] - V(x_t)$ is the TD-error at step $t$. Here, $\rho_t$ determines the fixed point of the operator, while $\rho_t,c_t$ jointly determine the contraction rate. Consider defining a marginalized V-trace operator $\mathcal{M}^{w,\rho}V(x_0)$ as below
\begin{align}
    \mathcal{M}^{w,\rho}V(x)\coloneqq V(x) + (1-\gamma)^{-1}\mathbb{E}_{x'\sim d_{x,a}^\mu}\left\lbrack  w_x(x') \rho(x',a') \Delta(x',a') \right\rbrack,
    \label{eq:vtrace-marginalized}
\end{align}
where $d_{x}^\mu(x)\coloneqq (1-\gamma)\sum_{t\geq 0} P_\mu(x_t=x'|x_0=x)$ is the discounted visitation distribution under $\mu$. here, $w(x')$ is state-dependent, $\rho(x',a')$ is state-action dependent and $\Delta(x',a')\coloneqq \bar{r}(x',a')+\gamma \mathbb{E}_{x''\sim p(\cdot|x',a'),a'\sim\mu(\cdot|x')}\left[V(x'')\right]-V(x')$.
\subsection{State-marginalized V-trace operator.} Consider setting $\rho(x,a)$ and the V-trace step-wise traces. Define TD weights $w_x^c(x')$, which is computed as 
\begin{align}
    w_x^c(x') \coloneqq \mathbb{E}_\mu\left\lbrack \sum_{t\geq0} \gamma^t (c_0...c_{t-1})\mathbb{I}[x_t=x'] \; \middle| \; x_0=x \right\rbrack.\label{eq:state-marginalized-traces-vtrace}
\end{align}
It is then straightforward to show that the multi-step operator and the marginalized operator are equivalent in expectation
$\mathcal{R}^{c,\rho}\equiv \mathcal{M}^{w,\rho}$. The trace coefficient is obtained via conditional expectation as follows.
\begin{restatable}{proposition}{propcondvtrace}\label{prop:conditional-is-vtrace}
Let $\tau$ be an integer-valued random time, such that $P(\tau=n )=(1-\gamma)\gamma^n,\forall n \geq 0$. For V-trace, given any step-wise trace coefficient $c_t$, its equivalent TD weights $w(x')$ is
\begin{align*}
    w_x^c(x') = \mathbb{E}_{\mu,\tau}\left\lbrack( \Pi_{1\leq s\leq \tau-1} c_s) \; \middle| \; x_\tau=x',x_0=x \right\rbrack.
\end{align*}
\end{restatable}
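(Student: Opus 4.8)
The plan is to transcribe the proof of Proposition~\ref{prop:conditional-is} --- the Q-function conditional importance sampling identity --- essentially verbatim, replacing state-action pairs $(x',a')$ by states $x'$ throughout. The one ingredient that makes that proof work is that marginalizing any trajectory-adapted quantity over the geometric random time $\tau$ (with $P(\tau=n)=(1-\gamma)\gamma^n$) reproduces exactly the discounted weighting $(1-\gamma)\sum_{t\ge0}\gamma^t$, and this is insensitive to whether we track states or state-action pairs. So no new idea is needed; the task is to set up the correspondence cleanly.

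First I would use the defining law of $\tau$: for any trajectory-adapted sequence $(Y_t)_{t\ge0}$, $\mathbb{E}_{\mu,\tau}[Y_\tau \mid x_0=x] = \sum_{t\ge0}(1-\gamma)\gamma^t\,\mathbb{E}_\mu[Y_t\mid x_0=x]$. Applying this with $Y_t = (\Pi_{1\le s\le t-1}c_s)\,\mathbb{I}[x_t=x']$ identifies the sum defining $w_x^c$ (Eqn~\eqref{eq:state-marginalized-traces-vtrace}), up to the $(1-\gamma)/d_x^\mu(x')$ normalization, with $\mathbb{E}_{\mu,\tau}[(\Pi_{1\le s\le\tau-1}c_s)\,\mathbb{I}[x_\tau=x'] \mid x_0=x]$, i.e. $w_x^c(x')\,d_x^\mu(x') = \mathbb{E}_{\mu,\tau}[(\Pi_{1\le s\le\tau-1}c_s)\,\mathbb{I}[x_\tau=x'] \mid x_0=x]$. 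Second, I would factor the right-hand side by the tower rule, conditioning on $\{x_\tau=x'\}$: it equals $\mathbb{E}_{\mu,\tau}[(\Pi_{1\le s\le\tau-1}c_s) \mid x_\tau=x',x_0=x]\cdot P_\mu(x_\tau=x'\mid x_0=x)$. Third, I would recognize $P_\mu(x_\tau=x'\mid x_0=x) = \sum_{n\ge0}(1-\gamma)\gamma^n P_\mu(x_n=x'\mid x_0=x) = d_x^\mu(x')$ from the definition of the discounted state-visitation distribution given below Eqn~\eqref{eq:vtrace-marginalized}; cancelling $d_x^\mu(x')>0$ from both sides gives the claim, and the degenerate case $d_x^\mu(x')=0$ is covered by the same convention as in Proposition~\ref{prop:equivalent} ($w_x^c(x')=0$).

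The main thing to be careful about is the index bookkeeping in the cumulative trace and the $(1-\gamma)$ normalization. The V-trace multi-step operator (Eqn~\eqref{eq:vtrace-multistep}) carries $c_0\cdots c_{t-1}=\Pi_{0\le s\le t-1}c_s$ rather than the $\Pi_{1\le s\le t}c_s$ of the Q-function operator, so I would verify that the product range $1\le s\le\tau-1$ in the statement matches the range appearing under $\mathbb{E}_\mu$ in the definition of $w_x^c$ and that the conditioning time index is the right one after the $Y_\tau$ substitution; an off-by-one there would break the identity. Once the indices are pinned down the derivation is a line-for-line copy of the Q-function case. As a sanity check I would also note that the $\rho_t$ coefficients play no role: they multiply the TD error in Eqn~\eqref{eq:vtrace-multistep}, not the cumulative trace, so the formula for $w_x^c$ is independent of $\rho$, consistent with the equivalence $\mathcal{R}^{c,\rho}\equiv\mathcal{M}^{w,\rho}$ stated above.
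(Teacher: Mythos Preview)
Your proposal is correct and follows exactly the route the paper takes: the paper does not give a separate proof for the V-trace version but implicitly relies on the argument for Proposition~\ref{prop:conditional-is}, which is precisely the three steps you outline (rewrite $w^c\cdot d^\mu$ as an expectation over $(\mu,\tau)$, factor via conditioning on the event $\{x_\tau=x'\}$, and cancel $P_\mu(x_\tau=x'\mid x_0=x)=d_x^\mu(x')$). Your flag on the index shift in the cumulative trace and the $(1-\gamma)/d_x^\mu(x')$ normalization is well placed---Eqn~\eqref{eq:state-marginalized-traces-vtrace} as written omits that normalization, so the identity only holds once it is restored, which your derivation correctly assumes.
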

Now define $d_x^{w^c}(x')\coloneqq w_x^c(x') d_{x}^\mu(x')$. It can be shown that $d_x^{w^c}(x')\in\mathbb{R}^{|\mathcal{X}|}$ also satisfy fixed point equations.
\begin{restatable}{proposition}{propbellmanvtrace}\label{prop:trace-bellman-vtrace}
The following Bellman equation holds for the step-wise trace coefficient $c_t$ and   $d_{x,a}^{w^c}(x')$
\begin{align*}
    d_{x}^{w^c}(x^\prime)=(1-\gamma)\delta(x^\prime=x)\nonumber + \gamma \sum_{x',a'} d_{x}^{w^c}(x')c(x',a')\mu(a'|x')p(x''|x',a'),
\end{align*}
where $\delta$ is the Dirac function. Let $P^{\tilde{\pi}}\in\mathbb{R}^{|\mathcal{X}|\times|\mathcal{X}|}$ be a transition matrix such that $P^{\tilde{\pi}}(x,a)=\sum_{a'} p(y|x',a')\tilde{\pi}(a'|x')=\sum_a p(y|x',a')\mu(a'|x')c(x',a')$. Then equivalently, in matrix form,
\begin{align}
    d_{x}^{w^c}=(1-\gamma)\delta_{x} + \gamma (P^{\tilde{\pi}})^T d_{x}^{w^c}.
    \label{eq:bellman-vtrace}
\end{align}
\end{restatable}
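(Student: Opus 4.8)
The plan is to mirror the two-part structure of the proof of Proposition~\ref{prop:trace-bellman}: first give a direct argument unrolling the definition of $d_x^{w^c}(x')$ one step, and then record the short matrix-algebra proof. I would begin from the defining sum
\[
  d_x^{w^c}(x') = (1-\gamma)\,\mathbb{E}_\mu\Bigl[\sum_{t\geq 0}\gamma^t (c_0\cdots c_{t-1})\,\mathbb{I}[x_t=x'] \;\Big|\; x_0=x\Bigr],
\]
which follows by combining \eqref{eq:state-marginalized-traces-vtrace} with the definition $d_x^{w^c}(x')\coloneqq w_x^c(x')\,d_x^\mu(x')$ and the identity $d_x^\mu(x')=\mathbb{E}_{\mu,\tau}[\mathbb{I}[x_\tau=x']\mid x_0=x]$ (exactly as in the proof of Proposition~\ref{prop:conditional-is-vtrace}). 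I would peel off the $t=0$ term, which contributes $(1-\gamma)\,\delta(x'=x)$, reindex the remaining sum by $u=t-1$, and pull out a factor of $\gamma$, leaving $\gamma\,\mathbb{E}_\mu[\sum_{u\geq 0}\gamma^u (c_0\cdots c_{u-1}) c_u\,\mathbb{I}[x_{u+1}=x']]$.

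Next I would condition on the history $h_u=\{x_0,a_0,\dots,x_u,a_u\}$ and sum over the intermediate state-action pair $(y,b)$ at time $u$. The key step is the one-step conditional identity
\[
  \mathbb{E}_\mu\bigl[c_u\,\mathbb{I}[x_{u+1}=x']\mid x_u=y,a_u=b\bigr] = \mu(b\mid y)\,c(y,b)\,p(x'\mid y,b),
\]
wait — more precisely, since $c_u$ depends on $(x_{u-1},a_{u-1})$ or on $(x_u,a_u)$ depending on the indexing convention, I need to be careful with whether the trace factor at the current step is included; following the V-trace convention of \eqref{eq:vtrace-multistep} the product is $c_0\cdots c_{t-1}$, so the factor multiplying the transition out of time $u$ is $c_u=c(x_u,a_u)$ and the relevant kernel is $P^{\tilde\pi}(y,x')=\sum_b \mu(b\mid y)c(y,b)p(x'\mid y,b)$. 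Substituting this, taking the sum over $(y,b)$ back outside, and recognizing the remaining expectation as $d_x^{w^c}(y)$ yields
\[
  d_x^{w^c}(x') = (1-\gamma)\,\delta(x'=x) + \gamma\sum_{y} d_x^{w^c}(y)\,P^{\tilde\pi}(y,x'),
\]
which is the stated scalar equation; rewriting in vectors gives \eqref{eq:bellman-vtrace}.

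Finally I would append the short alternative proof by matrix notation, exactly parallel to the one in Proposition~\ref{prop:trace-bellman}: with $d^{w^c}$ the matrix whose $(x,x')$ entry is $d_x^{w^c}(x')$ and $P^{c\mu}$ the (sub-stochastic) state-to-state kernel induced by $c\mu$, one has $d^{w^c}=(1-\gamma)(I-\gamma P^{c\mu})^{-1}=(1-\gamma)\sum_{t\geq 0}(\gamma P^{c\mu})^t$, which immediately satisfies $d^{w^c}=(1-\gamma)I+\gamma (P^{c\mu})^T d^{w^c}$ upon indexing the $x$-th row. The main obstacle, and the only place requiring genuine care, is getting the trace indexing right in the V-trace convention: the product $c_0\cdots c_{t-1}$ stops \emph{one step before} $t$ (unlike the $\Pi_{1\leq s\leq t}c_s$ in the Q-function case), so I must verify that peeling off the first term and reindexing produces the factor $c_u$ attached to the transition $x_u\to x_{u+1}$ and not a stray $c_{u+1}$ or a missing factor; the tower-property step and the convergence of $\sum_t (\gamma P^{c\mu})^t$ (which holds because $0\le c\le \pi/\mu$ makes $c\mu$ a sub-probability measure, as already noted) are then routine.
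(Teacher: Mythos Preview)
The paper does not give an explicit proof of this V-trace proposition; it is stated in Appendix~\ref{appendix:vtrace} as the state-space analog of Proposition~\ref{prop:trace-bellman}, with the argument left implicit. Your plan to mirror that proof---peel off $t=0$, reindex, condition on the time-$u$ state, and alternatively use the matrix identity $d^{w^c}=(1-\gamma)(I-\gamma P^{c\mu})^{-1}$---is exactly the intended route and is correct; in particular your resolution of the indexing issue is right (after setting $u=t-1$ the product $c_0\cdots c_u$ contains $c_u=c(x_u,a_u)$, and integrating $a_u\sim\mu(\cdot\mid x_u)$ together with the transition $x_u\to x_{u+1}$ produces the state kernel $P^{\tilde\pi}(y,x')=\sum_b\mu(b\mid y)c(y,b)p(x'\mid y,b)$). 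The one line to clean up is the conditional expectation you write with $\mu(b\mid y)$ on the right while still conditioning on $a_u=b$; condition on $x_u=y$ alone (or on the history through $x_u$) so that the $\mu$-average over $b$ appears legitimately.
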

Based on the Bellman equation in Eqn~\eqref{eq:bellman-vtrace}, it is possible to estimate $w^c(x)$ from the behavior data under $\mu$. In particular, given a starting state pair $x$, let $w_\psi(x')\approx d_{x}^{w^c}(x') \in \mathbb{R}^{|\mathcal{X}|}$ be a parameteric function used for estimating $d_{x}^{w^c}(x')$. With a critic function $\mathbf{q}\in\mathcal{Q}\subset\mathbb{R}^{\mathcal{X}}$, formulate the objective
\begin{align}
    L(\mathbf{q},w_\psi)&\coloneqq \mathbf{q}^T [(1-\gamma)\delta_x + \gamma (P^{\tilde{\pi}})^T w_\psi - w_\psi] \nonumber \\
    &= (1-\gamma)q(x) +
    \mathbb{E}_{(y)\sim d_{a}^\mu}\left\lbrack \Delta(y) \right\rbrack.
        \label{eq:marginalized-vtrace-objective}
\end{align}
Here, the TD-error $\Delta(y)\coloneqq \left(\mathbb{E}_{b\sim\pi(\cdot|y),y^\prime \sim p(\cdot|y,b)}\left\lbrack q(y^\prime)c(y,b)  \right\rbrack-q(y)\right)w_\psi(y)$. By solving a saddle point optimization problem of the above objective $\min_{w_\psi}\max_{\mathbf{q}} L(\mathbf{q},w_\psi)$, we find $w_\psi\approx d_{x}^{w^c}$.

\section{Multi-step RL algorithms}
\label{appendix:multistep}

Motivated by previous theoretical insights, we seek a practical algorithm which could combine the benefits of multi-step TD-learning and estimations of TD weights. 

\subsection{Multi-step RL algorithms with TD weights} 

We focus on the actor-critic setup where the algorithm maintains a target policy $\pi$ and a Q-function critic $Q_\theta(x,a)$. Here, the policy $\pi$ could be either parameterized $\pi=\pi_\phi$ or defined by the Q-function, e.g. the greedy policy. The algorithm collects data with behavior policy $\mu$.

To estimate TD weights, we parameterize the scoring function $q_\eta(x,a;x_0,a_0)$ and estimator $w_\psi(x,a;x_0,a_0)$, both taking as inputs the starting state-action pair $(x_0,a_0)$ and the target pair $(x,a)$. For simplicity of notations, we omit the dependency on $(x_0,a_0)$. Given a trajectory $(x_t,a_t,r_t)_{t=0}^\infty\sim\mu$, we approximate the loss function in Eqn~\eqref{eq:td-estimate} with stochastic samples,
\begin{align}
    \hat{L}(\eta,\psi)\coloneqq (1-\gamma) q_\eta(x_0,a_0) + (1-\gamma)\sum_{t=0}^\infty \gamma^t \hat{\Delta}_t,\label{eq:empirical-td-estimate}
\end{align}
where $\hat{\Delta}_t=\gamma \mathbb{E}_{\pi}\left\lbrack q_\eta(x_{t+1},\cdot)w_\psi(x_{t+1},\cdot)\right\rbrack - q_\eta(x_t,a_t)w_\psi(x_t,a_t)$. Following prior methods on scaling saddle-point optimization to neural networks (e.g. \citep{nachum2019dualdice}), we approximate the optimal solution to Eqn~\eqref{eq:td-estimate} via stochastic gradient descents (ascents) on the empirical loss $\eta\leftarrow \eta + \alpha\nabla_\eta\hat{L}(\eta,\psi),\psi\leftarrow\psi - \alpha \nabla_\psi \hat{L}(\eta,\psi)$.

At policy evaluation stage, we construct the Q-function targets with $\mathcal{M}^{w}Q(x_0,a_0)$. From the trajectory $(x_t,a_t,r_t)_{t=0}^\infty$, compute the following Q-function target
\begin{align}
    \mathcal{M}^{w}Q(x_0,a_0) \coloneqq Q_{\theta^-}(x_0,a_0) + (1-\gamma)^{-1} \sum_{t=0}^\infty \gamma^t w_\psi(x_t,a_t)\hat{\Delta}_t, \label{eq:empirical-marginalized-multistep}
\end{align}
where $\hat{\Delta}_t = r_t + \gamma \mathbb{E}_\pi\left\lbrack Q_{\theta^-}(x_{t+1},\cdot) \right\rbrack - Q_{\theta-}(x_t,a_t)$. Then the Q-function is optimized by minimizing $(Q_\theta(x_0,a_0)-\mathcal{M}^{w}(x_0,a_0))^2$.

\begin{algorithm}[h]
\begin{algorithmic}
\REQUIRE  policy $\pi$, Q-function critic $Q_\theta(x,a)$, density estimator $w_\psi(x)$, critic $q_\eta$ and learning rate $\alpha \geq 0$ \\
\WHILE{not converged}
\STATE 1. Collect data $(x_t,a_t,r_t)_{t=0}^\infty\sim\mu$ and save to the buffer $\mathcal{D}$
\STATE 2. Construct the empirical loss for marginalized estimation based on Eqn~\eqref{eq:empirical-td-estimate}. Optimize $\eta,\psi$ by alternating gradient descents (ascents): $\eta\leftarrow \eta + \alpha\nabla_\eta\hat{L}(\eta,\psi),\psi\leftarrow\psi - \alpha \nabla_\psi \hat{L}(\eta,\psi)$.
\STATE 3. Construct Q-function targets based on Eqn~\eqref{eq:empirical-marginalized-multistep}. Optimize $\theta$: $\theta\leftarrow\theta-\alpha \nabla_\theta (Q_\theta-\mathcal{M}^{w}Q(x_0,a_0))^2$.
\STATE 4. Improve the policy by either policy gradient or being greedy with respect to the new Q-function $Q_\theta(x,a)$. 
\STATE 5. Update target network $\theta^-\leftarrow\theta$.
\ENDWHILE
\caption{Multi-step RL with TD weights}
\end{algorithmic}
\end{algorithm}

\section{Fenchel-duality based approach to estimating TD weights}
\label{appendix:fenchel-dual}

In this section, we introduce Fenchel-duality based approaches to off-policy evaluation \citep{nachum2019dualdice,zhang2020gendice,nachum2020reinforcement}. While different in details, a common feature of this family of work is to convert the off-policy evaluation problem into a convex-concave optimization problem. Here, we focus on the initial formulation \emph{Dualdice}. We start by introducing this algorithm and then discuss how to extend this framework to estimate TD weights.

\subsection{Background on Dualdice}
Following \citep{nachum2019dualdice}, consider the following optimization problem with argument $w\in\mathbb{R}^{\mathcal{X}\times\mathcal{A}}$
\begin{align*}
    J(w) = \frac{1}{2}\mathbb{E}_{(x',a')\sim d_{x,a}^\mu}\left\lbrack w^2(x',a') \right\rbrack - \mathbb{E}_{(x',a')\sim d_{x,a}^\pi}\left\lbrack w(x',a') \right\rbrack.
\end{align*}
This objective is minimized at $w(x',a') = \frac{d_{x,a}^\mu(x',a')}{d_{x,a}^\pi(x',a')}$. Note that the original derivation from \citep{nachum2019dualdice} focuses on the discounted visitation distribution $d_{x}^\mu(x'))$ without conditioning on the initial action $a_0$ as in our case $d_{x,a}^\mu(x',a')$, because they focus on policy evaluation of a single starting state $x$. However, it is straightforward to extend their results. By Fenchel duality, one could further show that the above optimization could be transformed into the following saddle-point problem with $v,\psi\in\mathbb{R}^{\mathcal{X}\times\mathcal{A}}$,
\begin{align}
    \min_v \max_\psi \  \mathbb{E}_{x',a'\sim d_{x,a}^\mu,x''\sim p(\cdot|x,a),a''\sim \pi(\cdot|x')} \left\lbrack (v(x',a') - \gamma v(x'',a''))\psi(x',a') - \frac{\psi^2(x',a')}{2} \right\rbrack - (1-\gamma) v(x,a).
    \label{eq:convex}
\end{align}
The main motivation for proposing  the saddle-point optimization problem is to bypass the double sampling issue \citep{baird1995residual}. The saddle point of Eqn~\eqref{eq:convex} is $(v^\ast,\psi^\ast)$ and $\psi^\ast(x',a')=\frac{d_{x,a}^\pi(x',a')}{d_{x,a}^\mu(x',a')}$. See \citep{nachum2019dualdice} for details

\subsection{Fenchel duality-based estimation for TD weights}
Now we introduce the extension to TD weights. Given a step-wise trace coefficient $c(x,a)$ and its equivalent TD weights $w^c(x,a)$, recall that we define $d_{x,a}^{w^c}(x',a')\coloneqq w^c(x',a')\cdot d_{x,a}^\mu(x',a')$. Consider the following objective, whose optimal solution is $w_{x,a}^c(x',a')$. 
\begin{align}
    \arg\min_w J(w) = \frac{1}{2}\mathbb{E}_{(x',a')\sim d_{x,a}^\mu}\left\lbrack w^2(x',a') \right\rbrack - \mathbb{E}_{(x',a')\sim d_{x,a}^{w^c}}\left\lbrack w(x',a') \right\rbrack.
    \label{eq:init-obj}
\end{align}
First, we define $\tilde{\pi}(a|x)\coloneqq c(x,a)\mu(a|x)$. We assume that $c(x,a)$ is such that $\tilde{\pi}(\cdot|x)$ is a sub-probability vector. this is satisfied in the context of general Retrace ($c(x,a)\leq \frac{\pi(a|x)}{\mu(a|x)}$ \citep{munos2016safe}). 

Now, define variables $v(x',a')$ such that $v(x',a') = w(x',a') + \gamma \mathbb{E}_{x''\sim p(\cdot|x',a'),a''\sim \tilde{\pi}(\cdot|x'')}\left\lbrack v(x'',a'') \right\rbrack$. Note that such a quantity $v(x',a')$ exists and is unique. To see why, it is straightforward to verify that $v(x',a')$ is the fixed point of the operator $\mathcal{T}^{\tilde{\pi}}$, defined as  $\mathcal{T}^{\tilde{\pi}}Q(x',a')\coloneqq w(x',a') + \mathbb{E}_{x''\sim p(\cdot|x',a'),a''\sim \tilde{\pi}(\cdot|x'')}\left\lbrack Q(x'',a'') \right\rbrack$. Because $\gamma<1$ and $\tilde{\pi}(\cdot|x'')$ is a sub-probability vector, this operator is contractive and has a unique fixed point. As a result, starting from $w(x',a')$, by applying $(\mathcal{T}^{\tilde{\pi}})^k w(x',a')$ and let $k\rightarrow \infty$ we obtain $v(x',a')$. In vector notations, the second term of Eqn~\eqref{eq:init-obj} writes
\begin{align*}
    (d_{x,a}^{w^c})^T w = (d_{x,a}^{w^c})^T (\mathbf{v} - \gamma P^{\tilde{\pi}} \mathbf{v}) = (d_{x,a}^{w^c} - \gamma (P^{\tilde{\pi}})^T d_{x,a}^{w^c})^T \mathbf{v} = (1-\gamma)\delta_{x,a}^T v = (1-\gamma)v(x,a),
\end{align*}
where the second to last equality stems from the Bellman equation of $d_{x,a}^{w^c}$ in Eqn~\eqref{eq:trace-bellman}. 

The integrand of the first term can be rewritten as $\left((v-\gamma P^{\tilde{\pi}}v)(x',a')\right)^2$, but directly plugging in the transition matrix $P^{\tilde{\pi}}$ results in the double-sampling problem \citep{baird1995residual}. To bypass this, we follow the exact same procedure as \citep{nachum2019dualdice} and propose the following saddle-point optimization problem.
\begin{align}
    \min_v \max_\psi \  \mathbb{E}_{x',a'\sim d_{x,a}^\mu,x''\sim p(\cdot|x',a'),a''\sim \tilde{\pi}(\cdot|x'')} \left\lbrack (v(x',a') - \gamma v(x'',a''))\psi(x',a') - \frac{\psi^2(x',a')}{2} \right\rbrack - (1-\gamma) v(x,a).
    \label{eq:convex-traces}
\end{align}
The saddle point solution $(v^\ast,\psi^\ast)$ will be such that $\psi^\ast(x',a')=w_{x,a}^{c}(x',a')$.
Note that the only difference between Eqn~\eqref{eq:convex-traces} and Eqn~\eqref{eq:convex} is the target policy. Alternatively, one could interpret the new objective in Eqn~\eqref{eq:convex-traces} as executing the original dualdice algorithm but with the behavior policy $\tilde{\pi}$, which is in general a sub-probability policy.

\section{Experiment}
\label{appendix:experiment}

\subsection{Details on tabular estimations of TD weights} 
We adopt tabular representations for $w_\psi$ for both the chain MDP and Open World MDP. For tabular MDPs with $|\mathcal{X}|$ states and $|\mathcal{A}|$ actions, we represent $w_\psi$ as a $|\mathcal{X}||\mathcal{A}|\times |\mathcal{X}||\mathcal{A}|$ matrix. When both the critic $\mathbf{q}\in\mathcal{Q}$ and the estimates $w_\psi$ are tabular represented, there is no need for solving the saddle point optimization. In fact, one can directly derive solutions to the estimates given off-policy samples. We summarize the algorithmic procedure for estimating TD weights in Algorithm 2.

Given a trajectory $(x_t,a_t,r_t)_{t=0}^\infty$, Algorithm 2 specifies how to construct empirical estimates $\hat{w}$ and update the table $w_{x_0,a_0}$, i.e., the TD weights with initial state $(x_0,a_0)$. However, all state-action pairs along the trajectory could be seen as initial states. To get updates for all such pairs, we need to loop through initial pairs along the trajectory.

\paragraph{Remarks.} We can interpret Algorithm 2 as a direct implementation of the Monte-Carlo estimation to the TD weights as defined in Eqn~\eqref{eq:marginalized-trace}. This bears close resemblance to marginalized estimation techniques adopted in \citep{van2020expected}.

\begin{algorithm}[h]
\begin{algorithmic}
\REQUIRE  Table $w$ of size $|\mathcal{X}||\mathcal{A}|\times |\mathcal{X}||\mathcal{A}|$ initialized with zeros \\
\WHILE{not converged}
\STATE 1. Collect a trajectory $(x_t,a_t,r_t)_{t=0}^\infty\sim\mu$
\STATE 2. Construct cumulative step-wise traces along the trajectory: define $\hat{C}(x_t,a_t)\coloneqq (1-\gamma) \gamma^t \left(\Pi_{1\leq s\leq t} c(x_s,a_s)\right)$ for all $t\geq 0$.
\STATE 3. Accumulate cumulative step-wise traces per state-action: \begin{align*}
    \hat{w}(x,a)\coloneqq \frac{\sum_{t\geq 0} \hat{C}(x_t,a_t) \mathbb{I}[x_t=x',a_t=a']}{\sum_{t\geq 0}  \mathbb{I}[x_t=x,a_t=a]} ,\forall (x,a).
\end{align*}
If the denominator is zero, define the ratio to be zero.
\STATE 4. Update the estimate $\hat{w}(x,a)$ and set $w_{x_0,a_0}(x,a)\leftarrow (1-\alpha) w_{x_0,a_0}(x,a) + \alpha \hat{w}(x,a)$ for all $(x,a)$ with $\alpha=0.1$.
\ENDWHILE
\caption{Tabular estimation of  TD weights}
\end{algorithmic}
\end{algorithm}

\subsection{Additional experiment results}

\subsubsection{Chain MDP}

\paragraph{Details on Q-function estimation.} At each iteration $t$, the agent collects $N=1$ trajectory $(x_t,a_t,r_t)_{t=0}^\infty$. The agent maintains a Q-function $Q^{(t)}(x,a)$. Along the trajectory, we use an operator baseline to generate estimates $\hat{Q}(x_t,a_t)$. Then the Q-function is updated as $Q^{(t+1)}(x_t,a_t)\leftarrow (1-\alpha )Q^{(t)}(x_t,a_t) +\alpha \hat{Q}(x_t,a_t)$ with $\alpha=0.1$. The relative errors in Figure~\ref{fig:operators} are computed as $\sum_a  \frac{|Q^{(t)}(x_0,a) - Q^\pi(x_0,a) |}{Q^\pi(x_0,a)}$, i.e., an average measure of prediction error at the initial state $x_0$ (the leftmost state of the chain). Here, $Q^\pi(x,a)$ is computed analytically from the MDP.

 \begin{figure}[h]
    \centering
    \subfigure[\text{Soft} PI $\alpha=0.1$]{\includegraphics[width=.22\textwidth]{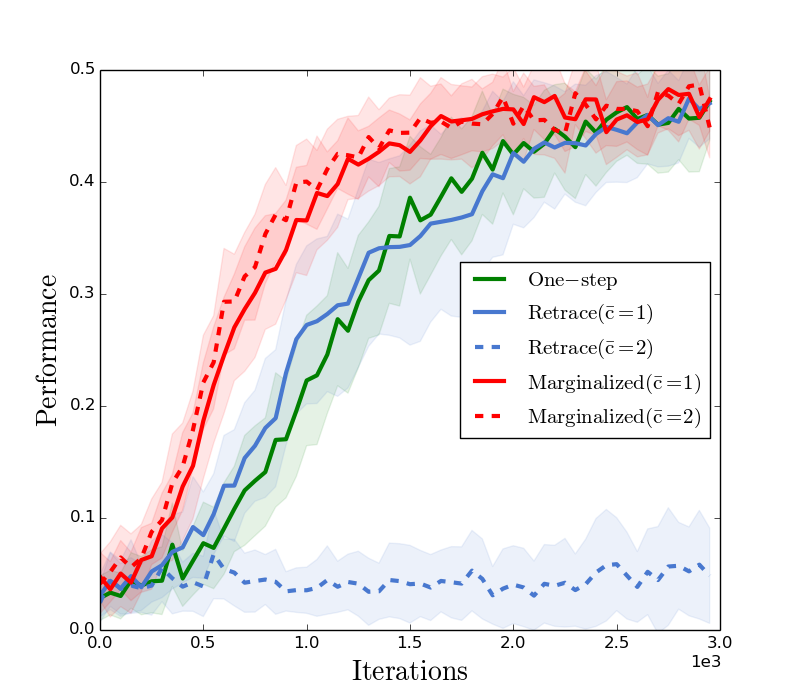}}
    \subfigure[\emph{Hard} PI $\alpha=1$]{\includegraphics[width=.22\textwidth]{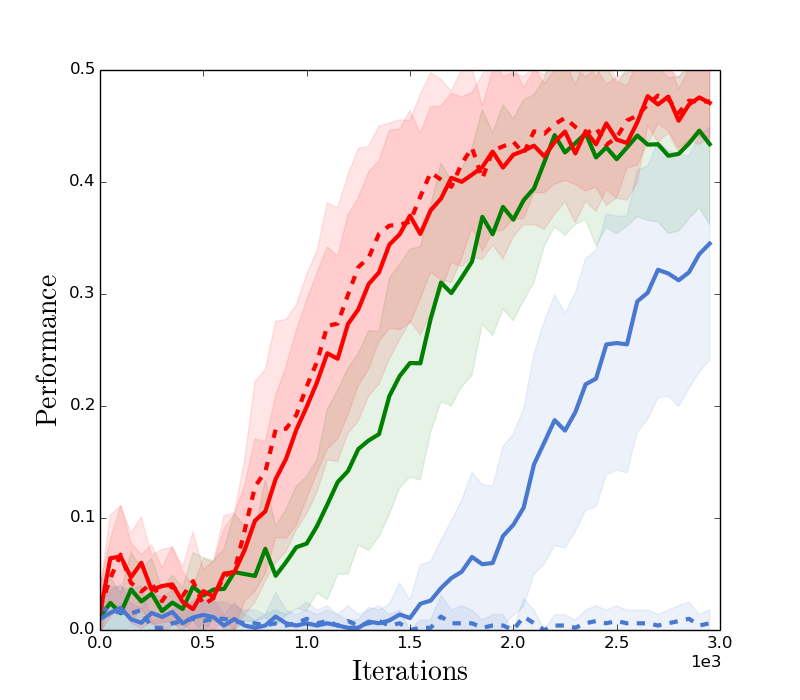}}
    \caption{Comparison of RL algorithms based on baseline operators. Each plot is averaged over $50$ runs. The x-axis shows the number of iterations and y-axis shows the performance of algorithms.}
    \label{fig:openworld-pg}
\end{figure}

\subsubsection{Open world}

\paragraph{Visualization of TD weights.} In Figure~\ref{fig:trace}, we visualize the TD weights learned by tabular representations. Recall that in general, $w_\psi\approx w^c$ is a matrix -- it takes two pairs of state-action, $(x,a)$ and $(x',a')$. Here $(x,a)$ is the initial state-action pair while $(x',a')$ is the typical argument. In the four subplots of Figure~\ref{fig:trace}, we each fix the initial location $(x,a)$ and visualize TD weights as a function of $(x',a')$ as heat maps.

Overall, we see that the learned TD weights reflect the intuition of correct credit assignment. In Figure~\ref{fig:trace}(d), where the initial state is located near the terminal state (bottom-right), it assigns low weights to most state-action pairs except near the bottom-right corner. In this case, the intuition is that Bellman errors at state-action pairs far from the bottom right should contribute much less to the estimation on average, because the random policy $\mu$ has a small chance of visiting them.

 \begin{figure*}[h]
    \centering
    \subfigure[Top left]{\includegraphics[keepaspectratio,width=.24\textwidth]{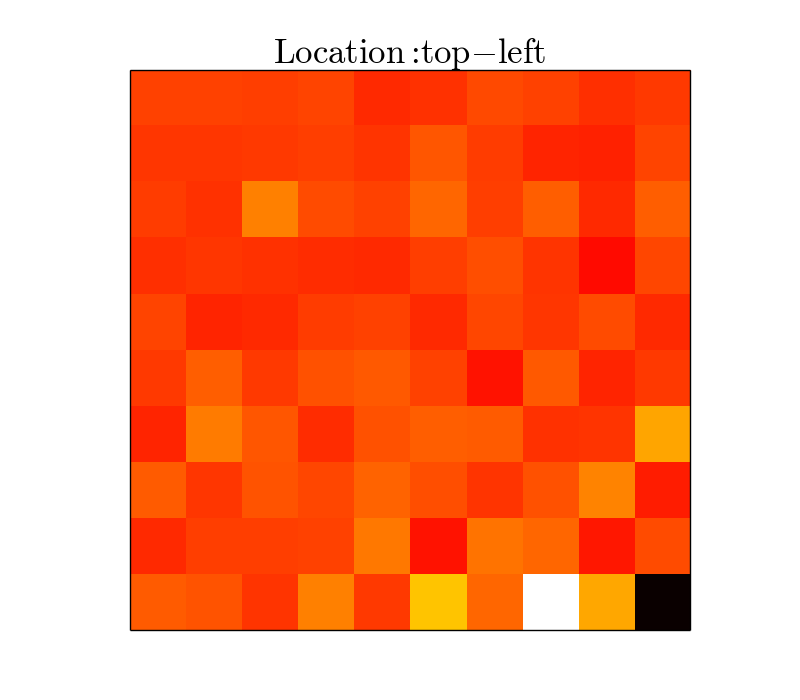}}
    \subfigure[Top right]{\includegraphics[keepaspectratio,width=.24\textwidth]{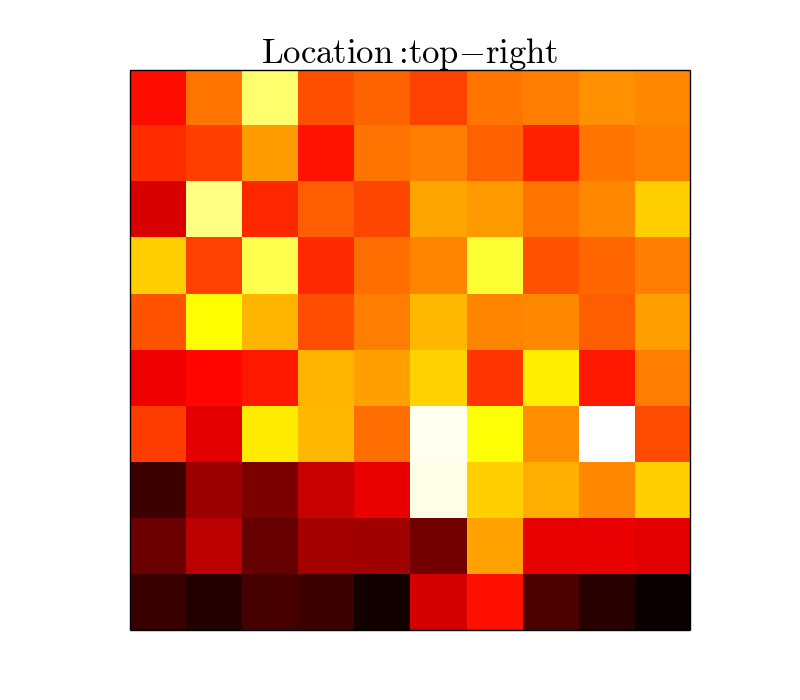}}
    \subfigure[Bottom left]{\includegraphics[keepaspectratio,width=.24\textwidth]{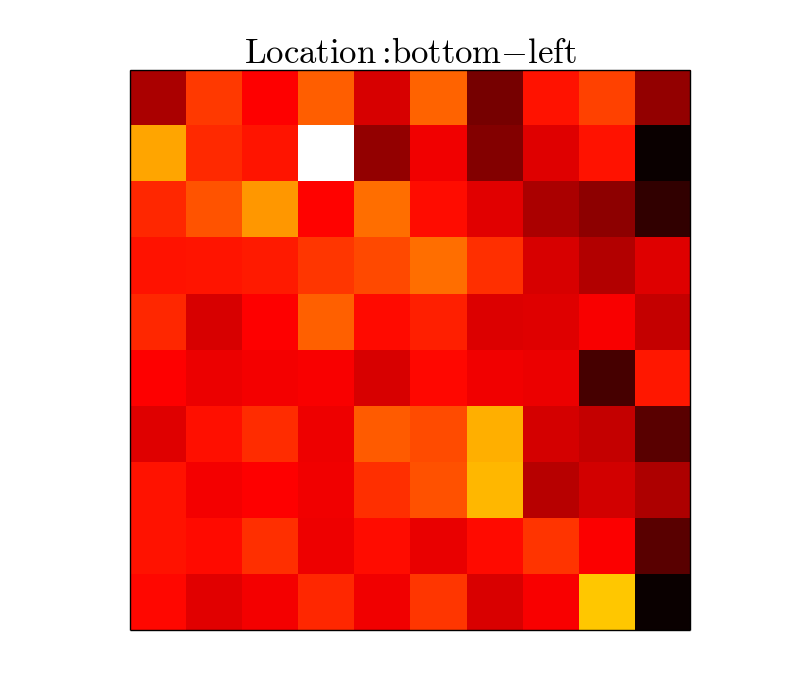}}
    \subfigure[Bottom right]{\includegraphics[keepaspectratio,width=.24\textwidth]{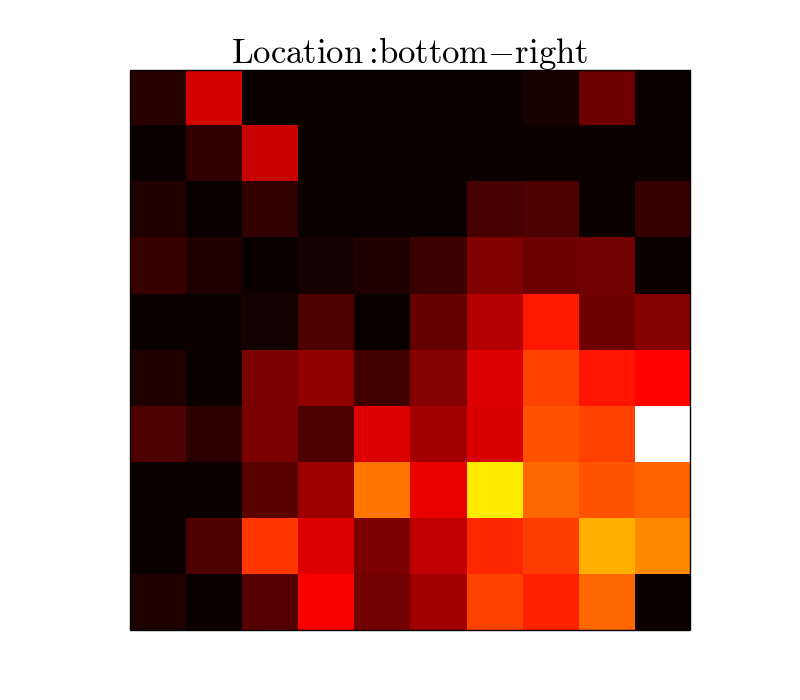}}
    \caption{Visualization of TD weights in the open world problem. All four plots show the trace estimation with different starting state, located at the top left, top right, bottom left and bottom right of the square. The trace for a state is the average of the TD weights for all actions at the state. Recall that given an initial state, the TD weights are a function of future states, which spans the entire state space.}
    \label{fig:trace}
\end{figure*}

\paragraph{Results on policy optimization.} We also consider the setup of a full off-policy optimization algorithm: policy iteration (PI): the behavior policy $\mu$ is always uniformly random, the target policy $\pi^{(0)}$ is initialized as random. At iteration $i$, the new policy is computed as $\pi^{(i+1)}=(1-\alpha)\pi^{(i)}+\alpha \pi_\text{target}$ where $\pi_\text{target}$ is the greedy policy with respect to the Q-function estimate $\hat{Q}$ at iteration $i$. In Figure~\ref{fig:openworld-pg}(a), we carry out \emph{soft} PI by setting $\alpha=0.1$; and in Figure~\ref{fig:openworld-pg}(b), hard PI by setting $\alpha=1$.

To evaluate the performance, we compare the average returns starting from uniformly random sampled states, estimated via MC estimates. For the marginalized operators, the performance gains in the \emph{one-shot} off-policy evaluation seem to carry over to the downstream optimization; however, this is not the case for Retrace. where it obtains a similar performance as the one-step operator for the soft PI; for the hard PI, because  $\pi^{(i)},i\geq 1$ are greedy policies, it is likely to cut traces quickly. In this case, Retrace does not seem to retain advantages over the one-step operator and slow down the optimization.

\subsection{Further details on deep RL experiments}

\paragraph{Benchmarks.}
For the deep RL implementations of the algorithms, we focus on continuous control tasks \citep{brockman2016openai,tassa2018deepmind}, with various simulation engines, such as MuJoCo \citep{todorov2012mujoco} and Bullet physics \citep{coumans2015bullet}. These benchmarks generally consist of locomotion tasks defined with robotics systems, with state space $\mathcal{X}$ the sensory inputs such as velocities and joints, and $\mathcal{A}$ the position or toeque controls. See documentations such as \citep{tassa2018deepmind} for details. In our experiments, we use (D) to stand for DeepMind control suite \citep{tassa2018deepmind} and (B) to stand for bullet physics \citep{coumans2015bullet}.

\paragraph{Algorithms.} We consider twin-delayed deep deterministic policy gradient (TD3) \citep{fujimoto2018addressing} as the baseline algorithm. By default, the algorithm maintains a deterministic policy $\pi_\theta(x)$ and Q-function critic $Q_\theta(x,a)$. The policy is updated by the gradients $\nabla_\theta Q_\phi(x,\pi_\theta(x))$. The critic is updated by regression against Q-function targets, such that $Q_\phi\approx Q^\pi$. Different algorithms vary in ohw the Q-function targets are defined. In general, they are defined by stochastic estimates of the evaluation operator $\mathcal{R}Q(x,a)$. For example, the vanilla TD3 constructs the target as the one-step target
$Q_\text{target}(x,a)=r(x,a) + \gamma Q_\phi(x',\pi_\theta(x'))$. TD3 also introduces a set of techniques, such as double Q-learning \citep{hasselt2010double,van2016deep} and target networks \citep{mnih2015human} to stabilize updates.

Per Algorithm 1, marginalized operators also need a density estimator $w_\psi$ and a discriminator $q_\eta$. They are trained via the objective defined in Eqn~\eqref{eq:td-estimate},
\begin{align*}
    \min_\psi \max_\eta L(q_\eta,w_\psi)=(1-\gamma)q(x,a) \nonumber +
    \mathbb{E}_{(x',a')\sim d_{x,a}^\mu,x''\sim p(\cdot|x',a')}\left\lbrack \delta(x^\prime,a^\prime,x'') \right\rbrack.,
\end{align*}
where data $(x',a')\sim d_{x,a}^\mu,x''\sim p(\cdot|x',a')$ are equivalently sampled as tuples $(x',a',x'')$ from the replay buffer.  We can construct $Q_\text{target}(x,a)=\mathcal{M}^{w_\psi}Q(x,a)$. Parameters $\psi$ and $\eta$ are optimized with alternating gradient descents (ascents). See Appendix~\ref{appendix:multistep} for further algorithmic details.

\paragraph{Baseline multi-step algorithm.} We implement a variant of Retrace \citep{munos2016safe} as the baseline multi-step algorithm. Such algorithms start with a trajectory $(x_t,r_t,a_t)_{t=0}^\infty$ starting from $(x,a)$ such that $x_0=x,a_0=a$, Q-function targets are computed recursively 
\begin{align}
    \hat{Q}_i = r_i + \gamma Q_{\phi^-}(x_{i+1},\pi_{\theta^-}(x_{i+1})) +  \gamma c_i \left( \hat{Q}_{i+1} -  \tilde{Q}_{i+1})   \right).
    \label{eq:recursive-retrace}
\end{align}
Here, parameters $\phi^-,\theta^-$ are delayed copies of the parameters $\phi,\theta$ \citep{mnih2015human}. The coefficients $c(x,a)=\lambda \cdot  \min\{1,\frac{\pi(a|x)}{\mu(a|x)}\}$. By Retrace, the Q-function $\tilde{Q}_{i+1}=Q(x_{i+1},a_{i+1})$, which we find to not work stably in practice. Instead, we use $\tilde{Q}_{i+1} = Q_{\phi^-}(x_{i+1},\pi_{\theta^-}(x_{i+1}))$. Throughout the experiments, we use $\lambda=0.7$ for the multi-step algorithms.

\paragraph{Implementation details and other hyper-parameters.} All implementations are built on SpinningUp \citep{SpinningUp2018}. Please refer to the code base for all missing details on network architecture and hyper-parameters.

\paragraph{Architecture and hyper-parameters.}
All policy networks $\pi_\theta$, Q-function networks $Q_\phi$, discirminator $q_\eta$ and estimator $w_\psi$ share the same torso networks. After the input layer, they have $2$ layers of hidden units each of size $256$. The inputs to the policy network $\pi_\theta$ are only the state variables $x$, while for all other networks are the concatenated state-action variables $[x,a]$. The discriminator output is squashed between $[-1,1]$ via $\text{tanh}(x)$ activation; the estimator $w_\psi$ output is transformed by $f(x)=\log(1+\exp(x))$ to ensure that it is strictly non-negative. Finally, the density estimator $w_\psi$ is transformed across batch $\tilde{w}(x_i,a_i) = \frac{\left(w(x_i,a_i)\right)^T}{\sum_j \left(w(x_j,a_j)\right)^T}$ to ensure stability, where $T=0.1$.

 All networks are trained with sub-sampling of mini-batches from a replay buffer. Each mini-batch is of size $100$. All networks are trained with Adam \citep{kingma2014adam} optimizers with learning rates $10^{-3}$ except for the estimator, where the learning rate is $10^{-4}$.

\end{appendix}

\end{document}